\documentclass[11pt,twoside]{article}

\usepackage[T1]{fontenc}
\usepackage{amsmath,amssymb,amsthm}
\usepackage{graphicx}
\usepackage{geometry}
\usepackage{xcolor}
\usepackage{enumitem}
\usepackage{booktabs}
\IfFileExists{newtxtext.sty}{%
  \usepackage{newtxtext,newtxmath}%
}{%
  \usepackage[sc]{mathpazo}%
  \usepackage{mathrsfs}%
}
\usepackage{microtype}
\usepackage{setspace}
\usepackage{makecell}
\usepackage{subcaption}
\usepackage{multirow}
\usepackage{titlesec}
\usepackage{caption}
\usepackage{fancyhdr}
\usepackage[authoryear]{natbib}
\usepackage{hyperref}

\definecolor{linkblue}{RGB}{31,66,102}
\hypersetup{
  colorlinks=true,
  linkcolor=linkblue,
  citecolor=linkblue,
  urlcolor=linkblue,
  pdftitle={Diffusion Models for High-Dimensional Clustered Data: Intrinsic-Dimension Adaptivity via Bayesian Classification},
  pdfauthor={Yuga Iguchi and Paul Fearnhead}
}

\titleformat{\section}
  {\normalfont\large\scshape\centering}
  {\thesection.}{0.6em}{}
\titleformat{\subsection}
  {\normalfont\normalsize\bfseries}
  {\thesubsection.}{0.55em}{}
\titleformat{\subsubsection}
  {\normalfont\normalsize\itshape}
  {\thesubsubsection.}{0.55em}{}
\titlespacing*{\section}{0pt}{2.0ex plus 0.5ex minus 0.2ex}{1.0ex plus 0.2ex}
\titlespacing*{\subsection}{0pt}{1.7ex plus 0.4ex minus 0.2ex}{0.7ex plus 0.2ex}
\titlespacing*{\subsubsection}{0pt}{1.5ex plus 0.3ex minus 0.2ex}{0.5ex plus 0.2ex}

\fancypagestyle{plain}{%
  \fancyhf{}%
  \fancyfoot[C]{\small\thepage}%
}

\newtheoremstyle{imsplain}
  {8pt plus 2pt minus 1pt}{8pt plus 2pt minus 1pt}
  {\itshape}{0pt}{\scshape}{.}{0.6em}{}
\newtheoremstyle{imsdefinition}
  {8pt plus 2pt minus 1pt}{8pt plus 2pt minus 1pt}
  {\normalfont}{0pt}{\scshape}{.}{0.6em}{}
\newtheoremstyle{imsremark}
  {7pt plus 2pt minus 1pt}{7pt plus 2pt minus 1pt}
  {\normalfont}{0pt}{\itshape}{.}{0.6em}{}

\theoremstyle{imsplain}
\newtheorem{theorem}{Theorem}
\newtheorem{lemma}{Lemma}[section]

\newtheorem{proposition}{Proposition}
\newtheorem{corollary}{Corollary}
\theoremstyle{imsdefinition}
\newtheorem{assump}{Assumption}
\theoremstyle{imsremark}
\newtheorem{remark}{Remark}

\newcommand*{\snr}[1]{\mathrm{SNR}_{#1}}  
\newcommand*{\for}[2]{\overrightarrow{#1}_{#2}} 
\newcommand*{\back}[2]{\overleftarrow{#1}_{#2}} 
  
\newcommand*{\kl}{\mathrm{KL}}   
\newcommand*{\KL}[2]{\mathrm{KL} \bigl( {#1} \, || \, {#2} \bigr) }   
\newcommand*{\KLM}[2]{\mathrm{KL}_m \bigl( {#1} \, || \, {#2} \bigr) }   
\newcommand*{\KLC}[2]{\mathrm{KL}_c \bigl( {#1} \, || \, {#2} \bigr) }

\newcommand*{\tr}{\mathrm{tr}}
\newcommand*{\Cone}{C_1}
\newcommand*{\Ctwo}{C_2}
\newcommand*{\Cfour}{C_3}
\newcommand*{\Cfive}{C_4}
\newcommand*{\Csix}{C_5}

\newcommand{\data}{{P_{\mathrm{data}}}} 
\title{Diffusion Models for High-Dimensional Clustered Data: Intrinsic-Dimension Adaptivity via Bayesian Classification}
\author{Yuga Iguchi \and Paul Fearnhead}
\newcommand{\paperaffiliation}{School of Mathematical Sciences, Lancaster University, Lancaster, UK}
\date{\today}

\makeatletter
\renewcommand{\maketitle}{%
  \thispagestyle{plain}%
  \noindent{\footnotesize\textsc{Research Preprint}\hfill\@date}\par
  \vspace{2.35em}%
  \begin{center}
    {\fontsize{16}{20}\selectfont\bfseries\MakeUppercase{\@title}\par}%
    \vspace{1.45em}%
    {\normalsize\textsc{By Yuga Iguchi and Paul Fearnhead}\par}%
    \vspace{0.55em}%
    {\small\itshape\paperaffiliation\par}%
  \end{center}
  \vspace{1.05em}%
}
\renewenvironment{abstract}{%
  \begin{list}{}{%
    \setlength{\leftmargin}{0.065\textwidth}%
    \setlength{\rightmargin}{0.065\textwidth}%
    \setlength{\topsep}{0pt}%
  }%
  \item\relax\small
  \begin{center}\textsc{Abstract}\end{center}%
  \noindent\ignorespaces
}{%
  \end{list}\vspace{0.55em}%
}
\makeatother

\begin{document}
\maketitle 
\begin{abstract}
The empirical success of diffusion models in generative modelling has motivated theoretical work, including quantitative error bounds and qualitative analyses that characterise the different phases of denoising. We bring these two areas together by studying the adaptivity of diffusion models to the structured geometry of multimodal high-dimensional data that consists of multiple clusters in $\mathbb{R}^D$, each with its own low-dimensional structure, and inter-cluster separation depending on $D$. We employ $K$-mixture Gaussian distributions as a canonical framework to capture this geometry and establish two theoretical results. First, we interpret denoising as a dynamical Bayesian classifier: the mixture score is a posterior-weighted average of cluster-wise scores, and we show that, with high probability, the posterior class probabilities concentrate on a single cluster once the signal-to-noise ratio reaches the scale $\Theta (\log (KD)/D)$. Second, by separately analysing the denoising process in its mixing and cluster-commitment phases, we prove that the KL error bound depends linearly on the maximum intrinsic dimension of a cluster, up to a logarithmic factor, even when $K$ grows polynomially with $D$. This improves on ambient-dimensional bounds and extends existing low-dimensional adaptivity analyses to multimodal distributions with heterogeneous, approximately low-rank covariances. 
\end{abstract}

\textbf{Key words.}  Bayesian classification; Diffusion models; Dimensional scaling; Intrinsic dimension; Phase transitions.  

\section{Introduction}

Denoising Diffusion Probabilistic Models (DDPMs) \citep{Song:2019,ho:20,song:20} are general methods for generative modelling that have produced state-of-the-art results across many applications \citep{dhariwal2021diffusion,popov21a,lugmayr2022repaint,trippe2023diffusion}. They define a process that takes data and progressively adds noise to it to produce a realisation from (a close approximation) to some simple distribution. They then learn the reverse, denoising, process, so that we can generate new data examples by denoising realisations from the simple distribution. 

Relatedly, there has been much interest in understanding why DDPMs are so effective. Much of this work has been theoretical, producing bounds on the accuracy of DDPM. Initial work gave bounds in terms of the Kullback-Leibler (KL) divergence between the true data distribution and the distribution sampled by a DDPM \citep{ben:24,conf:25}, though these have been extended to other metrics such as total variation distance \citep{liang25a,brevsar2025nonasymptotic}, 1-Wasserstein distance \citep{de:22} or 2-Wasserstein distance \citep{GNHZ:25, Sil:25, AED:25, koike2026wasserstein}. 

Within this body of work there has been particular interest in understanding why DDPMs scale well to high-dimensional distributions. Under KL divergence, standard bounds on the accuracy of DDPMs \citep{ben:24, conf:25} have terms that, up to a logarithmic factor, scale linearly with the ambient dimension, $D$,  and suggest that the computational cost, such as the number of time-steps used when denoising, needs to scale with $D$. While earlier works have established such a bound under minimal conditions on the target, e.g., bounded second moment for data distribution, more recent work has made assumptions that the data lie on a low-dimensional manifold with a bounded support and then shows that in such cases the error can scale with the dimension of the manifold rather than the ambient dimension.
For example, \cite{pota:25} assume that data lies on a bounded manifold of intrinsic dimension $k$ and show that the KL-bound is then linear  in $k$.  \cite{Li:24} define the dimension of the manifold, $k$, in terms of metric entropy (which implicitly also implies the data distribution has bounded support) and show that to bound the total variation error the number of time-steps when denoising needs to scale quadratically in $k$. This was subsequently improved to linearly in $k$ by \cite{Li:25} and \cite{Huang:26}.

Separately, work has focussed on whether DDPMs generalise or just memorise the data \citep{Li2023,Dupuis2025,Bonnaire2025, maillard2026}. That is, does simulating from a DDPM produce new examples from the underlying data distribution, or just reproduce members of the data set it was trained on. Many applications of DDPMs involve data distributions with separate clusters, for example image data with images of different types. \cite{BBDM:24} studied the noising/denoising process for such data and was able to give rigorous qualitative insight into its behaviour albeit assuming exact simulation of the denoising process using the true empirical score function. They show that the denoising process can be split into three phases, one where there is general exploration, a second where the process fixes the cluster it will sample from, and then a final stage where the process will be attracted to a specific data point from the training sample. The last stage corresponds to memorisation, and can be avoided by early stopping of the denoiser. \cite{Ach:26} further extended this study of DDPMs for mixture distributions. 
\subsection{Our Contributions}
Our focus is on analysing the geometric adaptivity of DDPMs for target data from a $K$-mixture of distributions on $\mathbb{R}^D$, exhibiting both low dimensionality and distinct clustering. This is motivated by the notion of the \emph{union of manifolds hypothesis} \citep{brown:22}: that high-dimensional data such as images lie on a disjoint union of manifolds of varying intrinsic dimensions. Our contributions are summarised as follows:

\begin{itemize}[leftmargin=0.2cm] 
\item We rigorously extend the general framework of \cite{BBDM:24} and \cite{Ach:26} to study a situation where each cluster distribution samples from, or is close to, a low-dimensional manifold. Under the structured geometry specified later, we show that the denoising process fixes on a single cluster to sample from with high probability after a suitably defined signal-to-noise ratio (SNR) reaches a threshold $\Theta (\log (KD)/D)$. 
\item
Exploiting the above characterisation of cluster commitment in terms of SNR, we give rigorous bounds on the error of DDPM accounting for the various approximations such as time-discretisation of the denoising process, which, under suitable regularity conditions, is shown to scale linearly with the maximum intrinsic dimension of any of the clusters, rather than the ambient dimension $D$. 
\end{itemize}
As such our work brings together the mixture framework of \citep{BBDM:24} and \cite{Ach:26} with the literature on KL-bounds for DDPMs, and the ability of DDPMs to adapt to low-dimensional structure within each distinct cluster. To demonstrate the intrinsic adaptivity in the KL error despite the distinct multimodality, 
we first characterise the 
phase transition where the DDPM switches from exploring between clusters to fixing on a single cluster.  
By leveraging this, and separately analysing the behaviour of the DDPM before and after the phase transition, we are able to show that the KL error bound depends linearly on the intrinsic dimensions up to a logarithmic factor of $D$, even when the number of clusters grows polynomially in the ambient dimension $D$. A detailed description of the link between the phase transition and the intrinsic dimension adaptivity in KL error is provided in Section \ref{sec:overview}. 

In order to get our results, we make a number of assumptions on the data distribution. The strongest of which is that each component of our mixture is Gaussian. i.e., Gaussian mixture models (GMMs). This is in line with recent papers that study simplified models in order to gain insight into DDPMs \citep{li:26, oymak2021theoretical,shah2023learning, wu2024theoretical, Wang:25}. 
In particular, \cite{li:26} 
establish dimension-free convergence of DDPM in total variation under the common isotropic covariance structure. In contrast, we consider GMMs with heterogeneous and anisotropic covariances and assume that each cluster is well-separated, in that the squared distances between cluster means scale linearly with $D$, while each cluster has its own low-dimensional structure. This inter-cluster separation induces the phase transition described above, leading to our intrinsic-dimension adaptivity result under the heterogeneous and anisotropic covariance geometry. In our work, the intrinsic dimensionality is imposed by an assumption that the spectrum of the cluster covariance matrix decays quickly: for cluster $i$, there is some $M_i$ such that the sum of all but the $M_i$-largest eigenvalues is bounded. This is a weaker condition than assuming the data in cluster $i$ lies on an $M_i$-dimensional manifold. Importantly, we do not require that the support of the data distribution is bounded. 
We stress that our main result focuses on the dimensional scaling of the noise or time-discretisation schedule for DDPMs or associated reverse SDEs, given a sufficiently accurate estimate of the score, which analytically explains why a reasonable number of steps is enough in practice to sample from high-dimensional clustered data. In contrast, \cite{Wang:25} focuses on score estimation, and clarifies the relation between the training samples and the intrinsic dimensionality to minimise a training loss when the target is a mixture of exactly low-rank Gaussians.  

This paper is organised as follows. Section \ref{sec:setup} summarises the basic set-up of diffusion models and introduces the structured geometry under GMMs. Section \ref{sec:phase_transition} presents our two main analytical results on the phase transition and intrinsic-dimension adaptivity in KL error. We provide empirical experiments using real data in Section \ref{sec:numerical_experiments} to demonstrate that our analytical findings on the phase transition can be observed beyond GMMs. We prove the second main result in Section \ref{sec:pf_main} and then conclude in Section \ref{sec:conclusion}. 
\\ 

\noindent \textit{Notation.} For a vector $x \in \mathbb{R}^D$ and matrix $A \in \mathbb{R}^{D \times D}$, we write $\| x \|_A = \sqrt{x^\top A x}$ and $\| x \| = \sqrt{\sum_{i = 1}^D x_i^2}$. 
Denote by $\lambda_i (A)$ the $i$-th largest eigenvalue of the matrix A and write $\| A \|_2 = \sqrt{\lambda_{1} (A^\top A)}$ and $\| A \|_F = \sqrt{\tr (A^\top A)}$. 
For two vectors $x, y \in \mathbb{R}^D$, $\langle x, y \rangle$ denotes the inner product. For two matrices $A, B \in \mathbb{R}^{D \times D}$, $A \preceq B$ mean that $B - A$ is positive semidefinite. 
For positive integer $M$, we write $[M] \equiv \{1, \ldots, M\}$. 
For $ x > 0$, we write $f(x) = \Theta (x)$ if there exist constants $0 < c_1 \le c_2 < \infty$ such that $c_1 x \le f(x) \le c_2 x$. 
For two sequences $\{a_n\}_n$ and $\{b_n\}_n$, we write $a_n = \mathcal{O} (b_n)$ if there exists a constant $C > 0$ such that $a_n \le C b_n$ for all sufficiently large $n$. We also write $a_n = o (b_n)$ if $\lim_{n \to \infty} a_n / b_n = 0$. 
For any two probability measures $\mathbb{P}$ and $\mathbb{Q}$ with $\mathbb{P}$ being absolutely continuous with respect to $\mathbb{Q}$, we write $\KL{\mathbb{P}}{\mathbb{Q}} = \int \log \tfrac{d \mathbb{P}}{d \mathbb{Q}} (\omega)  d \mathbb{P} (\omega) $ for the KL divergence between them. We write $\gamma^D : \mathcal{B} (\mathbb{R}^D) \to [0, 1]$ as the standard Gaussian measure defined as $\gamma^D (A) = (2\pi)^{-D/2}  \int_A \exp (- \tfrac{\|z \|^2}{2}) dz, \, A \in \mathcal{B} (\mathbb{R}^D)$. Also, for $m \in \mathbb{R}^D$ and $V \in \mathbb{R}^{D \times D}$, $\mathcal{N} (m, V)$ denotes the normal distribution with mean $m$ and covariance $V$.  For a random variable $X$, $\mathrm{Law} (X)$ denotes the probability law of $X$. 

\section{Setup} \label{sec:setup}
\subsection{Score-Based Generative Models}
We first review score-based generative models \citep{song:20}, which provide a continuous-time framework for sampling from a target distribution. Let $(\Omega, \mathcal{F}, \mathbb{P})$ be the probability space where the $D$-dimensional standard Brownian motion $(B_t)_{t \ge 0}$ is equipped, and $\data$ denote the target distribution on $\mathbb{R}^D$. Denoising diffusion models are designed to generate samples from $P_{\mathrm{data}}$ by reversing a forward noising process, where the latter process can be formulated as the following Ornstein-Uhlenbeck (OU)-type SDE: 
\begin{align*}
d \for{X}{t} & = - \for{X}{t} dt + \sqrt{2} d B_t, \quad \for{X}{0} \sim \data, \quad t \in [0, T],   
\end{align*} 
where $T > 0$ is a fixed time horizon. We denote by $\for{p}{t} : \mathcal{B} (\mathbb{R}^D) \to [0, 1]$ the marginal distribution of $\for{X}{t}$, given as
\begin{align*}
\for{p}{t} (A) =   
\int \for{p}{t|0} (A | x) P_{\mathrm{data}} (dx), \quad A \in \mathcal{B} (\mathbb{R}^D),   
\end{align*}
where $\for{p}{t|0} (A | x) \equiv \mathbb{P} \bigl(  \for{X}{t} \in A \, | \, \for{X}{0} = x \bigr)$. The forward noise $\for{X}{t}$ conditional on the initial data point $\for{X}{0}$ follows a Gaussian distribution, and specifically, the solution is expressed as 
$
\for{X}{t} = m_t \for{X}{0} + \sigma_t Z, ~Z \sim \mathcal{N}(0, I_D), 
$
where $m_t$ and $\sigma_t$ represent the strength of signal and noise, defined as 
\begin{align}  \label{eq:signal_noise}
m_t = \exp (-t), \quad \sigma_t = \sqrt{1 - m_t^2}.  
\end{align}  
With a slight abuse of notation, we write $ \for{p}{t|0}$ as the associated Lebesgue density. Similarly, we write $\for{p}{t}$ for the Lebesgue density of the marginal law of $\for{X}{t}$ if it exists. Then, the denoising SDE is defined as: 
\begin{align} 
d \back{X}{t} & = \bigl(
\back{X}{t} + 2 \nabla \log \for{p}{T-t} (\back{X}{t}) \bigr) dt + \sqrt{2}  d {B}_t, \quad \back{X}{0} \sim \mathrm{Law} (\for{X}{T}). \label{eq:reverse_sde} 
\end{align}
The classical results of \citep{Anderson:82, Haussmann:86} show that $(\back{X}{t})_{t \in [0, T]}$ admits a weak solution so that $\mathrm{Law} (\back{X}{t}) = \mathrm{Law} (\for{X}{T-t})$ for any $t \in [0, T]$. In particular $\mathrm{Law}(\back{X}{T}) = \mathrm{Law}(\for{X}{0})$, so exact simulation of the reverse SDE (\ref{eq:reverse_sde}) allows for sampling from the target distribution $\data$. 

DDPMs \citep{ho:20} can be viewed as a tractable alternative to (\ref{eq:reverse_sde}), which involves three approximations: (i) The use of an estimated score --  the intractable score function is replaced with a function $\hat{s} : [0, T] \times \mathbb{R}^D \to \mathbb{R}^D$ parametrised with a neural network, obtained by optimising an appropriate loss function \cite[]{yang2023diffusion}; (ii) The use of the Gaussian prior $\mathcal{N} (0, I_D)$ for initialisation rather than $\mathrm{Law} (\for{X}{T})$; and (iii) Time-discretisation of the continuous-time process (\ref{eq:reverse_sde}). 


Even with these approximations, it can be difficult to approximately simulate from (\ref{eq:reverse_sde}) for $t\approx T$ as the drift, specifically the term $2 \nabla \log \for{p}{T-t} (\back{X}{t})$, blows-up as $t\rightarrow T$. To overcome this, simulation is often performed up to some time $T-\varepsilon$, for a suitable choice of early-stopping parameter $\varepsilon>0$. It is simple to bound the error between $\mathrm{Law}(\back{X}{T-\varepsilon})$ and $\mathrm{Law}(\back{X}{T})$ in terms of $\epsilon$ \cite[]{ben:24}. 

%
\subsection{Bayesian Classification Perspective and Our Analytic Framework} \label{sec:geometry} 
Given the above setup, our primary goal is to rigorously investigate the generative mechanism and geometric adaptivity of the reverse process, characterised by (\ref{eq:reverse_sde}), when $P_{\mathrm{data}}$ is a union of $K$ clusters, written in the form 
\begin{align*}
P_{\mathrm{data}} (A) = \sum_{k = 1}^K \pi_k \cdot P_{\mathrm{data}}^{[k]} (A), \quad K \ge 2, \, A \in \mathcal{B} (\mathbb{R}^D), 
\end{align*} 
where $\{\pi_k\}$ satisfying $\sum_{k = 1}^K \pi_k = 1$ are the prior weights and $P_{\mathrm{data}}^{[k]}$ denotes the distribution associated with the $k$th class. In the mixture setting, the key observation is that the denoising procedure can be viewed as a dynamical Bayesian classification, as advocated also in recent work \citep{Ach:26}. More specifically, the score function $\nabla \log \for{p}{T-t} (\cdot)$ contained in the drift of (\ref{eq:reverse_sde}) is expressed as a weighted average of the cluster-wise scores, that is,  
\begin{align*}
\nabla \log \for{p}{T-t} (x) = \sum_{k = 1}^K W_D^{[k]} (t, x) \nabla \log \for{p}{T-t}^{[k]} (x), \qquad x \in \mathbb{R}^D,  
\end{align*}
where $\for{p}{T-t}^{[k]} (x)$ is the Lebesgue density of $\for{X}{T-t}$ given initial data point $\for{X}{0}$ belonging to the $k$-th cluster, i.e., 
\begin{align*}  
\for{p}{T-t}^{[k]} (x) = \int \for{p}{T-t|0} (x | z) \cdot P^{[k]}_{\mathrm{data}} (dz)
\end{align*}
and
$W_D^{[k]} (t, x)$ is the \emph{posterior mixture weight} or \emph{Bayes classifier}, i.e., the conditional probability of the label taking $k$ given the state $\for{X}{t} = x$, specifically, 
\begin{align} \label{eq:posterior_weight}
W_D^{[k]} (t, x) = \frac{\pi_k \cdot \for{p}{T-t}^{[k]} (x)}{\sum_{\ell=1}^K \pi_\ell \cdot \for{p}{T-t}^{[\ell]} (x)}.   
\end{align} 
Thus, the reverse process (\ref{eq:reverse_sde}) evolves according to the posterior weights given the current denoising state $\back{X}{t}$, indicating that it implicitly performs a dynamic Bayesian classification within the clustered data setting. 

The insight presented above suggests that understanding the generative mechanism of diffusion models for clustered data relies on studying the stochastic behaviour of the posterior weights $W_D^{[k]}(t, \back{X}{t}) $ for all $t \in [0, T)$. In this paper, we focus on \emph{Gaussian mixture models} (GMMs) as the target data, as they provide a solid analytical framework for rigorously examining the dynamic Bayesian classification. 
Under this setting, the marginal distribution of the forward noise also follows a Gaussian mixture. Also, cluster-wise low-dimensional structure can be employed within each covariance. Thus, we define $P_{\mathrm{data}}^{[k]} (A) = \mathcal{N} (A; \mu_k, \Sigma_k), \, A \in \mathcal{B}(\mathbb{R}^D)$ and specify the target distribution as:
\begin{align} \label{eq:gmm}
P_{\mathrm{data}} (A) 
= \sum_{k = 1}^K  \pi_k \cdot \mathcal{N} (A ; \mu_k, \Sigma_k), 
\qquad K \ge 2, \ A \in \mathcal{B} (\mathbb{R}^D),    
\end{align}
where $\mu_k \in \mathbb{R}^D$, $\Sigma_k \in \mathbb{R}^{D \times D}$ with $\Sigma_k \succeq 0$ and $\mathcal{N} (A; \mu_k, \Sigma_k)$ is a Gaussian measure on $\mathbb{R}^D$ defined as 
\begin{align*}
\mathcal{N} (A ; \mu_k, \Sigma_k) \equiv \gamma^D 
\bigl( \{ z \in \mathbb{R}^D \, | \, \mu_k + \Sigma_k^{1/2} z \in A \} \bigr). 
\end{align*}
The measure admits a well-defined Lebesgue density if  $\Sigma_k$ is positive definite. Also, note that the distribution of the cluster-wise forward noising $\for{p}{T-t}^{[k]} (\cdot), \, t \in [0, T), \, k \in [K]$ follows a Gaussian with mean $m_{T-t} \mu_k$ and covariance $S_{T-t}^{k}$ given as 
\begin{align} \label{eq:forward_noise_cov}
S_{T-t}^{k} = m_{T-t}^2 \Sigma_k + \sigma_{T-t}^2 I_D    
\end{align}
which is positive definite due to the added noise for $t \in [0, T)$.

We now introduce the following structure for the target measure (\ref{eq:gmm}): 
\begin{assump}[Prior Weights]  \label{assump:prior}
There exists a constant $\Cone \ge 0$ such that 
$$ 
\max_{i, j \in [K]} \log \tfrac{\pi_i}{\pi_j} \le \Cone. 
$$ 
\end{assump}
\begin{assump}[Covariance]  \label{assump:cov}

\begin{itemize}
\item[(i)] $\Sigma_i$ is symmetric positive {semi-definite} for any cluster $i \in [K]$. Also, there exist a constant $\Ctwo > 0$ and $\alpha \in [0,1)$ such that for all $D \in \mathbb{N}$, 
\begin{align*}
\max_{i \in [K] } \| \Sigma_i \|_2 \le \Ctwo \cdot D^\alpha. 
\end{align*}
\item[(ii)] (Low-dimensional structure) For each cluster $i \in [K]$, there exist a positive fixed integer $M_i < D$ and ${\nu_{i} \ge 0}$ such that for all $D \in \mathbb{N}$, we have 
\begin{align} \label{eq:assum2ii}
\sum_{k = M_i + 1}^D \lambda_k (\Sigma_i) \le \nu_i,  
\qquad \frac{\nu_i}{M_i} \le R
\end{align} 
for some universal constant $R \in [0,1)$.  
\end{itemize}
\end{assump}

\begin{assump}[Mean] \label{assump:mean}
For any two clusters $i, j \in [K]$, there exist constants $\Cfour, \Cfive > 0$ such that for all $D \in \mathbb{N}$,  
\begin{align} \label{eq:mean_gap}
\Cfour \cdot D \le \| \mu_i - \mu_j \|^2  \le \Cfive \cdot D.
\end{align}  
%
%
\end{assump}

The first assumption is weak as it just bounds the ratio of the mixture weights.  The second assumption imposes the structure of each component. Assumption \ref{assump:cov}(i) allows at most sub-linear dependency on $D$ for the maximum eigenvalue of each cluster's covariance, which is weaker than assuming a uniform bound w.r.t. $D$. Sublinear growth is a technical condition that facilitates the phase transition analysis later in Section \ref{sec:phase_transition}. Assumption \ref{assump:cov}(ii) together with (i) covers both exact or approximately low-rank structure; if data from component $i$ is confined to a manifold of dimension $M_i$, then (\ref{eq:assum2ii}) would hold with $\nu_i=0$. Notably, Assumption \ref{assump:cov}(ii) is a weaker condition that only requires data from each component to lie close to a low-dimensional manifold, with $R$ being a global measure of how close data from each mixture component is to its manifold. 

Finally, Assumption \ref{assump:mean} defines the separation of clusters as a function of the ambient dimension $D$. The square distance between the cluster means is assumed to increase linearly with $D$. 
It is also natural for image data where increasing $D$ corresponds to observing the same image at increasing resolution. We will also empirically assess these data geometries using high-dimensional datasets (images and RNA sequencing data) in Section \ref{sec:numerical_experiments}.

\section{Posterior Classification Analysis and Intrinsic Dimension Adaptivity} \label{sec:phase_transition} 

\subsection{Overview} \label{sec:overview}

This section provides an overview of the generative mechanism of the reverse process within the GMM framework (\ref{eq:gmm}) detailed in Section \ref{sec:geometry}. We will also explain how the dynamic Bayesian classification perspective can be linked to the KL convergence analysis, thereby demonstrating its ability to adapt to the geometry of the low-dimensional structure within each cluster. 
 
Sections \ref{sec:posterior_1}--\ref{sec:posterior_3} analyse the stochastic behaviour of the posterior weights $W_D^{[k]} (t, \back{X}{t})$ given in \eqref{eq:posterior_weight}, revealing that the denoising process decomposes into the following two phases: In the Phase I -- the mixing phase --  several posterior weights remain non-negligible, and the trajectory is affected by global inter-cluster geometry. Then, in Phase II -- the commitment phase --, one posterior weight becomes dominant, that is the posterior weight concentrates on a single class, i.e., $W_D^{[k]} \approx 1$ for some $j \in [K]$ and $W_D^{[i]} \approx 0,\, i \in [K] \setminus \{j\}$, and the score is effectively reduced to the corresponding cluster-wise score
\begin{align*}
\nabla \log \for{p}{T-t} (\back{X}{t}) 
\approx \nabla \log \for{p}{T-t}^{[j]} (\back{X}{t}). 
\end{align*} 
The analytic results presented in Sections \ref{sec:posterior_2}--\ref{sec:posterior_3} demonstrate a key effect of the ambient dimension $D$ on the posterior weight concentration. Roughly, the transition from Phase I to II or mode commitment can occur with probability at least $1 - D^{-q}, \ q \ge 1$ after the signal-to-noise ratio (SNR) 
\begin{align} \label{eq:snr}
\snr{T-t} \equiv \tfrac{m_{T-t}^2}{\sigma_{T-t}^2}, \qquad t \in [0, T), 
\end{align}
reaches scale $\log (KD) / D$, see Theorem \ref{thm:NA_transition_est} in Section \ref{sec:posterior_3}. These observations imply that inter-class separation that scales with the ambient dimension $D$ facilitates earlier resolution of the cluster component identification, and the rest of the denoising process then focuses on building fine details from the single cluster. Table \ref{table:regimes_summary} summarises this two-phase picture.
\begin{table}[h]
\footnotesize  
\centering
\caption{Summary of the dynamical regimes of the denoising diffusion for high-dimensional $K$-clustered data.} 
\label{table:regimes_summary}
\renewcommand{\arraystretch}{1.2} 
\setlength{\tabcolsep}{4pt}
\begin{tabular}{@{} l @{\hspace{1.2em}} p{0.32\linewidth} @{\hspace{0.2em}} p{0.32\linewidth} @{}}
\toprule
& \textbf{Phase I} & \textbf{Phase II} \\
\midrule

\textbf{Phase Characterisation} & 
Mixing phase & 
Commitment to a single cluster $j$  \\
\addlinespace

\textbf{Signal-to-noise ratio} 
($\snr{T-t}$) & 
$\mathcal{O} \bigl( {\log (KD)}/{D} \bigr)$ 
& 
$\mathcal{O}(1)$ \\
\addlinespace 

\textbf{Posterior Weights} & 
$W_D^{[k]} \ge 0, \, k \in [K] $ \newline {{(Onset of posterior collapse)}} & 
$W_D^{[j]} \approx 1, \, W_D^{[k]} \approx 0,\, k \neq j$ \newline {{(Posterior concentration)}} \\
\addlinespace

\textbf{Effective Geometry} & 
\textit{Inter-cluster inference} 
\newline + \textit{local covariances} & 
\textit{Single local covariance} {{(cluster $j$)}} \\
\bottomrule
\end{tabular} 
\end{table}     

Given the above characterisation of the denoising process, we bound the KL error between the target distribution and its approximation realised by DDPM samplers in Section \ref{sec:KL}. The main analytic result therein is that, up to a logarithmic factor, the time-discretisation error component of the KL divergence scales linearly with the intrinsic dimension, i.e., the low-rank property within each cluster, rather than with the ambient dimension $D$. To show this we use different approaches to bound the KL error in Phase I and Phase II: For Phase I we use the fact that the KL error involves terms that depend on the SNR, and in this phase the SNR is $\mathcal{O} (\log (KD) / D)$, which offsets the inter-cluster separation depending on $D$ in the first and second moments. For Phase II, due to cluster-commitment, the denoiser behaves as one targeting a single cluster, so we can bound the error in terms of the cluster's intrinsic dimension. A more detailed overview of our argument is given in Section \ref{sec:poc}.


\begin{remark}
Similar attempts to split the denoising process into distinct phases are considered in \citep{BBDM:24, Ach:26}. One of the focuses in these works is on characterising when the marginal density of the process transitions from a single-well structure (when the process is contractive) to a multi-well one. They study this in an asymptotic manner, i.e., consider a Taylor series expansion of the score function in very small signal $m_{T-t} \ll 1$ and analyse the curvature of the approximate Hessian. In contrast, our analysis proceeds with a probabilistic characterisation of posterior weights concentration without relying on the Taylor expansion in $m_{T-t}$.  
\end{remark}   

\subsection{High Probability Bounds for Posterior Weights}
 \label{sec:posterior_1}
 We begin by stating the following dynamic characterisation of the posterior weights, which serves as a basis to develop the phase transition arguments hereafter:  
\begin{lemma} \label{lemma:weight_bd_j}
Let $T \ge 1$ and $\delta \in (0,1)$. Then, for each $i, j \in [K]$, it holds that: 
\begin{align*}
\mathbb{P} \Bigl( \log W_D^{[i]} (t, \back{X}{t}) 
\le \min \bigl\{ - C_D^{[i, j]} (t, \delta) , 0 \bigr\}
\, | \, \back{X}{T} \sim P_{\mathrm{data}}^{[j]} \Bigr) \ge 1 - \delta 
\end{align*}
for each $t \in [0, T)$, where we have set:
\begin{align}
\begin{aligned}
C_D^{[i, j]} (t, \delta) 
& = \log \tfrac{\pi_j}{\pi_i} 
+ \KL{\, \for{p}{T-t}^{[j]}}{\, \for{p}{T-t}^{[i]}} \\[0.2cm]
& \qquad 
-  \sqrt{2 \mathscr{T}_D^{[i, j]} (t) \log \tfrac{2}{\delta}}
- \sqrt{2 \mathscr{U}_D^{[i, j]} (t) \log \tfrac{2}{\delta}}  
-  2 \mathscr{V}_D^{[i, j]} (t) \log \tfrac{2}{\delta},  
\end{aligned} \label{eq:C}
\end{align}
with 
\begin{align*}
\mathscr{T}_D^{[i, j]} (t) 
& = m_{T-t}^2 \bigl\| \mu_{i} - \mu_{j} \bigr\|_{(S_{T-t}^{i})^{-1} (S_{T-t}^{j}) (S_{T-t}^{i})^{-1}}^2; \\[0.1cm] 
\mathscr{U}_D^{[i, j]}(t) 
& = \tfrac{1}{2} \mathrm{tr} \Bigl( \bigl( I_D - S_{T-t}^{j} (S_{T-t}^{i})^{-1} \bigr)^2 \Bigr); \\[0.1cm]
\mathscr{V}_D^{[i, j]}(t) & = \| I_D - S_{T-t}^j (S_{T-t}^i)^{-1}  \|_{2}.  
\end{align*} 
\end{lemma}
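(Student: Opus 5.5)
Since $\mathrm{Law}(\back{X}{t}) = \mathrm{Law}(\for{X}{T-t})$ and the reverse process conditioned on its terminal point lying in cluster $j$ is the time reversal of the forward process started from $P_{\mathrm{data}}^{[j]}$, the conditional law of $\back{X}{t}$ given $\back{X}{T}\sim P^{[j]}_{\mathrm{data}}$ is $\for{p}{T-t}^{[j]} = \mathcal{N}(m_{T-t}\mu_j, S^j_{T-t})$. So I will fix $t$, write $X \sim \mathcal{N}(m\mu_j, S^j)$ (dropping the subscript $T-t$), and bound a deterministic function of a Gaussian vector. The bound $\log W^{[i]}_D\le 0$ is trivial because $W^{[i]}_D\le1$. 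For the other bound I drop all terms except $\ell=j$ in the denominator of \eqref{eq:posterior_weight}:
\begin{align*}
\log W_D^{[i]}(t,X) \le \log\tfrac{\pi_i}{\pi_j} + \log \for{p}{T-t}^{[i]}(X) - \log \for{p}{T-t}^{[j]}(X) = -\log\tfrac{\pi_j}{\pi_i} - L(X),
\end{align*}
where $L = \log(p^{[j]}/p^{[i]})$ is the log-likelihood ratio. Since $\mathbb{E}[L(X)] = \KL{\for{p}{T-t}^{[j]}}{\for{p}{T-t}^{[i]}}$, it suffices to prove the lower-tail bound $L(X) \ge \mathbb{E}L - \sqrt{2\mathscr{T}\log(2/\delta)} - \sqrt{2\mathscr{U}\log(2/\delta)} - 2\mathscr{V}\log(2/\delta)$ with probability at least $1-\delta$.

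To get this, I write $X = m\mu_j + (S^j)^{1/2}Z$ with $Z\sim\mathcal{N}(0,I_D)$ and expand $L$ explicitly. With $A = (S^j)^{1/2}(S^i)^{-1}(S^j)^{1/2}$ and $b = m(S^j)^{1/2}(S^i)^{-1}(\mu_j-\mu_i)$,
\begin{align*}
L - \mathbb{E}L = \tfrac12\bigl(Z^\top A Z - \tr A\bigr) - \tfrac12\bigl(\|Z\|^2 - D\bigr) + \langle b, Z\rangle = \tfrac12\bigl(Z^\top (A-I)Z - \tr(A-I)\bigr) + \langle b,Z\rangle,
\end{align*}
where the log-determinant terms cancel after centring. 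The linear part is $\mathcal{N}(0,\|b\|^2)$ with $\|b\|^2 = \mathscr{T}_D^{[i,j]}(t)$, so with probability at least $1-\delta/2$ it is bounded below by $-\sqrt{2\mathscr{T}\log(2/\delta)}$.

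For the quadratic part, $A - I$ is similar to $S^j(S^i)^{-1} - I$, so $\tfrac12\|A-I\|_F^2 = \mathscr{U}$ and $\|A-I\|_2 = \mathscr{V}$. Indeed, $A-I = (S^j)^{1/2}\bigl((S^i)^{-1}-(S^j)^{-1}\bigr)(S^j)^{1/2}$ is symmetric with the same eigenvalues as $S^j(S^i)^{-1}-I$, and these eigenvalues are real. I then apply Laurent--Massart / Hanson--Wright to $Q = \tfrac12\sum_k a_k(g_k^2-1)$ in the eigenbasis of $A - I$. The plan is to split the eigenvalues $a_k$ by sign and use the Laurent--Massart lower tail $\sum w_k(g_k^2-1) \ge -2\|w\|\sqrt{x}$ and upper tail $\le 2\|w\|\sqrt{x}+2\|w\|_\infty x$. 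Combined, this should give $Q \ge -\sqrt{2\mathscr{U}x} - 2\mathscr{V}x$ with probability at least $1-\delta/2$ at $x=\log(2/\delta)$; equivalently, a sub-gamma bound with variance factor $\tfrac12\|A-I\|_F^2$ and scale $\|A-I\|_2$. A union bound over the two events gives the result.

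The main obstacle is getting the constants in the quadratic-form tail exactly as in \eqref{eq:C}. If I split by sign, the $\sqrt{\cdot}$ terms from the positive and negative parts must recombine into $\sqrt{2\mathscr{U}x}$. I would try instead the direct Chernoff bound on $-Q$ using $\log\mathbb{E}e^{-\lambda a(g^2-1)/2} \le \lambda^2a^2/(4(1-\lambda|a|))$, valid for $\lambda|a|<1$. This yields the sub-gamma parameters $v = \sum a_k^2/2 = \mathscr{U}$ and $c = \mathscr{V}$, hence the deviation $\sqrt{2vx} + cx$. The remaining constants, including the factor $2$ on $\mathscr{V}$ as a safe slack, I would verify carefully.
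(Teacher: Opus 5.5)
Your proposal is correct and follows essentially the same route as the paper. Both arguments bound $W_D^{[i]}$ by the single pairwise likelihood ratio, split the Gaussian log-likelihood ratio into a linear part (Gaussian tail, giving $\mathscr{T}$) and a centred quadratic part (a Laurent--Massart/Hsu--Kakade--Zhang-type tail, giving $\mathscr{U}$ and $\mathscr{V}$), identify the mean with the KL divergence, and take a union bound. Your direct Chernoff/sub-gamma derivation substitutes for the paper's citation of Hsu et al.\ and handles the indefinite matrix cleanly. One small correction: $\|A-I\|_2$ equals only the spectral radius of the non-symmetric $S^j(S^i)^{-1}-I$, so you get $\|A-I\|_2\le\mathscr{V}$ rather than equality, which is all you need (the factor $2$ on $\mathscr{V}$ is then pure slack).
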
 
%
%
The proof of Lemma \ref{lemma:weight_bd_j} is postponed to Appendix \ref{sec:pf_weight_bd}. We further introduce event sets   
\begin{align}
A^{[i, j]} (t, \delta) & = \Bigl\{ \omega \in \Omega \ | \ 
\log W_D^{[i]} (t, \back{X}{t}) \le \min \bigl\{ - C_D^{[i, j]} (t, \delta), 0 \bigr\} \Bigr\}, \qquad
i, j \in [K].  
\label{eq:set_A}
\end{align} 
Then, the following result is immediate: 
\begin{proposition} \label{prop:NA_weight_bd}
Let $T \ge 1$, $\delta \in (0,1)$ and $j \in [K]$. It holds that:
\begin{align*}
\mathbb{P} \Bigl( \bigcap_{i \in [K]} A^{[i,j]} \bigl(t, \tfrac{\delta}{K} \bigr) \, | \, \back{X}{T} \sim  P_{\mathrm{data}}^{[j]} \Bigr) \ge 1 - \delta,  
\end{align*}
for each $t \in [0, T)$. 
\end{proposition}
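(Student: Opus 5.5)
The plan is to apply Lemma \ref{lemma:weight_bd_j} $K$ times, once for each $i \in [K]$, with $\delta$ replaced by $\delta/K$, and then combine the resulting events with a union bound.

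Fix $t \in [0,T)$ and $j \in [K]$, and write $\mathbb{P}_j(\cdot)$ for the conditional probability $\mathbb{P}(\cdot \mid \back{X}{T} \sim P_{\mathrm{data}}^{[j]})$. Since $\delta \in (0,1)$ and $K \ge 2$, we have $\delta/K \in (0,1)$. Lemma \ref{lemma:weight_bd_j} is therefore applicable with confidence parameter $\delta/K$. For each $i \in [K]$ it gives
\[
\mathbb{P}_j\bigl( A^{[i,j]}(t, \tfrac{\delta}{K}) \bigr) \ge 1 - \tfrac{\delta}{K},
\]
where $A^{[i,j]}$ is the event defined in \eqref{eq:set_A}. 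The constant $C_D^{[i,j]}(t,\delta/K)$ appearing in that event is the quantity \eqref{eq:C} evaluated with $\log(2K/\delta)$ in place of $\log(2/\delta)$.

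Next, pass to complements and apply the union (Boole) bound:
\[
\mathbb{P}_j\Bigl( \bigcup_{i \in [K]} A^{[i,j]}(t,\tfrac{\delta}{K})^c \Bigr) \le \sum_{i=1}^K \mathbb{P}_j\bigl( A^{[i,j]}(t,\tfrac{\delta}{K})^c \bigr) \le K \cdot \tfrac{\delta}{K} = \delta .
\]
Taking complements once more gives the claimed lower bound $1-\delta$ on the probability of the intersection. No independence between the events is needed, which matters here because all $K$ events are defined through the same random variable $\back{X}{t}$.

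The only point that needs a moment's care, and it is minor, is the case $i = j$. There the event $A^{[j,j]}$ is trivially satisfied:
\begin{itemize}
\item $\log W_D^{[j]} \le 0$ always holds.
\item $C_D^{[j,j]} = -2\mathscr{V}\log(\cdot) + 0 \le 0$ with $\mathscr{T} = \mathscr{U} = \mathscr{V} = 0$, so $C_D^{[j,j]} = 0$ and the minimum equals $0$.
\end{itemize}
So this case could even be dropped from the union bound. Including it only makes the final bound slightly conservative, so the argument goes through as stated. Beyond this, the proof involves no real obstacle.
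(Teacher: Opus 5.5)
Your proposal is correct and follows the paper's proof: apply Lemma \ref{lemma:weight_bd_j} with confidence parameter $\delta/K$ for each $i \in [K]$, then use the union bound on the complements to get total failure probability at most $\delta$. Your side remark on the case $i=j$ is also right. There $\mathscr{T}$, $\mathscr{U}$, $\mathscr{V}$ and the KL term all vanish, so $C_D^{[j,j]}=0$ and the event reduces to $\log W_D^{[j]} \le 0$, which always holds.
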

Proposition \ref{prop:NA_weight_bd} tells us that 
for denoising trajectories $\{ \back{X}{t} \}$ eventually sampling from the cluster $j$ at the terminal time $T$, the posterior weights $W_D^{[i]} (t, \back{X}{t})$ with any cluster index $i$ rather than $j$ are bounded by $\min \bigl\{\exp \bigl(- C_D^{[i, j]} (t, \delta/K) \bigr), 1 \bigr\}$ with probability at least $1 - \delta$,  $\delta \in (0,1)$. 
%
\\ 

\noindent 
\textit{Proof of Proposition \ref{prop:NA_weight_bd}.}
We have that: 
\begin{align*}
& \mathbb{P} \Bigl(  \Bigl( \bigcap_{i \in [K]} A^{[i,j]} \bigl(t, \tfrac{\delta}{K} \bigr) \Bigr)^c \ | \ \back{X}{T} \sim  P_{\mathrm{data}}^{[j]} \Bigr)
= \, \mathbb{P} \Bigl(  \bigcup_{i \in [K] } \Bigl( A^{[i,j]} \bigl(t, \tfrac{\delta}{K} \bigr) \Bigr)^c \ | \ \back{X}{T} \sim  P_{\mathrm{data}}^{[j]} \Bigr)
\\ 
& \qquad \le 
\sum_{i \in [K]} \mathbb{P} \Bigl( \Bigl( A^{[i,j]} \bigl(t, \tfrac{\delta}{K} \bigr) \Bigr)^c \ | \ \back{X}{T} \sim  P_{\mathrm{data}}^{[j]} \Bigr) 
\le \sum_{i \in [K]} \tfrac{\delta}{K} = \delta, 
\end{align*} 
where we used Lemma \ref{lemma:weight_bd_j} in the last line. The proof is complete.

\subsection{Posterior Weights in High-Dimensional Limit}
\label{sec:posterior_2}
Proposition \ref{prop:NA_weight_bd} indicates that the decay of the posterior weights is dynamically controlled by how large the signal $\KL{\, \for{p}{T-t}^{[j]}}{\, \for{p}{T-t}^{[i]}}$ is compared to the prior ratio and the fluctuation terms. The following result demonstrates how the signal component dominates in the term $C_D^{[i,j]}$ in the high-ambient-dimensional regime, $D \to \infty$. 
%
\begin{proposition} \label{prop:LDP_posterior}
Let $T \ge 1$ and $j \in [K] $.  
Also, let Assumptions \ref{assump:prior}, \ref{assump:cov}, \ref{assump:mean} hold and assume that for each $t \in [0, T)$ and $i, j \in [K]$, the following limit exists: 
\begin{gather*}
\lim_{D \to \infty} \tfrac{1}{D} \times 
\KL{\, \for{p}{T-t}^{[j]}}{\, \for{p}{T-t}^{[i]}} 
\equiv \widetilde{C}_\infty^{\, [i, j]} (t) \ge 0.   
\end{gather*} 
Then, we have that: conditionally on the event $\bigl\{ \omega \in \Omega \, |  \, \back{X}{T} \sim P_{\mathrm{data}}^{[j]} \bigr\}$,  
\begin{align*}
\limsup_{D \to \infty} \tfrac{1}{D} \log W_D^{[i]} (t, \back{X}{t}) \le 
- \, \widetilde{C}_\infty^{\,[i, j]} (t), \qquad \forall i \in [K],  
\end{align*}
%
holds true almost surely for each $t \in [0, T)$.   
\end{proposition}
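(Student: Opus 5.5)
The plan is to apply Lemma~\ref{lemma:weight_bd_j} with a $D$-dependent confidence level $\delta_D = D^{-2}$ and then use Borel--Cantelli. For this to work, the realisations for different $D$ must live on a common probability space, which is implicit in the almost-sure statement. Fix $t \in [0,T)$ and $i,j \in [K]$. Define the events $E_D = (A^{[i,j]}(t,\delta_D))^c$, computed conditionally on $\back{X}{T} \sim P^{[j]}_{\mathrm{data}}$. Lemma~\ref{lemma:weight_bd_j} gives $\mathbb{P}(E_D) \le \delta_D$, and since $\sum_D D^{-2} < \infty$, Borel--Cantelli shows that almost surely $E_D$ occurs only finitely often. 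Hence, almost surely, for all sufficiently large $D$,
\begin{align*}
\tfrac{1}{D}\log W_D^{[i]}(t,\back{X}{t}) \le -\tfrac{1}{D} C_D^{[i,j]}(t,\delta_D).
\end{align*}
It then remains to show that $\liminf_{D} \tfrac1D C_D^{[i,j]}(t,\delta_D) \ge \widetilde{C}^{[i,j]}_\infty(t)$. In fact the limit equals $\widetilde{C}^{[i,j]}_\infty(t)$ once the remaining terms are shown to be $o(D)$. For $i=j$ the claim is trivial, since the right-hand side is $0$ and $\log W \le 0$.

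The next step is to control each term of $C_D^{[i,j]}$ in \eqref{eq:C} after dividing by $D$. The prior term satisfies $|\log(\pi_j/\pi_i)| \le \Cone$ by Assumption~\ref{assump:prior}, so it vanishes after division by $D$. The KL term converges to $\widetilde{C}^{[i,j]}_\infty(t)$ by hypothesis. With $\log(2/\delta_D) = \log 2 + 2\log D$, it suffices to show
\begin{align*}
\mathscr{T}_D^{[i,j]}(t),\ \mathscr{U}_D^{[i,j]}(t) = \mathcal{O}(D^{1+\alpha}), \qquad \mathscr{V}_D^{[i,j]}(t) = \mathcal{O}(D^{\alpha}),
\end{align*}
with constants depending on $t$. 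Then $\sqrt{D^{1+\alpha}\log D}/D \to 0$ and $D^\alpha \log D / D \to 0$ because $\alpha < 1$. This is where the sublinear growth in Assumption~\ref{assump:cov}(i) is used.

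For these spectral estimates I fix $t<T$, so that $\sigma_{T-t}^2>0$. Then $S^k_{T-t} \succeq \sigma_{T-t}^2 I_D$ gives $\|(S^i_{T-t})^{-1}\|_2 \le \sigma_{T-t}^{-2}$, and $\|S^j_{T-t}\|_2 \le m_{T-t}^2 \Ctwo D^\alpha + 1$. Consequently:
\begin{itemize}
\item $\mathscr{T}_D^{[i,j]}(t) \le m_{T-t}^2 \sigma_{T-t}^{-4}(\Ctwo D^\alpha + 1)\,\|\mu_i-\mu_j\|^2 = \mathcal{O}(D^{1+\alpha})$ by Assumption~\ref{assump:mean};
\item $\mathscr{V}_D^{[i,j]}(t) \le 1 + \|S^j_{T-t}\|_2\,\|(S^i_{T-t})^{-1}\|_2 = \mathcal{O}(D^\alpha)$;
\item $\mathscr{U}_D^{[i,j]}(t) \le \tfrac{1}{2} D\, (\mathscr{V}_D^{[i,j]}(t))^2 = \mathcal{O}(D^{1+2\alpha})$.
\end{itemize}
The last bound is too crude: it gives $\sqrt{D^{1+2\alpha}\log D}/D \to 0$ only when $\alpha < 1/2$. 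The main obstacle is therefore obtaining $\mathscr{U}_D^{[i,j]}(t) = o(D^2/\log D)$ for all $\alpha<1$. To get it, I use the low-rank structure. Write $I - S^j(S^i)^{-1} = m_{T-t}^2(\Sigma_i - \Sigma_j)(S^i)^{-1}$, so that
\begin{align*}
\mathrm{tr}\bigl((I - S^j(S^i)^{-1})^2\bigr) \le m_{T-t}^4 \sigma_{T-t}^{-4}\,\|\Sigma_i-\Sigma_j\|_F^2
\end{align*}
(the trace of the square of a non-symmetric matrix is bounded by the squared Frobenius norm, via Cauchy--Schwarz). Then $\|\Sigma_i-\Sigma_j\|_F^2 \le 2(\|\Sigma_i\|_F^2 + \|\Sigma_j\|_F^2)$. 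Splitting the spectrum via Assumption~\ref{assump:cov}(ii) gives
\begin{align*}
\|\Sigma_i\|_F^2 \le M_i \Ctwo^2 D^{2\alpha} + \nu_i^2.
\end{align*}
This bound is $\mathcal{O}(D^{2\alpha})$ because $M_i$ and $\nu_i$ are fixed, so in fact $\mathscr{U}_D^{[i,j]}(t) = \mathcal{O}(D^{2\alpha})$. Hence $\sqrt{\mathscr{U}_D\log D}/D \to 0$ for every $\alpha<1$.

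Combining these bounds gives $\tfrac1D C_D^{[i,j]}(t,\delta_D) \to \widetilde{C}^{[i,j]}_\infty(t)$. Taking the $\limsup$ in the almost-sure inequality above yields the claim for each $i$. The statement for all $i \in [K]$ simultaneously follows by intersecting the finitely many almost-sure events.
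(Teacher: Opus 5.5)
Your proposal is correct and follows essentially the same route as the paper. Both arguments apply Borel--Cantelli with $\delta = D^{-q}$ to the high-probability bound of Lemma~\ref{lemma:weight_bd_j}. Both then show the fluctuation terms in \eqref{eq:C} are $o(D)$ via $\mathscr{T}_D^{[i,j]}(t) = \mathcal{O}(D^{1+\alpha})$, $\mathscr{U}_D^{[i,j]}(t) = \mathcal{O}(D^{2\alpha})$ (through the same Frobenius-norm/low-rank estimate as \eqref{eq:Sigma_diff_F}) and $\mathscr{V}_D^{[i,j]}(t) = \mathcal{O}(D^{\alpha})$. The differences are cosmetic. You run Borel--Cantelli separately for each $i$ and intersect finitely many almost-sure events, rather than first applying the union bound of Proposition~\ref{prop:NA_weight_bd}. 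You also bound $\mathscr{T}$ and $\mathscr{V}$ directly rather than through Lemma~\ref{lemma:bd_vars}.
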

Proposition \ref{prop:LDP_posterior} implies that in the high ambient dimensional limit $D \to \infty$, the decay of the posterior weight is mainly governed by the KL divergence between pairs of noised cluster distributions. The data geometry stated in Assumptions \ref{assump:prior}, \ref{assump:cov} and \ref{assump:mean} ensure that the KL divergence (signal) eventually dominates the fluctuation terms (noise) specified in (\ref{eq:C}) as $D \to \infty$. If the clusters in the target distribution are well-separated, e.g., in the sense of the first or second moment, then the signal (KL divergence) is also expected to increase as $t$ approaches the terminal time $T$. 
That is, for small $t \in [0, T-\varepsilon]$, the noised clusters are nearly indistinguishable, but the gap gradually increases as the denoising proceeds, where those differences are measured by the KL divergence. In the early stage of denoising, the posterior weights are basically similar to the prior weights, but, for some classes, these posterior weights start to degenerate toward 0, followed by a rapid posterior concentration for a single cluster $j$. 
In particular, such a rapid cluster commitment occurs with an exponential rate after the signal $\KL{\for{p}{T-t}^{[j]}}{\for{p}{T-t}^{[i]}}$ reaches a certain level so that $- \KL{\for{p}{T-t}^{[j]}}{\for{p}{T-t}^{[i]}} + o(D)$ takes a sufficiently large negative value. 
\\ 

\noindent 
\textit{Proof of Proposition \ref{prop:LDP_posterior}.} 
Set $\delta = D^{-q}, \, q \ge 2$. Recall the definition of the event $A^{[i,j]}$ in (\ref{eq:set_A}) and define 
$ E_D^{[j]} (t, \delta) = \bigcap_{i \in [K]} A^{[i,j]} \bigl(t, \tfrac{\delta}{K} \bigr) 
$
and 
$B^{[j]} = \{ \omega \in \Omega \, | \, \back{X}{T} \sim P_{\mathrm{data}}^{[j]} \}$.  
Then, Proposition \ref{prop:NA_weight_bd} yields
\begin{align*}
\sum_{D = 1}^\infty 
\mathbb{P} \Bigl( \bigl( E_D^{[j]} (t, D^{-q}) \bigr)^c \, | \,  B^{[j]} \Bigr) \le \sum_{D = 1}^\infty D^{-q} < \infty. 
\end{align*}
Thus, the first Borel-Cantelli lemma gives that: 
\begin{align*}
\mathbb{P} \Bigl( \liminf_{D \to \infty} E_D^{[j]} (t, D^{-q})  | B^{[j]} \Bigr) = 1, 
\end{align*} 
that is, for $\mathbb{P}(\cdot | B^{[j]})$ almost every $\omega$, there exists $D^\dagger \in \mathbb{N}$ such that for all $D \ge D^\dagger$ we have 
\begin{align} \label{eq:BC_ineq}
\log W_D^{[i]} (t, \back{X}{t}) +  C_D^{[i,j]} (t, \tfrac{\delta}{K}) \le 0 
\end{align}
for all $i \in [K]$. Dividing the both sides of (\ref{eq:BC_ineq}) by $D$ and taking $\limsup_{D \to \infty}$ gives 
$$\limsup_{D \to \infty} \tfrac{1}{D} \log W_D^{[i]} (t, \back{X}{t}) \le - \limsup_{D \to \infty} \tfrac{1}{D} C_D^{[i,j]} (t, \tfrac{\delta}{K}).$$ 
The proof completes once we show that for $\delta =D^{-q}, \, q \ge 2$, 
\begin{align} \label{eq:C_scale}
\limsup_{D \to \infty} \tfrac{1}{D} C_D^{[i,j]} (t, \tfrac{\delta}{K}) = \widetilde{C}_\infty^{\, [i, j]} (t),   
\end{align} 
where all the fluctuation terms in \eqref{eq:C} multiplied by $1/D$ converges to $0$ as $D \to \infty$. To this end, we exploit the following upper bounds for the fluctuation terms in (\ref{eq:C}) whose proof is contained in Appendix \ref{sec:pf_bd_vars}: 
\begin{lemma} \label{lemma:bd_vars}  
It holds that for each $t \in [0, T)$ and all $i,j \in [K]$,  
\begin{align} 
\mathscr{T}_D^{[i,j]} (t) & \le \bigl(1 + \snr{T-t} \cdot \| \Sigma_i - \Sigma_j \|_2 \bigr) \cdot m_{T-t}^2 \| \mu_i - \mu_j \|^2_{(S_{T-t}^j)^{-1}}; \label{eq:T_bd}\\ 
\mathscr{U}_D^{[i,j]} (t) & \le \tfrac{1}{2} \snr{T-t}^2 \bigl\| \Sigma_{i} - \Sigma_j \bigr\|_F^2 
; \label{eq:U_bd} \\ 
\mathscr{V}_D^{[i,j]} (t) & \le \snr{T-t} \cdot \| \Sigma_i - \Sigma_j \|_2.  \label{eq:V_bd}  
\end{align}
\end{lemma}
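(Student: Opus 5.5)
The plan is to work throughout with the identity
\[
S_{T-t}^i - S_{T-t}^j = m_{T-t}^2(\Sigma_i-\Sigma_j),
\]
together with the lower bound $S_{T-t}^k \succeq \sigma_{T-t}^2 I_D$, which holds for every $k$ because $\Sigma_k\succeq 0$. Write $S_i=S_{T-t}^i$, $S_j=S_{T-t}^j$ and $\Delta=\Sigma_i-\Sigma_j$. The key algebraic observation is
\[
I_D - S_j S_i^{-1} = (S_i-S_j)S_i^{-1} = m_{T-t}^2\,\Delta\, S_i^{-1}.
\]
Every fluctuation term is then controlled by $m_{T-t}^2\Delta$ combined with inverse covariances whose operator norm is at most $\sigma_{T-t}^{-2}$. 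This produces the factor $\snr{T-t}$ in each bound.

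\textbf{Bound \eqref{eq:V_bd}.} Submultiplicativity gives
\[
\|I_D - S_jS_i^{-1}\|_2 \le m_{T-t}^2\|\Delta\|_2\,\|S_i^{-1}\|_2 \le m_{T-t}^2\|\Delta\|_2/\sigma_{T-t}^2 .
\]
This is exactly $\snr{T-t}\,\|\Sigma_i-\Sigma_j\|_2$.

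\textbf{Bound \eqref{eq:U_bd}.} Set $A = m_{T-t}^2\Delta S_i^{-1}$. The matrix $A$ is similar to the symmetric matrix $m_{T-t}^2 S_i^{-1/2}\Delta S_i^{-1/2}$, so
\[
\mathrm{tr}(A^2)=\|m_{T-t}^2 S_i^{-1/2}\Delta S_i^{-1/2}\|_F^2 .
\]
Using $\|BCB\|_F\le \|B\|_2^2\|C\|_F$ with $B=S_i^{-1/2}$, this is at most $m_{T-t}^4\sigma_{T-t}^{-4}\|\Delta\|_F^2$. Multiplying by $\tfrac12$ gives the claim.

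\textbf{Bound \eqref{eq:T_bd}.} I expect this one to need the most care, because the weighting matrix $S_i^{-1}S_jS_i^{-1}$ must be compared with $S_j^{-1}$. The plan is to prove the Loewner inequality
\[
S_i^{-1}S_jS_i^{-1}\preceq \bigl(1+\snr{T-t}\|\Delta\|_2\bigr)S_j^{-1}.
\]
Conjugating by $S_j^{1/2}$ (that is, multiplying by $S_j^{1/2}$ on both sides) turns this into
\[
G^\top G \preceq (1+\snr{T-t}\|\Delta\|_2)\,I_D, \qquad G=S_j^{1/2}S_i^{-1}S_j^{1/2}.
\]
So it suffices to bound $\|G\|_2^2$. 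Since $G$ is symmetric positive definite and similar to $S_jS_i^{-1}$, we have $\|G\|_2=\lambda_{\max}(S_i^{-1/2}S_jS_i^{-1/2})$. Writing
\[
S_i^{-1/2}S_jS_i^{-1/2}=I_D - m_{T-t}^2S_i^{-1/2}\Delta S_i^{-1/2}
\]
gives $\|G\|_2\le 1+\snr{T-t}\|\Delta\|_2$. This bounds $\|G\|_2$, but $\|G\|_2^2$ would carry a squared factor. To get the stated linear factor, I would avoid squaring by using
\[
\|G\|_2\le 1+\epsilon \ \text{ and } \ \lambda_{\min}(G)\ge 1-\epsilon', \qquad \epsilon=\snr{T-t}\,\lambda_{\max}\bigl((-\Delta)_+\bigr),
\]
with $\epsilon'$ controlled by $\Delta_+$. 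Whether $\|G\|_2^2$ is actually at most $1+\snr{T-t}\|\Delta\|_2$ depends on the sign structure of $\Delta$.

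If the sharp linear factor fails this way, the fallback is a symmetrisation argument. Write
\[
S_i^{-1}S_jS_i^{-1}=S_i^{-1}-S_i^{-1}(S_i-S_j)S_i^{-1}\preceq S_i^{-1}+\snr{T-t}\|\Delta\|_2\,S_i^{-1}.
\]
The inequality uses $m_{T-t}^2S_i^{-1}(-\Delta)S_i^{-1}\preceq m_{T-t}^2\|\Delta\|_2 S_i^{-2}\preceq \snr{T-t}\|\Delta\|_2S_i^{-1}$. This yields the bound with $S_i^{-1}$ in place of $S_j^{-1}$. The roles of $i$ and $j$ (or a direct comparison $S_i^{-1}$ versus $S_j^{-1}$) would then be reconciled, possibly accepting the constant as stated up to this interchange.

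Finally, multiply the resulting Loewner inequality by $m_{T-t}^2$ and evaluate the quadratic form at $\mu_i-\mu_j$ to obtain \eqref{eq:T_bd}.
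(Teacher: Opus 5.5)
Your arguments for \eqref{eq:V_bd} and \eqref{eq:U_bd} are correct and follow the paper. Both rest on $I_D - S_{T-t}^j(S_{T-t}^i)^{-1} = m_{T-t}^2(\Sigma_i-\Sigma_j)(S_{T-t}^i)^{-1}$ together with $(S_{T-t}^i)^{-1}\preceq \sigma_{T-t}^{-2}I_D$. For \eqref{eq:U_bd}, your passage to the symmetric matrix $S_i^{-1/2}\Delta S_i^{-1/2}$ is actually cleaner than the paper's step. The paper writes $\tr(\Delta S_i^{-1}\Delta S_i^{-1})$ as $\tr(\Delta^2 S_i^{-2})$, which is an equality only when $\Delta$ and $S_i$ commute and an upper bound otherwise. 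Your fallback for \eqref{eq:T_bd} is exactly the paper's proof: write $S_i^{-1}S_jS_i^{-1} = S_i^{-1} + m_{T-t}^2S_i^{-1}(\Sigma_j-\Sigma_i)S_i^{-1} \preceq \bigl(1+\snr{T-t}\|\Sigma_i-\Sigma_j\|_2\bigr)S_i^{-1}$, then evaluate the quadratic form at $\mu_i-\mu_j$.

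The genuine gap is the part of your plan aimed at the weight $(S^j_{T-t})^{-1}$ as printed. Neither your primary Loewner route nor the closing idea of ``reconciling'' $i$ and $j$ can be completed, because that version of \eqref{eq:T_bd} is false. Take $\Sigma_i = 0$ and $\Sigma_j = cI_D$ with $c>0$ and $\mu_i\neq\mu_j$. Then $S^i_{T-t}=\sigma_{T-t}^2I_D$ and $S^j_{T-t}=(m_{T-t}^2c+\sigma_{T-t}^2)I_D$, so
\[
\mathscr{T}_D^{[i,j]}(t)=\frac{m_{T-t}^2\,(m_{T-t}^2c+\sigma_{T-t}^2)}{\sigma_{T-t}^4}\,\|\mu_i-\mu_j\|^2,
\qquad
\bigl(1+\snr{T-t}\,c\bigr)\,m_{T-t}^2\|\mu_i-\mu_j\|^2_{(S^j_{T-t})^{-1}}=\frac{m_{T-t}^2}{\sigma_{T-t}^2}\,\|\mu_i-\mu_j\|^2 .
\]
The left side exceeds the right by the factor $1+\snr{T-t}\,c>1$. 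This is the configuration $\Sigma_j\succeq\Sigma_i$: there $\|G\|_2=1+\snr{T-t}\,c$, so $\|G\|_2^2$ genuinely carries the square you were worried about.

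The paper's own proof also only delivers the weight $(S^i_{T-t})^{-1}$, and that is the form used downstream. In the proof of Theorem~\ref{thm:NA_transition_est}, $\mathscr{T}_D^{[i,j]}$ is bounded through $\KLM{\for{p}{T-t}^{[j]}}{\for{p}{T-t}^{[i]}}=\tfrac{m_{T-t}^2}{2}\|\mu_i-\mu_j\|^2_{(S^i_{T-t})^{-1}}$. So the $(S^j)^{-1}$ in the displayed statement is an index slip. Discard the primary route and the hedge, and state \eqref{eq:T_bd} with $(S^i_{T-t})^{-1}$. Then prove it by your fallback; with that change the proposal is complete.
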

Lemma \ref{lemma:bd_vars} with Assumptions \ref{assump:prior} -- 
\ref{assump:mean} yields that: for each $t \in [0, T)$ with a fixed $T \ge 1$, $\mathscr{T}_D^{[i,j]} (t)= \mathcal{O} (D^{\alpha +1})$, $\mathscr{U}_D^{[i,j]} (t) = \mathcal{O} (D^{2\alpha})$ and $\mathscr{V}_D^{[i,j]} (t) = \mathcal{O} (D^{\alpha})$,  
%
%
%
leading to the desired equation (\ref{eq:C_scale}), where we used $\| \Sigma_i - \Sigma_j \|_F^2 = \mathcal{O} (D^{2\alpha})$ shown in \eqref{eq:Sigma_diff_F}. The proof of Proposition \ref{prop:LDP_posterior} is now complete.

\subsection{Posterior Concentration and the Scale of SNR} \label{sec:posterior_3} 
We hereby examine the time scale at which the posterior weight decay, or concentration, occurs. Inspired by the high-dimensional asymptotics presented in Proposition \ref{prop:LDP_posterior}, 
we analytically identify the point in time after which the main signal consistently outweighs other components in equation (\ref{eq:C}), such as the ratio of prior weights and fluctuation terms. Specifically, by setting a fixed early stopping criterion $\varepsilon \in (0,1)$, we consider the time $t^\dagger \in [0, T - \varepsilon]$ such that $C_D^{[i,j]}(t, \delta/K) \geq \eta \KL{\for{p}{T-t}^{[j]}}{\for{p}{T-t}^{[i]}}$ for all $t \geq t^\dagger$, for a fixed constant $\eta \in (0,1)$. To analytically characterise the scale of $t^\dagger$, we introduce an additional condition: 
\begin{assump} \label{assump:non-concentration}
There exists a constant $\Csix > 0$ such that for all $i, j \in [K]$, 
\begin{align} \label{eq:mean_nonconcentration}
\max_{k \in [D]} \bigl| \langle \mu_i - \mu_j, u_k (j)  \rangle \bigr| \le \Csix, 
\end{align}
where $u_k (j)$ is the eigenvector associated with the eigenvalue $\lambda_k (\Sigma_j)$.
\end{assump}
Assumption \ref{assump:non-concentration} states that the projection of the mean difference vector between clusters onto the eigenvectors of one cluster does not concentrate on a few principal components. We then have the following result, whose proof is postponed to Section \ref{sec:pf_first_main}:  

\begin{theorem}[SNR Scale for Posterior Concentration]
\label{thm:NA_transition_est}
Let $T \ge 1$ and $\varepsilon \in (0,1)$ be a fixed early stopping. Also set $\eta \in (0,1)$ and $q \ge 2$. Let Assumptions \ref{assump:prior}, \ref{assump:cov}, \ref{assump:mean}, \ref{assump:non-concentration} hold. Then, we have the following: 
There exists $D^\dagger = D^\dagger (\eta, q, \varepsilon)$ such that for any integer $D \ge D^\dagger$,  
\begin{align} \label{eq:key_hpb}
\mathbb{P} \biggl( \bigcap_{\substack{i \in [K]}} \Bigl\{ W_D^{[i]} (t, \back{X}{t}) \le  
\exp \Bigl( - \eta \KL{\for{p}{T-t}^{[j]}}{\for{p}{T-t}^{[i]}} \Bigr) \Bigr\} 
\, | \, \back{X}{T} \, \sim P_{\mathrm{data}}^{[j]} \biggr) \ge 1 - D^{-q}, \quad j \in [K],   
\end{align} 
for each $t \in \Bigl[ \max \bigl\{0, T + \tfrac{1}{2} \log \tfrac{\underline{S}}{1 + \underline{S}} \bigr\}, T-\varepsilon \Bigr]$, equivalently $\snr{T-t} \in \Bigl[\max \bigl\{\snr{T}, \underline{S} \bigr\}, \snr{\varepsilon} \Bigr]$, where $\underline{S} \equiv \underline{S} (D, q, \eta)$ satisfies:
\begin{align} \label{eq:threshold_scale}
\underline{S} = \Theta \left( \tfrac{\log (KD)}{D} \right). 
\end{align}
Besides, for any $\beta \in (0,1)$, 
\begin{align*}
\mathbb{P} \biggl( \bigcap_{\substack{i \in [K] \\  i \neq j}} \bigl\{ W_D^{[i]} (t, \back{X}{t}) \le  \beta \bigr\} 
\, | \, \back{X}{T} \, \sim P_{\mathrm{data}}^{[j]} \biggr) \ge 1 - D^{-q}, \quad j \in [K],    
\end{align*} 
whenever the reverse time $t$ satisfies 
\begin{align} \label{eq:snr_transition}
\snr{T-t} \ge L \cdot \frac{\max \bigl\{ \log (KD), \, \log \tfrac{1}{\beta} \bigr\}}{D} 
\end{align}
 for some constant $L > 0$ independent of $D$.  
\end{theorem}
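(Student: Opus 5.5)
The approach is to start from Proposition \ref{prop:NA_weight_bd} with $\delta = D^{-q}$, so that the event $\bigcap_i A^{[i,j]}(t, D^{-q}/K)$ has conditional probability at least $1-D^{-q}$. On this event, $\log W_D^{[i]}(t,\back{X}{t}) \le -C_D^{[i,j]}(t, D^{-q}/K)$. The first claim therefore reduces to a deterministic inequality: for $\snr{T-t} \ge \underline{S}$,
\begin{align*}
(1-\eta)\,\KL{\for{p}{T-t}^{[j]}}{\for{p}{T-t}^{[i]}} \ge \Cone + \sqrt{2\mathscr{T}_D^{[i,j]}\,\ell} + \sqrt{2\mathscr{U}_D^{[i,j]}\,\ell} + 2\mathscr{V}_D^{[i,j]}\,\ell,
\end{align*}
with $\ell = \log(2KD^q) = \Theta(\log(KD))$. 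The case $i=j$ is trivial because $W\le 1$ and the KL term vanishes. Write $s = \snr{T-t}$, and note that $s \le \snr{\varepsilon}$ is bounded on the time window.

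\textbf{Lower bound on the signal.} For Gaussians, the KL divergence contains the mean term $\tfrac12 m_{T-t}^2\|\mu_i-\mu_j\|^2_{(S^i_{T-t})^{-1}}$ plus a nonnegative covariance term. Since $S^i_{T-t} = \sigma^2_{T-t}(I + s\Sigma_i)$, the mean term equals $\tfrac{s}{2}\|\mu_i-\mu_j\|^2_{(I+s\Sigma_i)^{-1}}$. Projecting onto the eigenbasis of $\Sigma_i$, each direction is weighted by $(1+s\lambda_k(\Sigma_i))^{-1}$. Assumption~\ref{assump:cov}(ii) allows at most $M_i$ eigenvalues to be large, while the tail eigenvalues sum to $\nu_i$, so at most $\nu_i$ of the remaining eigenvalues exceed 1. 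Assumption~\ref{assump:non-concentration} (applied with the roles of $i,j$ appropriately swapped, or symmetrically) bounds each squared projection by $\Csix^2$. Hence the directions with eigenvalue above 1 carry at most $(M_i+\nu_i)\Csix^2 = O(1)$ of the mass of $\|\mu_i-\mu_j\|^2 \ge \Cfour D$. The remaining mass $\ge \Cfour D - O(1)$ has weight at least $(1+s)^{-1} \ge (1+\snr{\varepsilon})^{-1}$. This gives $\mathrm{KL} \ge c\, sD$ for a constant $c>0$ and $D$ large.

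\textbf{Upper bounds on the fluctuations.} Lemma~\ref{lemma:bd_vars} gives $\mathscr{V} \le s\|\Sigma_i-\Sigma_j\|_2$ and $\mathscr{U} \le \tfrac12 s^2\|\Sigma_i-\Sigma_j\|_F^2$. Using the low-rank structure, $\|\Sigma_i-\Sigma_j\|_F$ should be bounded by $O(D^\alpha)$, as in \eqref{eq:Sigma_diff_F}. The mean fluctuation satisfies $\mathscr{T} \le (1+s\|\Sigma_i-\Sigma_j\|_2)\, m^2\|\mu_i-\mu_j\|^2_{(S^j)^{-1}} \lesssim (1+sD^\alpha)\, sD$. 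Two refinements are needed here. First, the covariance part of the KL itself grows like $s\,\tr$ terms, which is also acceptable. Second, $\mathscr{V}$ and $\mathscr{U}$ must be beaten by a KL of order $sD$. This yields the requirements $sD \gtrsim sD^\alpha\ell$, $sD \gtrsim sD^\alpha\sqrt{\ell}$, and $sD \gtrsim \sqrt{(1+sD^{\alpha})sD\,\ell}$. The first two hold for large $D$ because $\alpha<1$ and $\ell = O(\log D)$ under polynomial $K$. The third reduces to $sD \gtrsim \ell$ together with $D^{1-\alpha} \gtrsim \ell$. Finally, $c(1-\eta)sD \ge \Cone + \dots$ needs $s \ge L'\ell/D$. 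Setting $\underline{S} = L'\log(KD)/D$ gives \eqref{eq:threshold_scale}, and the time interval follows by inverting $s = m^2/(1-m^2)$.

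\textbf{Second claim and main obstacle.} For the second claim, we apply the first claim with, say, $\eta = 1/2$. On the same event, $W^{[i]} \le \exp(-\tfrac{c}{2}sD)$ for $i\neq j$, which is at most $\beta$ once $s \ge 2\log(1/\beta)/(cD)$. Combined with $s\ge\underline{S}$, this gives \eqref{eq:snr_transition} with a suitable $L$. I expect the main obstacle to be the KL lower bound: specifically, using Assumption~\ref{assump:non-concentration} (stated relative to eigenvectors of $\Sigma_j$) to control the $(S^i)^{-1}$-weighted norm. If the eigenbasis mismatch is a problem, I would instead lower bound the KL via the symmetric direction, or bound $\|\cdot\|_{(I+s\Sigma_i)^{-1}}^2 \ge \|\cdot\|^2/(1+s\|\Sigma_i\|_2)$ only on the complement of the top-$M_i$ subspace, using the $\nu_i$ tail bound and the per-direction projection bound.
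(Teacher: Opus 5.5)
Your proof is correct. It follows the same skeleton as the paper. You apply Proposition~\ref{prop:NA_weight_bd} with $\delta = D^{-q}$ and reduce to the deterministic inequality $(1-\eta)\,\mathrm{KL} \ge \Cone + \sqrt{2\mathscr{T}r} + \sqrt{2\mathscr{U}r} + 2\mathscr{V}r$ with $r = \log(2KD^q)$. You then lower-bound the signal by order $s D$ with $s = \snr{T-t}$, and control the fluctuations with Lemma~\ref{lemma:bd_vars}. The differences lie in two technical steps.

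\textbf{The signal lower bound.} The paper (Lemma~\ref{lemma:klm_lbd}) does not count large eigenvalues. It applies Cauchy--Schwarz, $\sum_k v_k^2/a_k \ge (\sum_k |v_k|)^2/\sum_k a_k$ with $a_k = m_{T-t}^2\lambda_k(\Sigma_i) + \sigma_{T-t}^2$, and uses $\sum_k |v_k| \ge (\Cfour/\Csix)D$, which follows from $v_k^2 \le \Csix |v_k|$. This gives $\tfrac{s}{2}(\Cfour/\Csix)^2 D^2/(s\,\tr(\Sigma_i) + D)$. That route needs only $\tr(\Sigma_i) = \mathcal{O}(D^\alpha)$, and $\varepsilon$ enters only through how large $D$ must be. Your factor $(1+\snr{\varepsilon})^{-1}$, by contrast, makes your constant $L$ in the second claim depend on $\varepsilon$. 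The statement permits this, since it only requires $L$ to be independent of $D$. Your worry about the eigenbasis mismatch is unnecessary: Assumption~\ref{assump:non-concentration} is imposed for all ordered pairs $(i,j)$, and the paper uses exactly the swap you describe.

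\textbf{The fluctuation terms.} The paper absorbs $\sqrt{2\mathscr{T}r}$ and $\sqrt{2\mathscr{U}r}$ by AM--GM into $\tfrac{1-\eta}{2}\mathrm{KL}_m$ and $\tfrac{1-\eta}{2}\mathrm{KL}_c$, the latter via Lemma~\ref{lemma:trace_squared_bd}. This turns the sufficient condition into an explicit quadratic inequality in $s$. Its smaller root is shown to be $\Theta(\log(KD)/D)$, and its larger root exceeds $\snr{\varepsilon}$ for large $D$. You instead compare each term with $sD$ order by order, using the cruder bound $\mathscr{U} \le \tfrac12 s^2\|\Sigma_i-\Sigma_j\|_F^2$, and then define $\underline{S} = L'\log(KD)/D$ directly. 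Your route is shorter and avoids Lemma~\ref{lemma:trace_squared_bd}; the paper's yields an explicit threshold formula.

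\textbf{Shared implicit assumptions.} Both arguments need $\log K = o(D^{1-\alpha})$ (for instance, $K$ polynomial in $D$) and $M^\dagger = \mathcal{O}(1)$. You state the first explicitly; the paper uses both only tacitly, e.g.\ when treating $r$ as $\mathcal{O}(\log D)$.
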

Theorem \ref{thm:NA_transition_est} establishes an \emph{SNR scaling law for posterior concentration}. That is, for trajectories eventually sampling from cluster $j$, the posterior weights assigned to all other classes are sufficiently small with high probability once the SNR reaches a threshold $\Theta \bigl( \log (KD) / D \bigr)$. In the proof, Assumption \ref{assump:mean} (mean separation ) together with Assumption \ref{assump:non-concentration} plays a role in quantifying the signal $\KL{\for{p}{T-t}^{[j]}}{\for{p}{T-t}^{[i]}}$, leading to the obtained scale of the SNR threshold $\underline{S}$. 

The above result also gives rise to the following key implications on the appropriate scaling of $T$ with respect to the ambient dimension $D$ and its effect on the length of the mixing phase:
\begin{corollary} \label{cor:T_scale}
Consider the same setting as the statement in Theorem \ref{thm:NA_transition_est} and let $t^\dagger =  \max \bigl\{0, T + \tfrac{1}{2} \log \tfrac{\underline{S}}{1 + \underline{S}} \bigr\}$. 
Then, we have the following: 
\item[(i).] If the time horizon $T >0$ is a fixed constant, then 
    $t^\dagger \to 0$ as $D \to \infty$, i.e., \eqref{eq:key_hpb} holds for each $t \in [0, T-\varepsilon]$. 
\item[(ii).] If $T \ge \tfrac{1}{2} \log D$, then $t^\dagger \gtrsim \log \log D.$ Thus, the threshold $t^\dagger$ does not collapse to $0$ as $D \to \infty$, i.e., the mixing phase can remain non-trivial. 
\end{corollary}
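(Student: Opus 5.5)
The corollary follows from elementary asymptotics of the explicit threshold $t^\dagger = \max\{0, T + \tfrac12 \log \tfrac{\underline{S}}{1+\underline{S}}\}$, combined with the scaling $\underline{S} = \Theta(\log(KD)/D)$ from Theorem \ref{thm:NA_transition_est}. No further probabilistic input is needed: once $t^\dagger$ is located, \eqref{eq:key_hpb} is inherited directly from the theorem on the interval $[t^\dagger, T-\varepsilon]$.

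First I record two bounds, using constants $0<c_1\le c_2$ with $c_1 \log(KD)/D \le \underline{S} \le c_2\log(KD)/D$. Since $\underline{S} \to 0$ as $D\to\infty$ (because $K$ grows at most polynomially in $D$, so $\log(KD) = \mathcal{O}(\log D)$), we have $\tfrac{\underline{S}}{1+\underline{S}} = \underline{S}(1+o(1))$. Hence
\begin{align*}
\tfrac12 \log \tfrac{\underline{S}}{1+\underline{S}} = -\tfrac12 \log D + \tfrac12 \log\log(KD) + \mathcal{O}(1).
\end{align*}
For (i), with $T$ fixed, the right-hand side tends to $-\infty$. So $T + \tfrac12\log\tfrac{\underline{S}}{1+\underline{S}} < 0$ for all $D$ large (enlarging $D^\dagger$ if necessary), which gives $t^\dagger = 0$. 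The interval in Theorem \ref{thm:NA_transition_est} then becomes $[0, T-\varepsilon]$. Equivalently, in SNR terms, $\snr{T} = m_T^2/\sigma_T^2 = 1/(e^{2T}-1)$ is a fixed positive constant that eventually exceeds $\underline{S}\to 0$.

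For (ii), with $T \ge \tfrac12 \log D$, the same expansion gives
\begin{align*}
T + \tfrac12 \log \tfrac{\underline{S}}{1+\underline{S}} \ge \tfrac12 \log\log(KD) + \tfrac12\log c_1 - o(1) \ge \tfrac12 \log \log D - \mathcal{O}(1),
\end{align*}
using $K \ge 2$. This is positive and diverging, so $t^\dagger \gtrsim \log\log D$, and the mixing phase $[0,t^\dagger)$ does not vanish.

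The only delicate point is that the $\Theta$ constants in $\underline{S}$ must be uniform in $D$ (they may depend on $q,\eta,\varepsilon$). I would make this explicit by citing the two-sided bound on $\underline{S}$ from the proof of Theorem \ref{thm:NA_transition_est}. I also need $\log(KD)/D \to 0$, which holds under polynomial growth of $K$. I would state that assumption explicitly in (i), since it is the step where the conclusion could fail if $K$ grew exponentially in $D$.
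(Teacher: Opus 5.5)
Your argument is correct and matches the paper's proof: both read $t^\dagger$ off the two-sided bound $\underline{S} = \Theta(\log(KD)/D)$. This gives $T + \tfrac12\log\tfrac{\underline{S}}{1+\underline{S}} \to -\infty$ for fixed $T$, and $T + \tfrac12\log\tfrac{\underline{S}}{1+\underline{S}} \ge \tfrac12 \log\tfrac{D\underline{S}}{1+\underline{S}} \gtrsim \log\log D$ when $T \ge \tfrac12\log D$. Your remark that one needs $\log(KD)/D \to 0$ is a fair clarification rather than a gap: the paper leaves this implicit, and it is already needed for the discriminant argument in the proof of Theorem \ref{thm:NA_transition_est}.
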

We remark that for $T \ge \tfrac{1}{2} \log D$, we have $\snr{T} \le 2/D, \, D \ge 2$, and so $\snr{T} \le \underbar{S}$ for any sufficiently large $D$. The choice of time interval $T \ge \tfrac{1}{2} \log D$ is also reasonable in reducing the gap between $\for{p}{T}$ and $\mathcal{N} (0, I_D)$ so that the forward noising distribution at time $T$ is sufficiently close to the normal distribution. See also Theorem \ref{thm:main} below. \\

\noindent 
\textit{Proof of Corollary \ref{cor:T_scale}.} The first statement (i) is immediate by noticing that $T + \tfrac{1}{2} \log \tfrac{\underline{S}}{1 + \underline{S}} \to - \infty$ as $D \to \infty$ given \eqref{eq:threshold_scale}. For (ii), $T \ge \tfrac{1}{2} \log D$ with (\ref{eq:threshold_scale}) gives $T + \tfrac{1}{2} \log \tfrac{\underline{S}}{1 + \underline{S}} \ge \tfrac{1}{2} \log \tfrac{D \underline{S}}{1 + \underline{S}} \gtrsim \log \log D$ for any sufficiently large $D$. The proof is now complete.  
\subsection{KL Convergence Analysis -- Intrinsic Dimension Adaptivity via Posterior Concentration}
\label{sec:KL}
Given the above arguments on the dynamical regimes, we now argue how the time-discretisation error scales with the ambient dimension $D$ and the intrinsic dimensionality $M_i$ involved in each cluster specified in Assumption \ref{assump:cov}. 
%
%
%
%
%

Let $\varepsilon \in (0,1)$ be a fixed early stopping. We write the time-discretisation schedule $\{ t_i \}_{i \in [N]}$ satisfying $0 = t_0 < t_1 < \cdots < t_N = T - \varepsilon$ with $T \ge 1$ and the number of steps $N \in \mathbb{N}$. We define the time-discretised denoising sampler : 
\begin{align*}
d \back{Y}{t} = \Bigl(\back{Y}{t} - 2  \tfrac{\back{Y}{t}}{\sigma_{T-t}^2} + 2 
\tfrac{m_{T-t}}{\sigma_{T-t}^2} \hat{\mu} ({T-t_i}, \back{Y}{t_i})  \Bigr) dt + \sqrt{2} d {B}_t, \qquad t \in [t_i, t_{i+1}],   
\end{align*} 
with $\back{Y}{0} \sim \mathcal{N} (0, I_D)$, where $\hat{\mu}_{T-t_i} (y)$ is an estimator of the true posterior mean $\mu_{T-t_i} (y) = \mathbb{E} [\for{X}{0} | \for{X}{T - t_i} = y] $. We write $\for{\mathbb{P}}{t}$ and $\back{\mathbb{Q}}{t_i}$ as the law of $\for{X}{t}$ and $\back{Y}{t_i}$, respectively and study the upper bound for $\KL{\for{\mathbb{P}}{\varepsilon}}{\back{\mathbb{Q}}{t_N}}$. 
%
%
We then need to decide the schedule for the time discretisation. In this work, we adopt the following clear and simple update rule for grid points: for a precision parameter $\tau \in (0,1/4]$, 
\begin{align}
\begin{aligned} \label{eq:t_schedule}
t_ 0  & = 0; \\ 
t_{j+1} & = \min \bigl\{ t_j + \tau \sigma_{T-t_{j}}^2, \, T - \varepsilon \bigr\}, \qquad j \in \{0, \ldots, N-1\}  
\end{aligned} 
\end{align}
%
where $N$ is the minimum integer s.t. $t_{N-1} + \tau \sigma_{T-t_{N-1}}^2 \ge T - \varepsilon$. 
The design of the schedule is determined by the local noise $\sigma_{T-t_j}^2$, 
but is independent of the geometry of multimodal distributions. Let $h_{j+1} = t_{j+1} - t_j, \, j = 0, 1, \ldots, N-1$, then this schedule has two properties. First, we have $h_{j+1} \le 2 \tau \min \{1, T - t_j\}$, which aligns with the exponential-decreasing-type schedules often employed in convergence analyses; see, e.g., \cite{ben:24, pota:25, Huang:26}. The second property is that the number of time-steps $N$ increases as $1/\tau$, as shown in the following lemma, whose proof is postponed to Appendix \ref{sec:pf_N_scale}: 
\begin{lemma} \label{lemma:complexity_N}
For a given precision parameter $\tau \in (0, 1/4]$, the number of steps $N$ specified in the time-discretisation schedule (\ref{eq:t_schedule}) satisfies 
\begin{align} \label{eq:N_vs_tau} 
N
= \Theta \Bigl(1 +  \tfrac{1}{\tau} 
\bigl( T - 1 + \log \tfrac{1}{\varepsilon} \bigr)  \Bigr).  
\end{align}
\end{lemma}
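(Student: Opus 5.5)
The plan is to split the schedule \eqref{eq:t_schedule} into two regimes according to the size of the remaining time $T-t_j$, and count steps separately in each. Write $s_j = T - t_j$ for the forward time. Then $s_0 = T$, $s_N = \varepsilon$, and the recursion reads $s_{j+1} = \max\{ s_j - \tau \sigma_{s_j}^2, \varepsilon\}$ with $\sigma_s^2 = 1 - e^{-2s}$. The key elementary fact is that $\sigma_s^2$ is comparable to $\min\{1, s\}$. Concretely,
\begin{align*}
(1-e^{-2}) \min\{1,s\} \le \sigma_s^2 \le \min\{1, 2s\}, \qquad s \ge 0,
\end{align*}
which follows from concavity of $s \mapsto 1-e^{-2s}$. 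This also gives the property $h_{j+1} \le 2\tau \min\{1, T-t_j\}$ quoted in the text.

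\emph{Phase $s_j \ge 1$.} Here the step size $h_{j+1} = \tau \sigma_{s_j}^2$ lies in $[(1-e^{-2})\tau, \tau]$. Let $N_1$ be the number of steps taken while $s_j \ge 1$.
\begin{itemize}
\item Since each step is at most $\tau$, covering the interval $[1, T]$ requires $N_1 \ge (T-1)/\tau$.
\item Since each step is at least $(1-e^{-2})\tau$, we get $N_1 \le 1 + (T-1)/((1-e^{-2})\tau)$.
\end{itemize}
So $N_1 = \Theta(1 + (T-1)/\tau)$. The additive $1$ also covers the case $T$ close to $1$.

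\emph{Phase $\varepsilon < s_j < 1$.} Now $\sigma_{s_j}^2 \in [(1-e^{-2}) s_j, 2 s_j]$, so the update is multiplicative:
\begin{align*}
s_j (1 - 2\tau) \le s_{j+1} \le s_j \bigl(1 - (1-e^{-2})\tau\bigr),
\end{align*}
apart from the final truncation at $\varepsilon$. The condition $\tau \le 1/4$ guarantees $1-2\tau \ge 1/2 > 0$, so the lower factor is meaningful.

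Let $N_2$ be the number of steps in this phase. Starting from some $s \in [1-2\tau, 1)$ (the first iterate below $1$, by the same bound), we reach $\varepsilon$ after $N_2$ steps with
\begin{align*}
\frac{\log(1/\varepsilon) - \log 2}{-\log(1-2\tau)} \le N_2 \le 1 + \frac{\log(1/\varepsilon)}{-\log\bigl(1-(1-e^{-2})\tau\bigr)}.
\end{align*}
Using $c\tau \le -\log(1-c\tau) \le 2c\tau$ for $c\tau \le 1/2$ converts both bounds to $\Theta(1 + \tau^{-1}\log(1/\varepsilon))$. When $\log(1/\varepsilon)$ is small the lower bound degenerates, but then the additive $1$ dominates, so we still need $N \ge 1$.

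\emph{Combining.} Summing $N = N_1 + N_2$ gives \eqref{eq:N_vs_tau}, since both $T-1 \ge 0$ and $\log(1/\varepsilon) > 0$.

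\emph{Main obstacle.} The only delicate point is the bookkeeping at the boundaries: the single step that crosses $s = 1$, and the final truncated step. Each contributes $\mathcal{O}(1)$ steps. The lower bound in the second phase must be stated with the additive constant absorbed, using $\Theta(1 + \cdot)$, because $\log(1/\varepsilon)$ can be as small as we like when $\varepsilon$ is close to $1$. I would handle this by proving the upper and lower bounds separately, keeping explicit constants $c_1 = 1-e^{-2}$ and $c_2 = 2$ throughout.
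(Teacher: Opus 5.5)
Your two-regime count is a legitimate alternative route, and your upper bound is fine as written. The lower bound, however, has a gap exactly at the point you flagged.

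By weakening the starting point $s_{N_1}\ge 1-2\tau$ to $s_{N_1}\ge 1/2$, you lose $\log 2/(-\log(1-2\tau))\ge \log 2/(4\tau)$ steps. This deficit is \emph{not} absorbed by the additive $1$ in $\Theta(1+\cdot)$, because it is unbounded as $\tau\to0$. Concretely, for $T=1$ and any $\varepsilon\in[1/2,1)$, your bounds give only $N_1\ge 1$ and $N_2\ge 0$. The lemma, by contrast, requires $N\gtrsim \tau^{-1}\log(1/\varepsilon)$, and this is true: for $\varepsilon=1/2$ the schedule must cross $[1/2,1]$ with steps of length at most $\tau$. The regime in which the additive $1$ genuinely dominates is $\log(1/\varepsilon)\lesssim\tau$, not $\varepsilon\ge 1/2$.

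The repair is to keep the factor you discarded.
\begin{itemize}
\item Every phase-two step, including the truncated last one, satisfies $s_{j+1}\ge(1-2\tau)s_j$.
\item Whenever $N_2\ge 1$, we have $s_{N_1}\ge 1-\tau$. If $N_2=0$, then $\varepsilon\ge 1-\tau$, so $\log(1/\varepsilon)\le 2\tau$ and the bound below holds trivially.
\item Hence $\varepsilon=s_N\ge(1-2\tau)^{N_2+1}$, and using $-\log(1-2\tau)\le 4\tau$ this gives $N_2\ge \log(1/\varepsilon)/(4\tau)-1$.
\item Combine this with $N_1\ge\max\{1,(T-1)/\tau\}$, where the $1$ holds because $s_0=T\ge 1$.
\end{itemize}
This yields $N\ge\max\{1,x-1\}\ge(1+x)/3$ with $x=(T-1+\log(1/\varepsilon))/(4\tau)$, which is the required lower bound.

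For comparison, the paper avoids the phase split and all boundary bookkeeping by using a single invariant. The clock increment $\int_{t_j}^{t_{j+1}}\mathrm{d}s/\sigma_{T-s}^2$ lies in $[\tau,2\tau]$ for every step except the last. This follows from monotonicity of $s\mapsto 1/\sigma_{T-s}^2$ and Lemma~\ref{lemma:noise_ratio_bd}. Consequently $\tau(N-1)\le\int_0^{T-\varepsilon}\mathrm{d}s/\sigma_{T-s}^2\le 2\tau N$. The integral equals $T-\varepsilon+\tfrac12\log\tfrac{1-e^{-2T}}{1-e^{-2\varepsilon}}=\Theta(T-1+\log\tfrac1\varepsilon)$ in closed form. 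Your route gives explicit per-regime constants and shows where the steps are spent, but it needs exactly the boundary care that slipped here.
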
 
%

To state our main result, we introduce the following assumption regarding the accuracy of score estimation. 
\begin{assump}[Score Estimate Accuracy] \label{assump:score}
There exists $\varepsilon_{\mathrm{score}} > 0$ such that
\begin{align*}
\frac{1}{T} \sum_{j = 0}^{N-1} h_{j+1} \times 
\mathbb{E} \bigl[ \| \nabla \log p_{T-t_j} (\back{X}{t_j}) 
- \hat{s} (T-t_j, \back{X}{t_j}) \|^2   \bigr] 
\le \varepsilon_{\mathrm{score}}^2, 
\end{align*}
where 
\begin{align} \label{eq:tweedie}
\hat{s} (T-t_j, x) \equiv - \frac{x - m_{T-t_j} \hat{\mu}{(T-t_j, x)}}{\sigma_{T-t_j}^2}, \qquad x \in \mathbb{R}^D.  
\end{align} 
\end{assump} 
Treating the accuracy as a black box, as per this assumption, is commonly adopted in diffusion model convergence analysis \citep{ben:24, pota:25, Huang:26}.    
We now have the following KL convergence guarantee: 
\begin{theorem}[KL Convergence under Dynamical Regimes] \label{thm:main}
Let $T \ge \max \bigl\{1, \tfrac{1}{2} \log D \bigr\}$ and $\varepsilon \in (0,1)$ be a fixed early stopping parameter. Consider the time-discretisation schedule (\ref{eq:t_schedule}) with $\tau \in (0, 1/4]$. Let Assumptions \ref{assump:prior}, \ref{assump:cov}, \ref{assump:mean}, \ref{assump:non-concentration} and  \ref{assump:score} hold. Also, assume $\mathbb{E}_{\data} [\| X \|^2] < \infty$, $K \lesssim D^\gamma$ for a fixed $\gamma > 0$ and $\varepsilon \gtrsim R / T$, where $R \in [0, 1)$ is found in Assumption \ref{assump:cov}. 
Set $M^\dagger = \max_{k \in [K]} M_k$. 
Then, there exists $D^\dagger \in \mathbb{N}$ such that for all $D \ge D^\dagger$, we have $\kl  (\for{\mathbb{P}}{\varepsilon} || \back{\mathbb{Q}}{t_N}) \lesssim \mathcal{E}_1 + \mathcal{E}_2 + \mathcal{E}_3$, where 
\begin{align} 
\mathcal{E}_1 \equiv \tau  M^\dagger \Bigl( T  + \log \tfrac{1}{\varepsilon}\Bigr), \qquad 
\mathcal{E}_2 \equiv T \varepsilon_{\mathrm{score}}^2, \qquad   
\mathcal{E}_3 \equiv \bigl( D + \mathbb{E}_{\data} [\| X \|^2] \bigr) e^{-2T}. \label{eq:main}   
\end{align}
%
\end{theorem}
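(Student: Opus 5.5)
We follow the standard Girsanov-based decomposition used in \citep{ben:24, pota:25}. By the data-processing inequality and the chain rule for KL, $\KL{\for{\mathbb{P}}{\varepsilon}}{\back{\mathbb{Q}}{t_N}}$ is bounded by the KL between path measures of the exact reverse SDE (\ref{eq:reverse_sde}) and the sampler $\back{Y}{}$. This splits into three terms. The first is the initialisation error $\KL{\for{p}{T}}{\mathcal{N}(0,I_D)} \lesssim (D + \mathbb{E}_{\data}[\|X\|^2]) e^{-2T}$, which is the classical OU convergence bound and gives $\mathcal{E}_3$. The second is the score-estimation error $\sum_j h_{j+1}\mathbb{E}\|\nabla\log p_{T-t_j}(\back{X}{t_j}) - \hat s(T-t_j,\back{X}{t_j})\|^2 \le T\varepsilon_{\mathrm{score}}^2$ from Assumption \ref{assump:score}, which gives $\mathcal{E}_2$. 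The third is the discretisation error
\[
\sum_{j=0}^{N-1}\int_{t_j}^{t_{j+1}} \mathbb{E}\bigl\|\nabla\log \for{p}{T-t}(\back{X}{t}) - \nabla\log\for{p}{T-t_j}(\back{X}{t_j})\bigr\|^2 dt .
\]
Only this last term needs a new argument.

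For the discretisation term, we follow the It\^o/martingale argument of \cite{ben:24}. Write $\nabla\log\for{p}{T-t}(\back{X}{t}) = \sigma_{T-t}^{-2}(m_{T-t}\mu_{T-t}(\back{X}{t}) - \back{X}{t})$; the posterior mean process is a martingale along the reverse path. The local error on $[t_j,t_{j+1}]$ is then controlled by $h_{j+1}$ times $\sigma^{-4}$-weighted increments of $\mathbb{E}\,\tr\,\mathrm{Cov}(\for{X}{0}\mid\for{X}{T-t})$. Summing with the schedule (\ref{eq:t_schedule}), using $h_{j+1}\le 2\tau\min\{1,T-t_j\}$ and a telescoping argument, yields a bound of order $\tau\sum_j h_{j+1}\,\Phi(t_j)$. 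Here $\Phi(t)$ is of order $\mathrm{SNR}_{T-t}\,\mathbb{E}\,\tr\,\mathrm{Cov}(\for{X}{0}\mid \for{X}{T-t})$, plus an endpoint term. In the generic case $\Phi(t)\lesssim D$, which is the ambient-dimension bound; the aim is to replace $D$ by $M^\dagger$.

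To do this we split the time interval at $t^\dagger$ from Corollary \ref{cor:T_scale}, with $\beta=D^{-1}$ in (\ref{eq:snr_transition}), so that the threshold SNR is of order $\log(KD)/D = \mathcal{O}(\log D/D)$ using $K\lesssim D^\gamma$.

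\emph{Phase I} ($t\le t^\dagger$). We have $\mathrm{SNR}_{T-t}\lesssim \log D/D$. The posterior covariance of a GMM decomposes (law of total variance) as $\sum_k W^{[k]}\mathrm{Cov}_k + \mathrm{Cov}_W(\text{cluster means})$. The within-cluster part has trace $\le \tr\Sigma_k \le M_k\Ctwo D^\alpha+\nu_k$. We handle this better by noting that $\tr\bigl((\Sigma_k^{-1}+\mathrm{SNR}\,I)^{-1}\bigr)$-type quantities give $\mathrm{SNR}\cdot\tr(\cdot)\lesssim M_k+\nu_k$. The between-cluster part has trace $\lesssim D$ by Assumption \ref{assump:mean}, and after multiplying by the SNR it becomes $\mathcal{O}(\log D)$. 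The contribution is therefore $\mathcal{O}(\tau(M^\dagger+\log D)\cdot(\text{Phase I length}))$, which is absorbed into $\mathcal{E}_1$ up to the stated logarithmic factor (the $\lesssim$ allows a $\log D\lesssim T$ factor since $T\ge\frac12\log D$).

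\emph{Phase II} ($t\ge t^\dagger$). On the high-probability event of Theorem \ref{thm:NA_transition_est}, $W^{[i]}\le D^{-1}$ for $i\ne j$. Here the score is the single-cluster Gaussian score up to a cross term weighted by $\sum_{i\ne j}W^{[i]}\|\nabla\log p^{[i]}-\nabla\log p^{[j]}\|$. The single-Gaussian contribution is computed exactly. Its posterior covariance is $\sigma^2 m^{-2}\cdot$ (eigenvalues $\lambda_k\,\mathrm{SNR}/(1+\lambda_k\mathrm{SNR})$), so the $\sigma^{-4}$-weighted trace is $\lesssim M_j + \nu_j/\sigma^2_{T-t}$. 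The condition $\varepsilon\gtrsim R/T$ together with $\nu_j\le RM_j$ keeps the $\int \nu_j\sigma^{-2}$ tail within $M_j(T+\log\frac1\varepsilon)$. The cross terms are multiplied by $D^{-1}$ while being only polynomial in $D$ in second moment. On the complementary event of probability $D^{-q}$ we use crude bounds (finite second moments, $\sigma_{T-t}^{-2}\le\varepsilon^{-1}$), which are negligible for $q$ large.

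The main obstacle will be Phase II. Posterior concentration in Theorem \ref{thm:NA_transition_est} is pointwise in $t$ and conditional on the terminal cluster, whereas the Girsanov bound needs control in $L^2$ along the whole path and involves the time derivative of the score. We will first try to work with $\mathbb{E}\,\tr\,\mathrm{Cov}$ only, which is a pointwise-in-time quantity, through the \cite{ben:24} telescoping identity. This way a union bound over the $N$ grid points, with $N$ polynomial in $D$ by Lemma \ref{lemma:complexity_N}, suffices after choosing $q$ large. We then bound the off-event contributions via Cauchy–Schwarz with fourth moments of the GMM, which are polynomial in $D$.
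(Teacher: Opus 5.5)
Your architecture matches the paper's: the Girsanov split, telescoping via Lemma~\ref{lemma:tr_diff} into $\tau\sum_j\Delta_j\,\mathbb{E}[\tr\Sigma_{T-t_j}]$, the within/between split of Lemma~\ref{lemma:PC}, a cut at $\mathrm{SNR}\approx L\log(KD)/D$, concentration after the cut, and a crude bound off the event. However, three steps fail as written.

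First, your discretisation term is the one for a scheme that freezes the whole score. The sampler in the theorem freezes only $\hat\mu$ and integrates $-\back{Y}{t}/\sigma_{T-t}^2$ exactly. The Girsanov integrand is therefore $\tfrac{m_{T-t}^2}{\sigma_{T-t}^4}\|\mu(T-t,\back{X}{t})-\hat\mu(T-t_j,\back{X}{t_j})\|^2$. This is not cosmetic. The difference $\nabla\log\for{p}{T-t}(\back{X}{t})-\nabla\log\for{p}{T-t_j}(\back{X}{t_j})$ contains $\back{X}{t}/\sigma_{T-t}^2-\back{X}{t_j}/\sigma_{T-t_j}^2$. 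Its Brownian increment in general contributes order $Dh_{j+1}^2/\sigma_{T-t_j}^4$ per step, i.e.\ $\tau D(T+\log\frac1\varepsilon)$ in total. Only the denoiser difference reduces, via the posterior-mean martingale, to increments of $\mathbb{E}[\tr\Sigma_{T-t}]$ with no ambient-dimensional remainder.

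Second, Phase~I. The claim $\tr\Sigma^{II}_{T-t}\lesssim D$ is false pointwise, because the $\mu^{[k]}(t,y)$ are affine in $y$ with different slopes. It does hold in expectation by the law of total variance: $\mathbb{E}[\tr\Sigma^{II}_{T-t}(\for{X}{T-t})]\le\mathbb{E}[\tr\Sigma_{T-t}(\for{X}{T-t})]\le\tr\mathrm{Cov}(\for{X}{0})\lesssim D+M^\dagger D^\alpha$. That is a genuinely simpler route than the paper's Lemma~\ref{lemma:max_chi_squared} and avoids its $\log K$ terms. But multiplying $\mathrm{SNR}\cdot D\lesssim\log D$ by the length of Phase~I gives $\tau T\log D$, which is not $\lesssim\mathcal{E}_1$. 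The condition $T\ge\frac12\log D$ absorbs an additive $\tau\log D$, not a multiplicative one. Use instead that the weights telescope, $\snr{1}+\sum_{1\le j<N_1}\Delta_j=\snr{N_1}\lesssim\log(KD)/D$. Then the whole between-cluster Phase~I contribution is $\lesssim\tau\log(KD)\lesssim\tau T$.

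Third, Phase~II with $\beta=D^{-1}$. Under label $i$, $\tr\Sigma^{II}\le\sum_{k\neq i}W^{[k]}\|\mu^{[k]}-\mu^{[i]}\|^2$. Moreover $D_k^{[i]}(j)=\mathbb{E}\|\mu^{[k]}-\mu^{[i]}\|^2$ is of order $D$ at least while $\mathrm{SNR}=\mathcal{O}(1)$. The reason is Assumption~\ref{assump:non-concentration}: all but an $\mathcal{O}(M_k)$ part of $\|\mu_i-\mu_k\|^2$ lies in eigendirections of $\Sigma_k$ with eigenvalue at most $\nu_k$. So $W^{[k]}\le\beta$ only yields a contribution of order $\tau\beta KD$ from that range, i.e.\ of order $\tau K$ for $\beta=D^{-1}$. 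That is polynomial in $D$ when $K\asymp D^\gamma$: ``polynomial in $D$ times $D^{-1}$'' is not negligible. You need $\beta=(KD)^{-c}$ with $c>1$; the second part of Theorem~\ref{thm:NA_transition_est} still gives a $\Theta(\log(KD)/D)$ threshold after enlarging $L$. Alternatively, as the paper does, use $W^{[k]}\le\exp(-\eta\,\mathrm{KL})$ combined with Lemmas~\ref{lemma:second_moment_bd} and~\ref{lemma:klm_lbd}. This produces factors $(KD)^{-cL}$ and $\exp(-\widetilde c D^{2-\alpha})$ that beat the polynomial prefactors.

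No union bound over grid points is needed, since each summand is an expectation at a single $t_j$ and can be split on its own event. A union bound would in fact hurt, because $N=\Theta(1+\tau^{-1}(T-1+\log\frac1\varepsilon))$ is not polynomial in $D$ uniformly in $\tau$.

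Your Cauchy--Schwarz treatment of the complement is otherwise fine and simpler than the paper's Donsker--Varadhan step. Conditional Jensen gives $(\tr\Sigma_{T-t})^2\le\mathbb{E}[\|\for{X}{0}-c\|^4\mid\for{X}{T-t}]$, a fourth moment polynomial in $D$. Conditioning on the label costs at most a further factor $1/\pi_i\lesssim K$.
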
 
The proof of Theorem \ref{thm:main} is postponed to Section \ref{sec:pf_main}. The bound \eqref{eq:main} shows the three sources of errors -- $\mathcal{E}_1$: time-discretisation error; $\mathcal{E}_2$: score-estimation error; and $\mathcal{E}_3$: prior gap -- which aligns well with the standard theoretical results in the literature. The main highlight is that the discretisation error scales with the intrinsic dimension $M^\dagger$, 
rather than the ambient dimension $D$, despite the $\mathcal{O}(D)$ inter-class separation, while the number of mixture components $K$ is allowed to grow with $D$ polynomially. 
Moreover, combining (\ref{eq:N_vs_tau}) with Theorem \ref{thm:main} gives the following complexity bound:
\begin{corollary}
Consider the setting of Theorem \ref{thm:main} and set $A \equiv M^\dagger \bigl( T  + \log \tfrac{1}{\varepsilon}\bigr)$. For any $\chi \in \bigl(0, A/4 \bigr]$, choosing $\tau = \chi / A \in (0, 1/4]$ leads to $\mathcal{E}_1 = \chi$, where the number of steps $N$ satisfies 
\begin{align*}
N = \mathcal{O} \left( \tfrac{ M^\dagger}{\chi} \Bigl( T + \log \tfrac{1}{\varepsilon} \Bigr)^2 \right).   
\end{align*}     
\end{corollary}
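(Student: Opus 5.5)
The corollary follows from Theorem \ref{thm:main} and Lemma \ref{lemma:complexity_N} by substituting the stated choice of $\tau$; no new probabilistic argument is needed.

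\emph{Step 1: admissibility of $\tau$ and the value of $\mathcal{E}_1$.} Since $\chi \in (0, A/4]$ and $A>0$ (because $M^\dagger \ge 1$, $T \ge 1$ and $\varepsilon<1$), the choice $\tau = \chi/A$ lies in $(0,1/4]$. Hence the schedule (\ref{eq:t_schedule}) is well defined, and Theorem \ref{thm:main} applies with this $\tau$. Substituting into (\ref{eq:main}) gives
\begin{align*}
\mathcal{E}_1 = \tau M^\dagger \bigl(T + \log \tfrac{1}{\varepsilon}\bigr) = \tfrac{\chi}{A}\, A = \chi .
\end{align*}

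\emph{Step 2: the step count.} Insert $1/\tau = A/\chi$ into (\ref{eq:N_vs_tau}) from Lemma \ref{lemma:complexity_N}. Only the upper half of the $\Theta$ is needed:
\begin{align*}
N \lesssim 1 + \tfrac{A}{\chi}\bigl(T - 1 + \log \tfrac{1}{\varepsilon}\bigr) \le 1 + \tfrac{M^\dagger}{\chi}\bigl(T + \log \tfrac{1}{\varepsilon}\bigr)^2 .
\end{align*}
The second inequality uses $T-1+\log(1/\varepsilon) \le T+\log(1/\varepsilon)$.

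\emph{Step 3: absorbing the additive constant.} The $1$ is dominated by the second term because $\chi \le A/4$ gives $A/\chi \ge 4$. Combined with $T + \log(1/\varepsilon) \ge 1$, this yields
\begin{align*}
\tfrac{M^\dagger}{\chi}\bigl(T+\log\tfrac1\varepsilon\bigr)^2 = \tfrac{A}{\chi}\bigl(T+\log\tfrac1\varepsilon\bigr) \ge 4 .
\end{align*}
So the additive $1$ can be absorbed into the $\mathcal{O}$-constant, which gives the claimed bound.

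There is no genuine obstacle here. The one point to check is that the $\Theta$ constants in Lemma \ref{lemma:complexity_N} do not depend on $\tau$, $T$ or $\varepsilon$, so that they pass unchanged through the substitution $\tau=\chi/A$.
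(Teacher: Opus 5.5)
Your proposal is correct and matches what the paper intends; the paper gives no separate proof and simply says the corollary follows from combining (\ref{eq:N_vs_tau}) with Theorem \ref{thm:main}, i.e.\ substituting $\tau=\chi/A$ into $\mathcal{E}_1$ and into the upper half of Lemma \ref{lemma:complexity_N}, with the additive $1$ absorbed via $A/\chi\ge 4$. Your caveat about the constants is also justified: the proof of Lemma \ref{lemma:complexity_N} gives $N\le 1+\tau^{-1}\int_\varepsilon^T(1-e^{-2u})^{-1}\,du\le 1+\tau^{-1}(1-e^{-2})^{-1}\bigl(T-1+\log\tfrac{1}{\varepsilon}\bigr)$, where the constant is absolute.
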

\begin{remark}
The requirement for the early stopping criterion $\varepsilon \gtrsim R / T$ gives insight into the role of $\varepsilon$ and its relation to the geometry of the data. When $\nu^\dagger = \max_{k \in [K]} \nu_k$ is exactly 0 (each cluster's covariance is exactly low-rank) or nearly 0, i.e., data within clusters concentrates on each thin subspace, then the corresponding $R$ is exactly or nearly 0, and one can choose a very small $\varepsilon$.
If $R$ is large, i.e., clusters no longer have a low-dimensional structure, then one instead obtains an error bound depending on $D$ linearly without relying on such a requirement for $\varepsilon$. 
\end{remark}
\subsection{Overview of the Proof of Theorem \ref{thm:main}} 
\label{sec:poc}

We briefly outline the main arguments for the proof of Theorem \ref{thm:main} and highlight how the phase transition analysis is employed to obtain the intrinsic-dimension ($M^\dagger$) scaling in the time-discretisation error, even when the number of clusters $K$ grows polynomially with $D$. 

The proof consists of five main steps. The first step in Section \ref{sec:pf_step_1} uses standard arguments \citep{ben:24, conf:25, Huang:26} to bound $\kl  (\for{\mathbb{P}}{\varepsilon} || \back{\mathbb{Q}}{t_N})$ as the sum of three terms that correspond to the error from approximating the score, the error in how we initiate the denoiser and the time-discretisation error. The main novelty in our proof is how we bound the time-discretisation error, which is covered in Steps 2 to 4 and outlined below. Step 5 then collects together all the error terms, and obtains the simplified bound presented in \eqref{eq:main}.  

In Step 2 (see Section \ref{sec:pf_step_2}) we provide an upper bound on the time-discretisation error involving the following term: 
$$
\tau \sum_{j = 1}^{N-1} \Bigl( \snr{T-t_{j+1}}  - \snr{T-t_j} \Bigr) \cdot \mathbb{E} \Bigl[ \mathrm{tr} \bigl(\Sigma_{T - t_j} (\for{X}{T-t_j}) \bigr) \Bigr],
$$
where $\Sigma_{T-t} (y) = \mathrm{Cov} [\for{X}{0} | \for{X}{T-t} = y], \, (t, y) \in [0, T) \times \mathbb{R}^D$, is the posterior covariance. Under the GMM framework, the posterior covariance can be decomposed as shown by the following lemma.
\begin{lemma} \label{lemma:PC}
It holds that for all $(t, y) \in [0, T) \times \mathbb{R}^D$, $\Sigma_{T-t} (y) = \Sigma_{T-t}^{I} (y) + \Sigma_{T-t}^{II} (y)$ with 
\begin{align}
\begin{aligned}  \label{eq:posterior_cov}
\Sigma_{T-t}^I (y)   
& = \sum_{k= 1}^K  W_D^{[k]} (t, y) \sigma_{T-t}^2 \Sigma_k (S_{T-t}^k)^{-1}; \\[0.1cm]
\Sigma_{T-t}^{II} (y) 
& = \frac{1}{2} \sum_{j,k \in [K]} W_D^{[j]} (t, y) W_D^{[k]} (t, y) \bigl( \mu^{[j]} (t, y) - \mu^{[k]} (t, y) \bigr)^{\otimes 2}, 
\end{aligned}
\end{align} 
where we have set 
\begin{align} \label{eq:posterior_mean_cluster}
\mu^{[j]} (t, y) = \mathbb{E} [ \for{X}{0} \, | \, \for{X}{T-t} = y, \, \for{X}{0} \sim P_{\mathrm{data}}^{[j]}] = \frac{\sigma_{T-t}^2}{m_{T-t}} \nabla \log \for{p}{T-t}^{[j]} (y)  + \frac{y}{m_{T-t}}. 
\end{align}
\end{lemma}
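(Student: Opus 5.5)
The plan is to use the law of total covariance with respect to the latent cluster label. Introduce a label $Z \in [K]$ with $\mathbb{P}(Z = k) = \pi_k$ and, given $Z = k$, $\for{X}{0} \sim \mathcal{N}(\mu_k, \Sigma_k)$. Conditionally on $\for{X}{T-t} = y$, the label has law $\mathbb{P}(Z = k \mid \for{X}{T-t} = y) = W_D^{[k]}(t,y)$; this is Bayes' rule together with the definition \eqref{eq:posterior_weight}. Then
\begin{align*}
\Sigma_{T-t}(y) = \mathbb{E}\bigl[\mathrm{Cov}[\for{X}{0} \mid \for{X}{T-t}=y, Z] \,\big|\, \for{X}{T-t}=y\bigr] + \mathrm{Cov}\bigl[\mathbb{E}[\for{X}{0} \mid \for{X}{T-t}=y, Z] \,\big|\, \for{X}{T-t}=y\bigr].
\end{align*}
The first term will give $\Sigma^I_{T-t}$ and the second will give $\Sigma^{II}_{T-t}$.

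For the first term, I would use Gaussian conjugacy within cluster $k$. The pair $(\for{X}{0}, \for{X}{T-t})$ is jointly Gaussian with $\mathrm{Cov}(\for{X}{0}, \for{X}{T-t}) = m_{T-t}\Sigma_k$ and $\mathrm{Cov}(\for{X}{T-t}) = S^k_{T-t}$. The conditional covariance is therefore
\begin{align*}
\Sigma_k - m_{T-t}^2 \Sigma_k (S^k_{T-t})^{-1} \Sigma_k = \Sigma_k (S^k_{T-t})^{-1}\bigl(S^k_{T-t} - m_{T-t}^2\Sigma_k\bigr) = \sigma_{T-t}^2 \Sigma_k (S^k_{T-t})^{-1}.
\end{align*}
This uses \eqref{eq:forward_noise_cov}, and it does not depend on $y$. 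Averaging over the label with weights $W_D^{[k]}$ gives $\Sigma^I_{T-t}$. Because $\Sigma_k$ may be only semidefinite, I would phrase the computation through the representation $\for{X}{0} = \mu_k + \Sigma_k^{1/2}\xi$. This requires nothing beyond invertibility of $S^k_{T-t}$, which holds for $t<T$.

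For the second term, the within-cluster posterior means are $\mu^{[k]}(t,y)$. The covariance of a discrete random vector taking value $\mu^{[k]}$ with probability $W^{[k]}$ is $\sum_k W^{[k]} (\mu^{[k]})^{\otimes 2} - \bigl(\sum_k W^{[k]}\mu^{[k]}\bigr)^{\otimes 2}$. Since $\sum_k W^{[k]} = 1$, expanding $\frac12\sum_{j,k} W^{[j]}W^{[k]}(\mu^{[j]}-\mu^{[k]})^{\otimes 2}$ gives exactly this expression, which yields $\Sigma^{II}_{T-t}$.

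Finally, I would verify the Tweedie-type formula \eqref{eq:posterior_mean_cluster} by applying Tweedie's formula to the cluster-$j$ forward marginal $\for{p}{T-t}^{[j]}$. The identity $\for{X}{T-t} = m_{T-t}\for{X}{0} + \sigma_{T-t}Z$ gives $\nabla\log \for{p}{T-t}^{[j]}(y) = -(y - m_{T-t}\,\mathbb{E}[\for{X}{0}\mid y, j])/\sigma_{T-t}^2$, which can be rearranged directly. I do not expect a real obstacle; the only point needing care is the singular-covariance case, which is handled as described above.
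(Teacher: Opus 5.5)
Your proposal is correct and follows essentially the paper's route: the paper writes out the law of total covariance over the cluster label by expanding $\mathbb{E}[(\for{X}{0})^{\otimes 2}\mid \for{X}{T-t}=y]$ and $\mathbb{E}[\for{X}{0}\mid \for{X}{T-t}=y]$ as $W_D^{[k]}$-weighted sums and regrouping the mean terms into the pairwise-difference form. It then applies the same Gaussian conditioning identity $\Sigma_k - m_{T-t}^2\Sigma_k (S_{T-t}^k)^{-1}\Sigma_k = \sigma_{T-t}^2\Sigma_k (S_{T-t}^k)^{-1}$; your explicit Tweedie check and your remark about semidefinite $\Sigma_k$ are small additions the paper leaves implicit.
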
  
The proof of Lemma \ref{lemma:PC} is provided in Appendix \ref{sec:pf_PC}. The term $\Sigma_{T-t}^I$ captures the local within-cluster geometry, whereas $\Sigma_{T-t}^{II}$ reflects the between-cluster geometry and may therefore scale with $D$ through differences between components.

In Section \ref{sec:pf_step_3} (Step 3), we show the upper bound on the term $\mathbb{E} \bigl[ \mathrm{tr} \bigl(\Sigma_{T - t_j}^I (\for{X}{T-t_j}) \bigr) \bigr]$ depends linearly on $M^\dagger$ but is independent of $K$. Section \ref{sec:pf_step_4} (Step 4) deals with the inter-class term $\mathbb{E} \bigl[ \mathrm{tr} \bigl(\Sigma_{T - t_j}^{II} (\for{X}{T-t_j}) \bigr) \bigr]$ by exploiting the phase transition. Roughly, we divide the time interval $[0, T-\varepsilon]$ into two regimes according to whether $\snr{T-t_j} \le L \bigl( \log (KD) \bigr) / D$ for an arbitrary fixed constant $L > 0$.  
During the first regime (mixing phase), the small SNR offsets the mismatches in the first and second moments that can grow polynomially with $D$. For the latter regime (cluster commitment phase), Theorem \ref{thm:NA_transition_est} implies that, conditional on a trajectory terminating in a cluster $k$, at least one of the two posterior weights $W_D^{[i]} (t_j, \for{X}{T-t_j}), \, W_D^{[j]} (t_j, \for{X}{T-t_j}), \, i \neq j$, is exponentially small in $D$ with high probability, thereby suppressing the inter-class difference. Furthermore, the dependence on $K$ is shown to be logarithmic via a finite exponential-moment bound for the maximum over the Gaussian components. 

\section{Numerical Experiments of Posterior Commitments in Real Data}
\label{sec:numerical_experiments}
So far, we have studied the intrinsic dimension adaptivity of diffusion models for target distributions formulated as a union of distinct (linear) manifolds within a Gaussian mixture framework. The key ingredient for showing such adaptivity in the KL error analysis (Theorem \ref{thm:main}) is that the scale of the SNR for posterior weights concentration is approximately $\Theta (1/D)$, as shown in Theorem \ref{thm:NA_transition_est} under cluster separation. This section provides empirical evidence that this SNR scale at posterior concentration can also be observed for high-dimensional data beyond the GMM setting.  
All technical details of the implementation can be found in Appendix \ref{sec:details_NE}, and the code to reproduce the experiments is available at \url{https://github.com/YugaIgu/diffusion_models_phase_transition}

\subsection{Set-Up}  \label{sec:set-up}
We consider two types of high-dimensional data, which are typical examples of the application of diffusion models: (a) Single cell RNA sequencing (scRNA-seq) of peripheral blood mononuclear cells (PBMCs) with $32738$ genes and (b) Image data (STL-10) consisting of a resolution of 96 by 96 pixels with 3 colour channels (RGB). For dataset (a), we vary the ambient dimension $D$ by retaining the top $D \in \{500,1000,2000,4000,8000\}$ highly variable genes (HVGs). 
For the STL-10 image data, we construct lower-resolution versions to obtain three resolutions: $D = 32 \times 32 \times 3$, $D = 64 \times 64 \times 3$, and $D = 96 \times 96 \times 3$ (raw data), by using bilinear interpolation, implemented in PyTorch. 
Then, we select three classes (labels), specifically, $\mathcal{C}_{\mathrm{PBMC}} = \{\mathrm{CD4\,T}, \mathrm{CD14 \!+ \, Monocytes}, \mathrm{CD19\!+ B}\}$ and $\mathcal{C}_{\mathrm{image}} = \{\mathrm{dog}, \mathrm{cat}, \mathrm{airplane}\}$ for datasets (a) and (b). 

We simulate the denoising SDE to sample from the target clusters. We directly use the score of the noised empirical distribution rather than a trained neural-network score, in the spirit of the exact-empirical-score setting considered in \cite{BBDM:24}. This allows us to isolate the dynamical phase transition from the uncertainty due to score estimation error. 
We perform the following four steps to observe the empirical SNR at the posterior weight concentration: 
\begin{itemize}[leftmargin=0.3cm]
\item[I.] \textit{Simulating trajectories.} We set the terminal time as $T = 6$ and simulate $M = 500$ independent backward trajectories $\{\bar{X}_{t_j}^{[k]}\}_{1 \le j \le n, 1 \le k \le M}$ using the Euler-Maruyama discretisation with a uniform time-step $h = t_{j+1} - t_j = 0.003$ and so the number of steps $n = T/h = 2000$. These are used as numerical approximations for the exact backward process. 
\item[II.] \textit{Labelling.} For each trajectory, we assign a label of the class from which the path eventually samples, which is obtained as the argument of the maximum of the posterior weight (\ref{eq:posterior_weight}):  
$
C^{[k]} \equiv \mathrm{argmax}_{c \in \mathcal{C}} \, W_D^{[c]} (t_J, \bar{X}_{t_{J}}^{[k]}), \, k = 1, \ldots, M, 
$ 
where $J = n - 10$ (to avoid numerical instability) and $\mathcal{C}$ is $\mathcal{C}_{\mathrm{PBMC}}$ or $\mathcal{C}_{\mathrm{image}}$.
\item[III.] \textit{Sorting.} Those paths are sorted within the three distinct classes as
$
\mathbb{X}^c = \bigl\{ \bigl\{ \bar{X}_{t_j}^{[k]}\bigr\}_{j = 1, \ldots, J} \, | \, C^{[k]} = c, \,  k = 1, \ldots, M \bigr\}, \, c \in \mathcal{C},  
$
that is, the set of paths eventually sampling from the class $c$. 
%
\item[IV.] \textit{Recording.} We then record the values of three types of posterior weights $W^{[c]}_D$, $c \in \mathcal{C}$, evaluated with the trajectories contained in $\mathbb{X}^{c^\dagger}$ for some target class $c^\dagger \in \mathcal{C}$ at each time-step $j = 1, \ldots, J$. 
\end{itemize}    
In the above procedure, the prior weights are assumed to be equal across the classes, and the (intractable) posterior weights are estimated by plugging an empirical approximation of the forward noised density into the definition of the weight (\ref{eq:posterior_weight}). 

\begin{table}[t]
\centering
\footnotesize
\caption{
Empirical covariance geometry for PBMC and STL-10 across ambient dimensions.
For each class, we report the maximum covariance eigenvalue (\(\lambda_{\max}\)), the trace (\(\tr(\widehat\Sigma)\)),
and the 90\% effective rank (\(r_{\mathrm{eff},90}\)), computed from the empirical data.}
\label{tab:classwise_cov_geometry}
\renewcommand{\arraystretch}{1.10}

\begin{minipage}{0.48\linewidth}
\centering
\textbf{PBMC}
\vspace{0.3em}

\begin{tabular}{@{} llrrr @{}}
\toprule
Class & \(D\)
& \(\lambda_{\max}\)
& \(\tr(\widehat\Sigma)\)
& \(r_{\mathrm{eff},90}\) \\
\midrule
\multirow{5}{*}{CD4 T}
& \(500\)  & \(3.42\)  & \(469.88\)  & \(313\) \\
& \(1000\) & \(5.41\)  & \(943.59\)  & \(507\) \\
& \(2000\) & \(9.85\)  & \(1807.21\) & \(686\) \\
& \(4000\) & \(16.30\) & \(3600.29\) & \(817\) \\
& \(8000\) & \(25.72\) & \(7465.22\) & \(903\) \\
\addlinespace

\multirow{5}{*}{CD14+ Mono.}
& \(500\)  & \(8.41\)  & \(485.49\)  & \(209\) \\
& \(1000\) & \(14.14\) & \(985.57\)  & \(289\) \\
& \(2000\) & \(23.32\) & \(2057.65\) & \(339\) \\
& \(4000\) & \(36.64\) & \(4001.79\) & \(372\) \\
& \(8000\) & \(57.85\) & \(7781.14\) & \(395\) \\
\addlinespace

\multirow{5}{*}{CD19+ B}
& \(500\)  & \(9.03\)  & \(554.19\)  & \(182\) \\
& \(1000\) & \(13.03\) & \(1050.05\) & \(229\) \\
& \(2000\) & \(22.68\) & \(2177.58\) & \(254\) \\
& \(4000\) & \(37.03\) & \(4723.49\) & \(272\) \\
& \(8000\) & \(55.90\) & \(9230.85\) & \(285\) \\
\bottomrule
\end{tabular}
\end{minipage}
\hfill
\begin{minipage}{0.48\linewidth}
\centering
\textbf{STL-10}
\vspace{0.3em}

\begin{tabular}{@{} llrrr @{}}
\toprule
Class & \(D\)
& \(\lambda_{\max}\)
& \(\tr(\widehat\Sigma)\)
& \(r_{\mathrm{eff},90}\) \\
\midrule
\multirow{3}{*}{Dog}
& \(3072\)  & \(160.69\)  & \(783.73\)  & \(145\) \\
& \(12288\) & \(638.36\)  & \(3022.54\) & \(130\) \\
& \(27648\) & \(1437.94\) & \(7061.82\) & \(154\) \\
\addlinespace

\multirow{3}{*}{Cat}
& \(3072\)  & \(132.58\)  & \(700.49\)  & \(150\) \\
& \(12288\) & \(531.07\)  & \(2679.13\) & \(134\) \\
& \(27648\) & \(1195.06\) & \(6308.94\) & \(163\) \\
\addlinespace

\multirow{3}{*}{Airplane}
& \(3072\)  & \(193.60\)  & \(756.55\)  & \(105\) \\
& \(12288\) & \(778.14\)  & \(2920.58\) & \(97\) \\
& \(27648\) & \(1750.48\) & \(6823.12\) & \(119\) \\
\bottomrule
\end{tabular}
\end{minipage}
\end{table}

\begin{table}[t]
\centering
\footnotesize
\caption{
Pairwise data geometry of the STL-10 clusters across image resolutions. For each pair and ambient dimension $D$, we report the squared mean separation,
the maximum mean difference projection defined in \eqref{eq:mean_nonconcentration}, and the squared Frobenius
covariance separation, computed from empirical data.  
}
\label{tab:stl10_pairwise_geometry}
\renewcommand{\arraystretch}{1.12}
\begin{tabular}{@{} llrrr @{}}
\toprule
Pair & \(D\)
& \(\|\widehat\mu_i-\widehat\mu_j\|^2\)
& \(M_\mu\)
& \(\|\widehat\Sigma_i-\widehat\Sigma_j\|_F^2\) \\
\midrule

\multirow{3}{*}{Dog--Cat}
& \(3072\)  & \(9.20\)   & \(2.09\)  & \(4.03{\times}10^3\) \\
& \(12288\) & \(35.10\)  & \(4.10\)  & \(5.98{\times}10^4\) \\
& \(27648\) & \(80.46\)  & \(6.17\)  & \(3.19{\times}10^5\) \\
\addlinespace

\multirow{3}{*}{Cat--Airplane}
& \(3072\)  & \(232.68\)  & \(12.81\) & \(1.35{\times}10^4\) \\
& \(12288\) & \(928.48\)  & \(25.59\) & \(2.12{\times}10^5\) \\
& \(27648\) & \(2091.26\) & \(38.39\) & \(1.09{\times}10^6\) \\
\addlinespace

\multirow{3}{*}{Dog--Airplane}
& \(3072\)  & \(189.10\)  & \(10.74\) & \(1.18{\times}10^4\) \\
& \(12288\) & \(753.39\)  & \(21.46\) & \(1.85{\times}10^5\) \\
& \(27648\) & \(1697.45\) & \(32.19\) & \(9.55{\times}10^5\) \\
\bottomrule
\end{tabular}
\end{table}

\begin{figure}[h]
\centering
\includegraphics[width=1.00\textwidth, keepaspectratio]{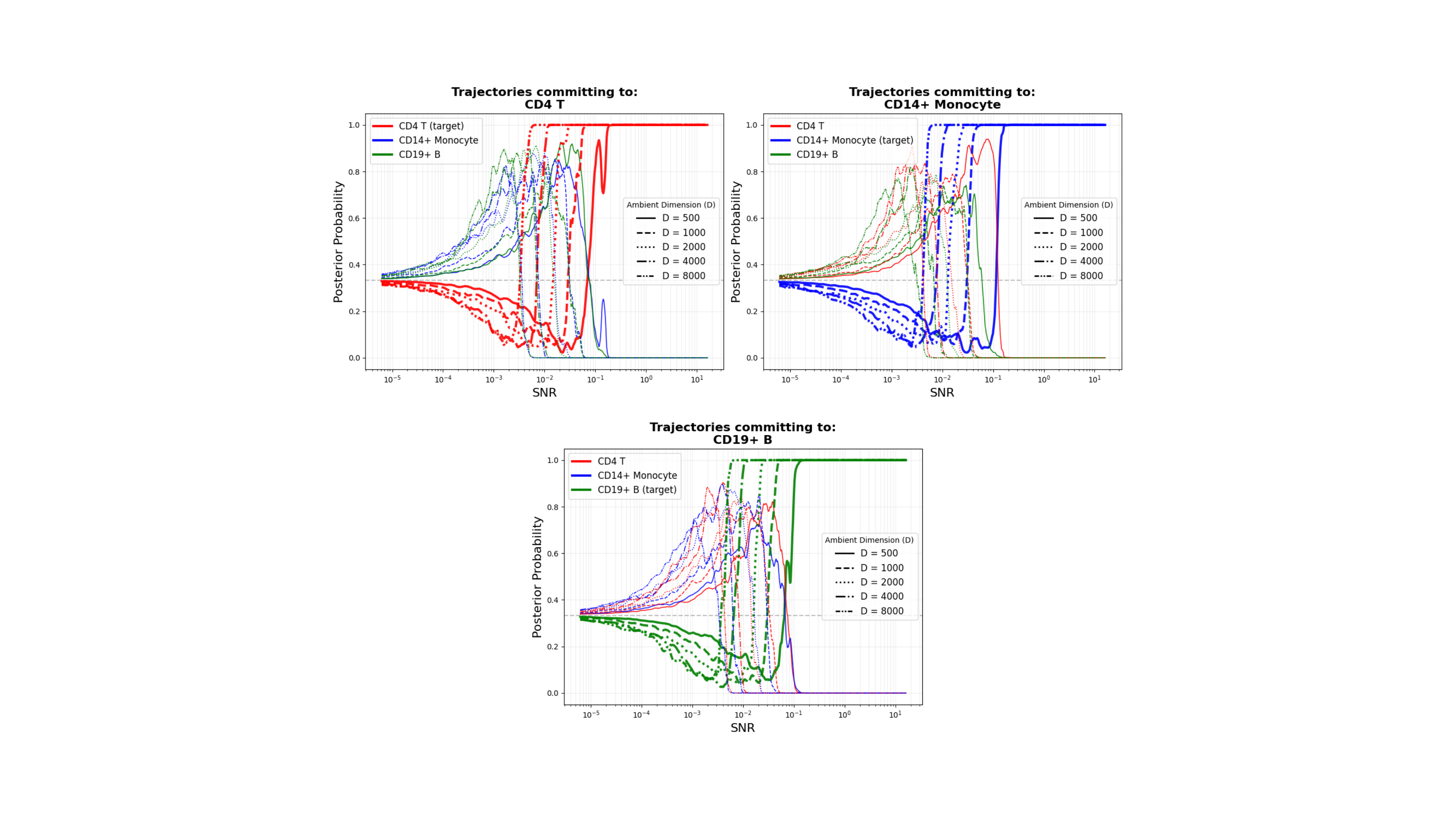}
\caption{
Posterior probability dynamics in the PBMC experiment. We construct increasing-dimensional representations by retaining the top-$D$ highly
variable genes, with $D\in\{500,1000,2000,4000,8000\}$. The posterior concentration timing shifts to lower SNR as $D$ increases. }
\label{fig:pbmc_phase_transition}
\end{figure}

\begin{figure}[h]
\centering
\includegraphics[width=1.0\linewidth]{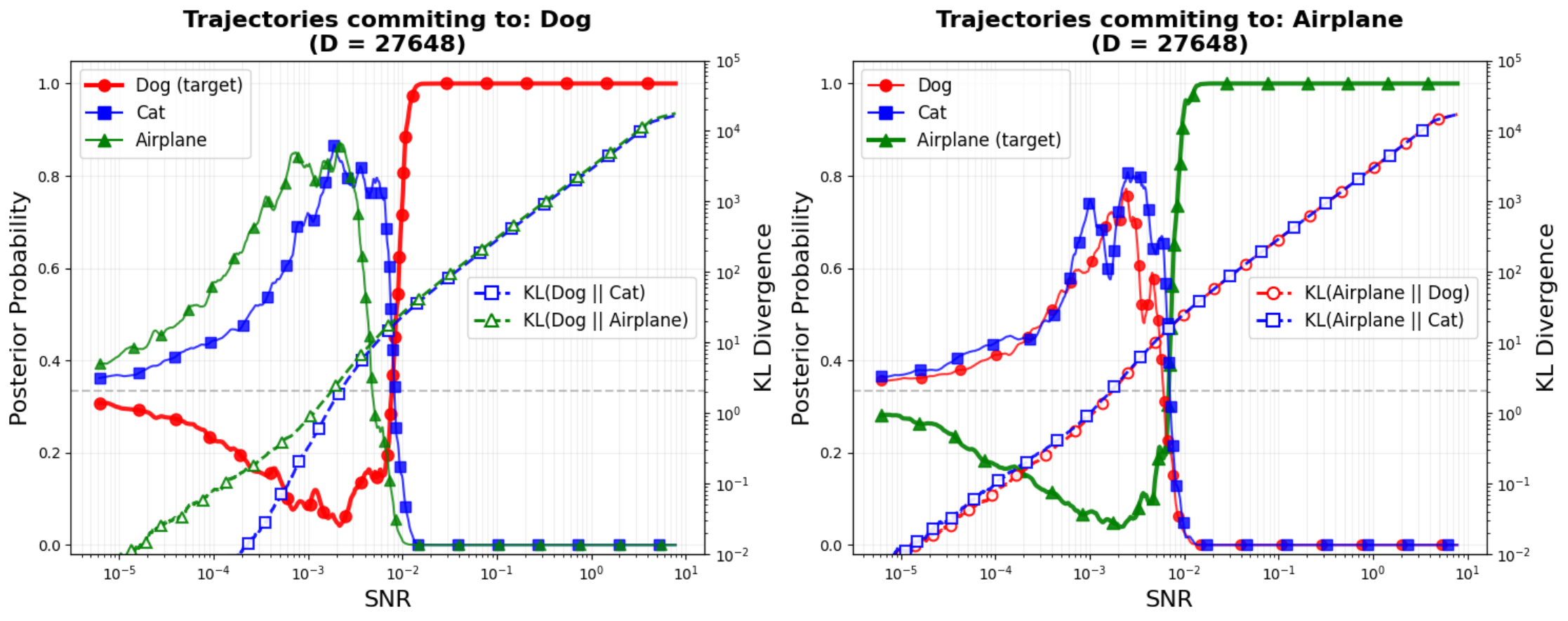}
\caption{
Posterior commitment in the STL-10 experiment. 
Posterior probabilities are evaluated using the reverse trajectories, grouped by their final committed class. 
Solid curves show the posterior probabilities of the three classes, while dashed curves show the corresponding noised KL divergences from the target class to competing classes. 
}
\label{fig:stl10_phase_transition}
\end{figure} 
\begin{figure}[h]
\centering
\begin{subfigure}[t]{0.45\textwidth}
\centering
\includegraphics[width=\linewidth, height=0.25\textheight,keepaspectratio]{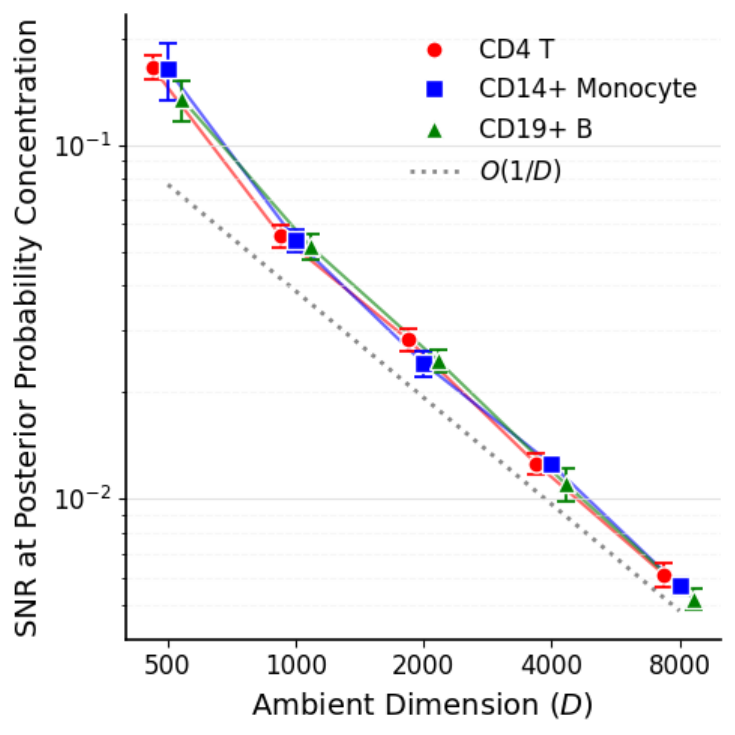}
\caption{PBMC single-cell data.}
\end{subfigure} 
\begin{subfigure}[t]{0.45\textwidth}
\centering
\includegraphics[width=\linewidth, height=0.25\textheight,keepaspectratio]{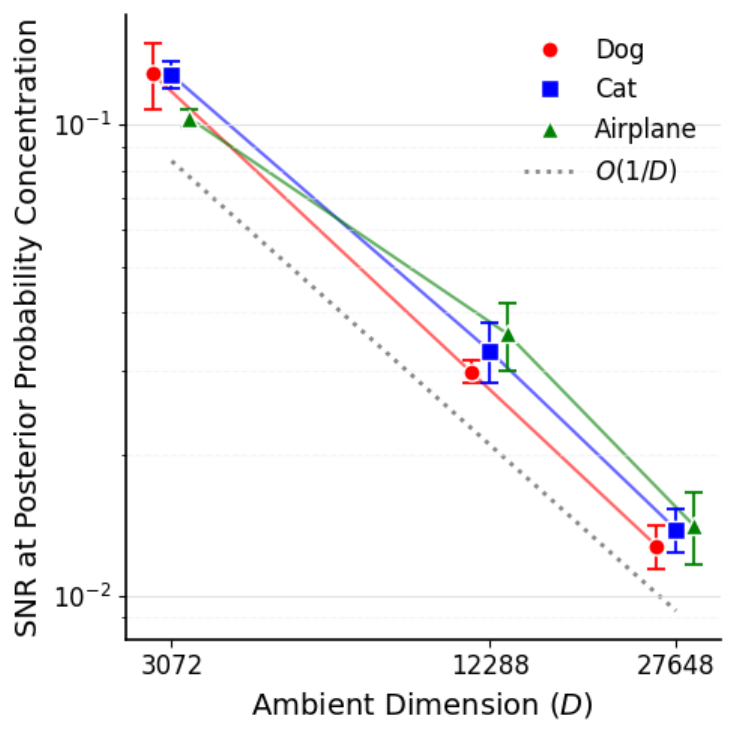}
\caption{STL-10 image data.}
\end{subfigure}
\caption{
Average empirical SNR at the posterior probability concentration over five repetitions of experiments, with the error bar showing their standard deviations. The plots show the SNR at which the lower 1 \% quantile of the posterior weight among
trajectories assigned to a given cluster permanently exceeds 0.99. In both STL-10 and PBMC experiments, the transition SNR exhibits an approximately $\mathcal{O}(1/D)$ decay. 
}
\label{fig:empirical_snr_scaling_law}
\end{figure}

\subsection{Results}
We first present the data geometry for the datasets (a) and (b) in Tables \ref{tab:classwise_cov_geometry} and \ref{tab:stl10_pairwise_geometry}. 
Table \ref{tab:classwise_cov_geometry} indicates that for the STL-10 datasets the 90\% effective rank is stable for varying ambient dimension $D$, while the maximum eigenvalue $\lambda_{\mathrm{max}}$ seems to grow approximately linearly with $D$. On the other hand, $\lambda_{\mathrm{max}}$ for PBMC data exhibits $\mathcal{O}(D^\alpha)$ growth with $\alpha < 1$. 
We observe from Table \ref{tab:stl10_pairwise_geometry} that the mean separation of STL-10 datasets scales linearly with $D$ and that covariance separations are very clear. 
The pairwise geometry for PBMC data also exhibits a similar separation growth as $D$ increases -- presented in Appendix \ref{sec:details_NE}. 

Figure \ref{fig:pbmc_phase_transition} demonstrates the dynamical concentration or decay of the posterior weights, evaluated for trajectories committing to a particular class, in the case of (a) PBMC data. Note that the reverse process proceeds from the left (low SNR) to the right (high SNR). The lines labelled with a target class $c^\dagger \in \mathcal{C}$ (`class name' (target) in the legend)  indicate the lower envelope of the posterior probability $W_D^{[c^\dagger]}$ achieved by 99\% of the paths in $\mathbb{X}^{c^\dagger}$, while the lines with competing classes $c \in \mathcal{C} \setminus \{c^\dagger\}$ indicate the upper envelope of $W_D^{[c]}$ by 99\% of the paths in $\mathbb{X}^{c^\dagger}$. 
As expected, for trajectories labelled with the chosen target cluster $c^\dagger$ (e.g., `CD4 T'), the posterior weight $W^{[c^\dagger]}_D$ eventually concentrates on $1$ as the SNR increases (as the physical time $t$ approaches the terminal $T$) while the others $W^{[c]}_D$ (e.g., `CD14+ Monocyte' and `CD19+ B') degenerate to $0$. More importantly, these plots showcase the rapid posterior concentration, or the phase transition, for each ambient dimension $D$, and its timing and corresponding SNR shifts earlier as $D$ increases. In particular, the sudden decay of non-target posterior weights, followed by the sharp rise of the target posterior weight, is interpreted as a sign that the KL signal between the corresponding noisy clusters begins to dominate the fluctuation terms in the log-posterior ratio.

In Figure \ref{fig:stl10_phase_transition}, we show the dynamical behaviour of the posterior probability for the image datasets with a fixed ambient dimension $D = 96 \times 96 \times 3$, but now with plots of the KL divergence between the noised target cluster $c^\dagger$ and the competing class $c \in \mathcal{C} \setminus \{c^\dagger\}$, precisely, $\KL{\for{p}{T-t_j}^{[c^\dagger]}}{\for{p}{T-t_j}^{[c]}}, \, j = 1, \ldots, n$. The KL divergence is not available in closed form and is therefore estimated from empirical data. 
The left panel of Figure \ref{fig:stl10_phase_transition} shows that, for trajectories eventually committing to the Dog class, the posterior weight of Airplane decays earlier than that of Cat. This behaviour is also reflected in the noised KL curves: the noised KL divergence between Dog and Airplane is already larger than that
between Dog and Cat in the early-SNR phase. This also agrees with the pairwise class separation presented in Table \ref{tab:stl10_pairwise_geometry}. Thus, the reverse trajectories committing to Dog first rule out the Airplane class, while the finer ambiguity between Dog and Cat persists until a larger SNR. By contrast, in the right panel of Figure \ref{fig:stl10_phase_transition}, for trajectories committing to Airplane, the KL curves against Dog and Cat are much closer to each other. Correspondingly, the Dog and Cat posterior weights decay in a more symmetric manner. 
These observations are consistent with the phase-transition analysis in Section~\ref{sec:phase_transition},
where the decay of non-target posterior weights is shown to be governed by the KL divergence between noised clusters.  

Finally, Figure~\ref{fig:empirical_snr_scaling_law} shows the mean and standard deviation of the first SNR level after which the lower 1\% quantile of the target posterior weight $W_D^{[c^\dagger]}$, evaluated among trajectories eventually assigned to the target cluster $c^\dagger$, remains above a threshold $0.99$ for all subsequent time steps. This indicates that for at least 99\% of the trajectories, the posterior weights permanently exceed the threshold after that time. The mean and standard deviation are computed over five independent repetitions of the four-step procedure explained in Section \ref{sec:set-up}. The metric provides an empirical proxy for the phase transition timing analysed in Section~\ref{sec:phase_transition}. 
In both the STL-10 and PBMC experiments, the SNR at the phase transition decreases as the ambient dimension $D$ increases, exhibiting an empirical $\mathcal{O}(1/D)$ scaling, in agreement with the posterior-commitment scaling predicted by the theory for GMMs (Theorem \ref{thm:NA_transition_est}) up to logarithmic factors.

\section{Proof of Theorem \ref{thm:main}} \label{sec:pf_main}
We write the measures of path $\back{X}{}$ and $\back{Y}{}$ over time interval $\mathbb{T} = [a, b], \, 0 < a  < b,$ as $\back{\mathbb{P}}{\mathbb{T}}$ and $\back{\mathbb{Q}}{\mathbb{T}}$, respectively, and $\snr{j} \equiv m_{T-t_j}^2 / \sigma_{T-t_j}^2, \, j = 0, 1, \ldots, N$, for ease of notation.  
\subsection{Step 1: Error Decomposition of KL Divergence} \label{sec:pf_step_1} We first decompose the KL divergence into three sources of error by following the arguments in convergence analysis for diffusion models \citep{ben:24, conf:25, Huang:26}. First, the data processing inequality and the equivalence of measures $\for{\mathbb{P}}{\varepsilon} \equiv \back{\mathbb{P}}{T-\varepsilon}$ lead to 
\begin{align*} 
\kl (\for{\mathbb{P}}{\varepsilon} || \back{\mathbb{Q}}{t_N}) \le \kl (\back{\mathbb{P}}{[0, T- \varepsilon]} || \back{\mathbb{Q}}{[0, T- \varepsilon]}). 
\end{align*} 
%
Further, employing the arguments in \cite[Section 3.3]{ben:24}, together with the use of Girsanov's theorem (see e.g., \cite[Appendix F]{ben:24} or \cite[Section 5.2]{chen:23-2}), we get $\KL{\back{\mathbb{P}}{[0, T-\varepsilon]}}{\back{\mathbb{Q}}{[0, T- \varepsilon]}} = R_1 + R_2$ with 
\begin{align*}
R_1 \equiv \KL{\for{\mathbb{P}}{T}}{\mathcal{N}(0, I_D)}, 
\quad 
R_2 \equiv \sum_{j = 0}^{N-1} \int_{t_j}^{t_{j+1}} \tfrac{m_{T-t}^2}{\sigma_{T-t}^4} \times 
\mathbb{E} 
\bigl[ 
\bigl\| {\mu}  (T-t, \back{X}{t})
- \hat{\mu} (T-t_j, \back{X}{t_j})  \bigr\|^2 \bigr] dt,   
\end{align*} 
where we recall $\mu  (t,x) = \mathbb{E} [\for{X}{0} | \for{X}{t} = x]$. For the first term $R_1$, following the proof arguments of \cite[Lemma 9]{chen:23} or \cite[Proposition 9]{ben:24}, we have: 
\begin{align} \label{eq:prior_gap}
R_1 \lesssim (D + \mathbb{E}_{\data} \bigl[ \| X \|^2 \bigr]) e^{-2T}. 
\end{align}
The second term $R_2$ is further  decomposed as $R_2 \lesssim R_{2}^I + R_{2}^{II}$, with  
\begin{align}
R_2^{I} & = \sum_{j = 0}^{N-1} \int_{t_j}^{t_{j+1}}  \tfrac{m_{T-t}^2}{\sigma_{T-t}^4} \times 
\mathbb{E} 
\bigl[ 
\bigl\| \hat{\mu}(T-t_j, \back{X}{t_j}) - \mu (T-t_j, \back{X}{t_j}) \bigr\|^2 \bigr] dt; \nonumber \\   
R_2^{II} & = \sum_{j = 0}^{N-1} \int_{t_j}^{t_{j+1}} 
\tfrac{m_{T-t}^2}{\sigma_{T-t}^4} \times \mathbb{E} 
\bigl[ 
\bigl\| \mu (T-t_j, \back{X}{t_j}) 
- \mu (T-t, \back{X}{t})  \bigr\|^2 \bigr] dt. \label{eq:e_disc}
\end{align} 
For the term $R_2^{I}$, we have from (\ref{eq:tweedie}), Tweedie's formula and Assumption \ref{assump:score} that
\begin{align} \label{eq:score_est_gap}
R_2^I \lesssim \sum_{j = 0}^{N-1} h_{j+1} \times  
\mathbb{E} 
\bigl[ 
\bigl\| \hat{s} (T-t_j, \back{X}{t_j}) - \nabla \log \for{p}{T-t_j} (\back{X}{t_j}) \bigr\|^2 \bigr] \le  T \varepsilon_{\mathrm{score}}^2. 
\end{align}
where we also used $\tfrac{m_{T-t}^2}{\sigma_{T-t}^4} \cdot \tfrac{\sigma_{T-t_j}^4}{m_{T-t_j}^2} \le \tfrac{m_{T-t_{j+1}}^2}{\sigma_{T-t_{j+1}}^4} \cdot \tfrac{\sigma_{T-t_j}^4}{m_{T-t_j}^2} \le 4 \exp (2 \tau)$ from Lemma \ref{lemma:noise_ratio_bd}.  

\subsection{Step 2. Decomposition of the Time-Discretisation Error} \label{sec:pf_step_2}
Our main focus is thus on bounding the term $R_2^{II}$ in \eqref{eq:e_disc}. We exploit the following result established in the literature. The proof can be found, e.g., in \cite[Lemma 9]{Huang:26}. 
\begin{lemma} \label{lemma:tr_diff}
Let $j \in [N]$. It then holds that for any $t \in [t_{j-1}, t_j]$, 
\begin{align*}
\begin{aligned} 
\mathbb{E} 
\Bigl[ 
\bigl\| \mu (T-t_{j-1}, \back{X}{t_{j-1}})  
- \mu (T-t, \back{X}{t})  \bigr\|^2 \Bigr]  
= \mathbb{E} \Bigl[ \mathrm{tr} \bigl(\Sigma_{T-t_{j-1}} (\back{X}{t_{j-1}}) \bigr) \Bigr]
- \mathbb{E} \Bigl[ \mathrm{tr} \bigl(\Sigma_{T-t} (\back{X}{t}) \bigr) \Bigr].  
\end{aligned} 
\end{align*}
\end{lemma}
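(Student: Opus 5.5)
The plan is to realise the posterior mean along the reverse trajectory as a Doob martingale and then use orthogonality of its increments. Throughout, set $s = t_{j-1} \le t$. Because $\mathrm{Law}$ of the path $(\back{X}{u})_{u\in[0,T]}$ equals that of the time-reversed forward path $(\for{X}{T-u})_{u \in [0,T]}$, we may work on a space carrying the forward process. There we identify $\back{X}{u} = \for{X}{T-u}$ jointly with $\for{X}{0}$, and let $\mathcal{G}_u = \sigma(\for{X}{T-r} : r \le u)$. This is an increasing filtration in $u$, since it collects information from later forward times down to time $T-u$.

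\textbf{Step 1: the posterior mean is a martingale.} The forward OU process is Markov. Hence, conditionally on $\for{X}{T-u}$, the variable $\for{X}{0}$ is independent of $\{\for{X}{T-r}: r<u\}$, because those are forward-future values. This gives
\begin{align*}
\mu(T-u, \back{X}{u}) = \mathbb{E}[\for{X}{0} \mid \for{X}{T-u}] = \mathbb{E}[\for{X}{0} \mid \mathcal{G}_u] \equiv M_u .
\end{align*}
Under the standing assumption $\mathbb{E}_{\data}\|X\|^2<\infty$, $(M_u)$ is a square-integrable $\mathcal{G}$-martingale on $[0,T)$.

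\textbf{Step 2: orthogonality of increments.} For $s\le t$, the tower property gives $\mathbb{E}[\langle M_s, M_t - M_s\rangle] = 0$. Therefore
\begin{align*}
\mathbb{E}\|M_t - M_s\|^2 = \mathbb{E}\|M_t\|^2 - \mathbb{E}\|M_s\|^2 .
\end{align*}
The left-hand side is exactly the quantity in the statement, since $\|M_s - M_t\| = \|M_t - M_s\|$.

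\textbf{Step 3: converting second moments to traces.} By the conditional variance decomposition,
\begin{align*}
\mathbb{E}\|\for{X}{0}\|^2 = \mathbb{E}\|M_u\|^2 + \mathbb{E}\bigl[\mathrm{tr}\,\Sigma_{T-u}(\for{X}{T-u})\bigr] \quad \text{for each } u.
\end{align*}
Solving for $\mathbb{E}\|M_u\|^2$ at $u=t$ and at $u=s$ and subtracting, the common term $\mathbb{E}\|\for{X}{0}\|^2$ cancels. This yields $\mathbb{E}[\mathrm{tr}\,\Sigma_{T-s}(\back{X}{s})] - \mathbb{E}[\mathrm{tr}\,\Sigma_{T-t}(\back{X}{t})]$, as claimed.

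The only delicate point is Step 1. It requires justifying that the reverse-time conditional expectation coincides with conditioning on the current state alone, which is the Markov/conditional independence property of the forward path. It also requires transferring identities from the forward coupling to the law of $\back{X}{}$; this is legitimate because both sides of the claimed equality depend only on the joint law of $(\back{X}{s}, \back{X}{t})$. The square integrability needed in Steps 2 and 3 follows from the finite second moment of $\data$.
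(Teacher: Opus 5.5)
Your proof is correct, and each step checks out. The paper does not prove this lemma itself; it defers to \cite[Lemma 9]{Huang:26}, so your argument gives a self-contained justification of the cited identity. The chain is: realise $u \mapsto \mu(T-u,\back{X}{u})$ as the Doob martingale $\mathbb{E}[\for{X}{0} \mid \mathcal{G}_u]$ via the Markov property of the forward OU process, use orthogonality of martingale increments, and convert second moments to traces by the law of total variance.

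The stochastic-localisation analyses this literature builds on (e.g.\ \cite{ben:24}) usually take a different route. There one writes an SDE for the posterior mean, with diffusion coefficient proportional to the posterior covariance, and an ODE for $\mathbb{E}[\Sigma]$, then matches the two through It\^o's isometry. Your route needs nothing beyond $\mathbb{E}_{\data}\|X\|^2<\infty$ and the Markov property. It also makes plain that the time grid is irrelevant: the identity holds for any $0\le s\le t<T$. The monotonicity of $u\mapsto\mathbb{E}[\mathrm{tr}\,\Sigma_{T-u}(\back{X}{u})]$, which the paper draws from the lemma immediately afterwards, is then just $\mathbb{E}\|M_t-M_s\|^2\ge 0$. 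What the It\^o route gives in exchange is an explicit decay rate for $\mathbb{E}[\mathrm{tr}\,\Sigma]$ in terms of $\mathbb{E}[\mathrm{tr}\,\Sigma^2]$, which is not needed here.

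You were right to single out the transfer step as the delicate point. The statement involves the joint law of $(\back{X}{s},\back{X}{t})$, so the marginal identity $\mathrm{Law}(\back{X}{t})=\mathrm{Law}(\for{X}{T-t})$ quoted in Section 2 would not suffice on its own. What you need is the path-level time reversal of \cite{Haussmann:86} together with uniqueness in law for the reverse SDE on $[0,T-\varepsilon]$.

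In the GMM setting this uniqueness holds. The drift is smooth in $x$ and has linear growth uniformly in $t\in[0,T-\varepsilon]$, because each cluster score is affine with $\|(S_{T-t}^k)^{-1}\|_2\le\sigma_\varepsilon^{-2}$. The paper already relies on the same path-level identification in its Girsanov step.
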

Lemma \ref{lemma:tr_diff} implies that $t \mapsto \mathbb{E} \Bigl[ \mathrm{tr} \bigl(\Sigma_{T-t} (\back{X}{t}) \bigr) \Bigr]$ is a non-increasing function, thus 
\begin{align*}
\begin{aligned} 
\mathbb{E} 
\Bigl[ 
\bigl\| \mu (T-t_{j-1}, \back{X}{t_{j-1}})  
- \mu (T-t, \back{X}{t})  \bigr\|^2 \Bigr] 
\le \mathbb{E} \Bigl[ \mathrm{tr} \bigl(\Sigma_{T-t_{j-1}} (\back{X}{t_{j-1}}) \bigr) \Bigr]
- \mathbb{E} \Bigl[ \mathrm{tr} \bigl(\Sigma_{T-t_j} (\back{X}{t_j}) \bigr) \Bigr],  
\end{aligned} 
\end{align*} 
for any $t \in [t_{j-1}, t_j]$.  Recall $h_{j+1} \le \tau \sigma_{T-t_j}^2$ where the equality holds for $j = 0, 1, \ldots, N-2$. Then, we apply Lemma \ref{lemma:tr_diff} to the term $R_2^{II}$ and obtain  
\begin{align}
R_2^{II} & 
\le \sum_{j = 0}^{N-1} h_{j+1}  \tfrac{m_{T-t_{j+1}}^2}{\sigma_{T-t_{j+1}}^4}  \cdot
\Bigl( \mathbb{E} \bigl[ \mathrm{tr} \bigl(\Sigma_{T-t_j} (\back{X}{t_{j}}) \bigr) \bigr]
- \mathbb{E} \bigl[ \mathrm{tr} \bigl(\Sigma_{T-t_{j+1}} (\back{X}{t_{j+1}}) \bigr) \bigr] \Bigr) \nonumber \\
& \le 2 \tau \sum_{j = 0}^{N-1} \snr{j+1}
\Bigl( \mathbb{E} \bigl[ \mathrm{tr} \bigl(\Sigma_{T-t_j} (\back{X}{t_{j}}) \bigr) \bigr]
- \mathbb{E} \bigl[ \mathrm{tr} \bigl(\Sigma_{T-t_{j+1}} (\back{X}{t_{j+1}}) \bigr) \bigr] \Bigr) \nonumber \\ 
& \lesssim \tau \, \snr{1} \cdot \mathbb{E} \bigl[ \mathrm{tr} \bigl(\Sigma_{T} (\for{X}{T}) \bigr) \bigr] + \tau \sum_{j = 1}^{N-1} 
\bigl( \snr{j+1}  - \snr{j} \bigr) \cdot \mathbb{E} \bigl[ \mathrm{tr} \bigl(\Sigma_{T - t_j} (\for{X}{T-t_j}) \bigr) \bigr] \equiv \mathcal{T},   \label{eq:R_2_II}
\end{align}
where in the second line we used the bound $\sigma_{T-t_j}^2/\sigma_{T-t_{j+1}}^2 \le 2$, which is presented in Lemma \ref{lemma:noise_ratio_bd}. For the ease of notation, we write: 
\begin{align}
    \Delta_j = (\snr{j+1} - \snr{j}), \quad Z_j = (t_j, \for{X}{T-t_j}), \qquad j = 0, \ldots, N-1. 
\end{align}
Using the closed-form expression of posterior covariance (\ref{eq:posterior_cov}) in Lemma \ref{lemma:PC}, we decompose the term $\mathcal{T}$ as $\mathcal{T} = \mathcal{T}_1 + \mathcal{T}_2$, where 
%
\begin{align}
& \begin{aligned}
\mathcal{T}_1 =
\tau \sum_{j = 0}^{N-1} \Bigl(\snr{1} \mathbf{1}_{j = 0} + \Delta_j \mathbf{1}_{j \ge 1}  \Bigr) \sum_{k = 1}^K \mathbb{E} \bigl[ W_D^{[k]} (Z_j) \bigr] \cdot \sigma_{T-t_j}^2 \mathrm{tr} \Bigl( \Sigma_k \bigl( m_{T-t_j}^2 \Sigma_k + \sigma_{T-t_j}^2 I_D \bigr)^{-1} \Bigr); \label{eq:T_1}  
\end{aligned} \\[0.3cm] 
&\begin{aligned}
\mathcal{T}_2
= \tfrac{\tau}{2} \sum_{j = 0}^{N-1} \Bigl(\snr{1} \mathbf{1}_{j = 0} + \Delta_j \mathbf{1}_{j \ge 1}  \Bigr)  \sum_{\substack{k_1, k_2 = 1 \\ k_1 \neq k_2}}^K  
\mathbb{E} \Bigl[ W_D^{[k_1]} (Z_j) W_D^{[k_2]} (Z_j) \bigl\| \mu^{[k_1]} (Z_j)
- \mu^{[k_2]} (Z_j) \bigr\|^2 \Bigr]
\end{aligned} \label{eq:T_2} 
\end{align}
%
%
where we recall that for $z \equiv (t,x) \in [0,T) \times \mathbb{R}^D$, $\mu^{[i]} (z) = \mathbb{E} \bigl[~\for{X}{0} | \for{X}{T-t} = x, \, \for{X}{0} \sim P_{\mathrm{data}}^{[i]} \bigr], ~i \in [K]$. 

\subsection{Step 3. Bounding the Term $\mathcal{T}_1$} \label{sec:pf_step_3} 
The eigen-decomposition $\Sigma_k \equiv U \Lambda_k U^\top$ together with Assumption \ref{assump:cov} gives that: for $j = 0, 1, \ldots, N-1$,  
\begin{align}
& \sigma_{T-t_j}^2 \mathrm{tr} \left( \Sigma_k \Bigl( m_{T-t_j}^2 \Sigma_k + \sigma_{T-t_j}^2 I_D \Bigr)^{-1} \right) 
= \sigma_{T-t_j}^2 \mathrm{tr} \left( \Lambda_k \Bigl( m_{T-t_j}^2 \Lambda_k + \sigma_{T-t_j}^2 I_D   \Bigr)^{-1} \right) \nonumber \\ 
& \qquad \qquad = \sigma_{T-t_j}^2 \sum_{\ell = 1}^D \tfrac{\lambda_\ell (\Sigma_{k})}{m_{T-t_j}^2 \lambda_\ell (\Sigma_{k}) + \sigma_{T-t_j}^2} 
\le \tfrac{\sigma_{T-t_j}^2}{m_{T-t_j}^2} M_k + \nu_k. \label{eq:trace_inv_bd}
\end{align} 
Thus, we get
\begin{align}
\mathcal{T}_1 
& \le \tau \snr{1}
\sum_{k = 1}^K \mathbb{E} \bigl[ W_D^{[k]} (Z_0) \bigr] \cdot \Bigl( 
\tfrac{\sigma_{T}^2}{m_{T}^2} M_k + \nu_k \Bigr) 
+ \tau \sum_{j = 1}^{N-1} \Delta_j 
\sum_{k = 1}^K \mathbb{E} \bigl[ W_D^{[k]} (Z_j) \bigr] \cdot \Bigl( 
\tfrac{\sigma_{T-t_j}^2}{m_{T-t_j}^2} M_k + \nu_k \Bigr) \nonumber \\
& \le \tau \Bigl( \tfrac{\snr{1}}{\snr{0}} + \snr{1} R \Bigr) M^\dagger + \tau \sum_{j = 1}^{N-1} \tfrac{\Delta_j}{\snr{j}} \bigl(1 + \snr{j} R \bigr) M^\dagger \nonumber \\ 
& = \tau \Bigl( \tfrac{\snr{1}}{\snr{0}} + \sum_{j = 1}^{N-1} \tfrac{\Delta_j}{\snr{j}} \Bigr) M^\dagger 
+ \tau \snr{N} R,  \nonumber 
\end{align} 
where $M^\dagger = \max_{k \in [K]} M_k$, $R \in [0, 1)$ is a universal constant defined in Assumption \ref{assump:cov} 
and we used $\sum_{k = 1}^K W_D^{[k]} (t, x) =1$ for all $(t, x) \in [0, T) \times \mathbb{R}^D$. 
We then get under $\varepsilon \gtrsim R/T$ that  
\begin{align} \label{eq:T_1_bd}  
\mathcal{T}_1 \lesssim \tau \Bigl( 1 + \log \tfrac{\snr{N}}{\snr{1}} \Bigr) M^\dagger + \tau \tfrac{R}{\varepsilon} 
\lesssim \tau \Bigl( \log \tfrac{1}{\varepsilon} + T \Bigr) M^\dagger, 
\end{align}
since it follows that for $j \in [N-1]$, 
\begin{align} \label{eq:rel_err_snr}
\tfrac{\Delta_j}{\snr{j}} 
= \tfrac{\snr{j+1}}{\snr{j}} \int_{\snr{j}}^{\snr{j+1}} \tfrac{1}{\snr{j+1}} dy \le \tfrac{\snr{j+1}}{\snr{j}} \int_{\snr{j}}^{\snr{j+1}} \tfrac{1}{y} dy 
\lesssim \log \tfrac{\snr{j+1}}{\snr{j}}
\end{align}
with $\snr{k+1}/\snr{k} \le 4, \, k = 0, 1, \ldots, N-1$ from Lemma \ref{lemma:noise_ratio_bd} and 
\begin{align*}
& \log \tfrac{1}{\snr{1}} \le - \log m_{T-t_1}^2 = 2 (T-t_1) \lesssim T,  \\ 
& \log {\snr{{N}}}  = \log \Bigl( \tfrac{1}{\exp (2 \varepsilon) - 1} \Bigr) \le \log \tfrac{1}{\varepsilon}, 
\quad \bigl( \because \exp(x) - 1 \ge x, \forall x > 0 \bigr).  
\end{align*}  
\subsection{Step 4. Bounding the Term $\mathcal{T}_2$} \label{sec:pf_step_4}  
We hereby focus on bounding the term involving the inter-class difference between clusters, i.e., $\bigl\| \mu^{[k_1]} (Z_j) - \mu^{[k_2]} (Z_j) \bigr\|^2$ that can depend on $\| \mu_i - \mu_j \|^2 = \mathcal{O} (D)$ under Assumption \ref{assump:mean}. To avoid the polynomial dependence on $D$ in the bound, we make use of the phase transition from mixing to single-cluster commitment, as argued in Section \ref{sec:phase_transition}. We thus separate the time interval $[0, T-\varepsilon]$ into two parts and get $\mathcal{T}_2 = \mathcal{T}_2^{I} + \mathcal{T}_2^{II}$ with 
\begin{align}
\mathcal{T}_2^{I} 
& = \tfrac{\tau}{2} \sum_{j = 0}^{N_1-1} \bigl( \snr{1} \mathbf{1}_{j = 0} +  \Delta_j \mathbf{1}_{j \ge 1} \bigr) 
\! \! 
\sum_{\substack{k_1, k_2 = 1}}^K  
\! \! \mathbb{E} \Bigl[ W_D^{[k_1]} (Z_j) W_D^{[k_2]} (Z_j) \bigl\| \mu^{[k_1]} (Z_j)
- \mu^{[k_2]} (Z_j) \bigr\|^2 \Bigr]; \label{eq:T_2_I} \\[0.2cm] 
\mathcal{T}_2^{II} 
& = \tfrac{\tau}{2} \sum_{j = N_1}^{N-1} \Delta_j \sum_{\substack{k_1, k_2 = 1}}^K  
\mathbb{E} \Bigl[ W_D^{[k_1]} (Z_j) W_D^{[k_2]} (Z_j) \bigl\| \mu^{[k_1]} (Z_j)
- \mu^{[k_2]} (Z_j) \bigr\|^2 \Bigr], \label{eq:T_2_II} 
\end{align}
where in the above we have set: 
\begin{align} \label{eq:N_1}
N_1 \equiv \inf \Bigl\{ j \in [N] \, | \, \snr{j} > L \tfrac{\log (KD)}{D} \Bigr\} 
\end{align}
for an arbitrary fixed constant $L > 0$ satisfying $L \log (KD) / D \ge \underline{S}$, with $\underline{S}$ specified in the statement of Theorem \ref{thm:NA_transition_est}. We note that $N_1 \ge 1$ under $T \ge \tfrac{1}{2} \log D$ due to Corollary \ref{cor:T_scale} and $\snr{j} \lesssim  \tfrac{\log (KD)}{D}$ for $j \in \{1, \ldots, N_1 - 1\}$. In what follows, we write $Z_j^{[i]} = (t_j, \for{X}{T-t_j}^{[i]}), \ j \in \{0, 1, \ldots, N-1 \}, \, i \in [K]$ where $\for{X}{T-t_j}^{[i]}$ follows the distribution $\for{p}{T-t_j}^{[i]}$.

\subsubsection{Bounding the term $\mathcal{T}_2^I$}
We begin with the term $\mathcal{T}_2^I$, that is, the error during the mixing phase. We then get that 
\begin{align*}
\mathcal{T}_2^I = \tfrac{\tau}{2} \sum_{j = 0}^{N_1-1} \bigl( \snr{1} \mathbf{1}_{j = 0} + \Delta_j \mathbf{1}_{j \ge 0} \bigr) \sum_{i \in [K]} \sum_{\substack{k_1, k_2 = 1}} \pi_i \cdot Q_{k_1, k_2}^{[i]} (j)
\end{align*}
with 
\begin{align*}
Q_{k_1, k_2}^{[i]} (j) = \mathbb{E} \Bigl[ W_D^{[k_1]} (Z_j^{[i]}) W_D^{[k_2]} (Z_j^{[i]}) \bigl\| \mu^{[k_1]} (Z_j^{[i]})
- \mu^{[k_2]} (Z_j^{[i]}) \bigr\|^2 \Bigr], 
\end{align*}
where the expectation operator acts on the second component of the variable $Z_{j}^{[i]}$ under the law of $\for{X}{T-t_j}^{[i]}$. We have that 
\begin{align} 
& \sum_{k_1, k_2 \in [K]} Q_{k_1, k_2}^{[i]} (j) 
\le \mathbb{E} \bigl[ \max_{k_1, k_2 \in [K]} \bigl\| \mu^{[k_1]} (Z_j^{[i]})
- \mu^{[k_2]} (Z_j^{[i]}) \bigr\|^2 \bigr] \qquad 
\Bigl( \because \sum_{k \in [K]} W_D^{[k]} (Z_j^{[i]}) = 1 \Bigr) \nonumber \\[0.2cm] 
& \quad = \tfrac{1}{\xi_j} \log \exp \Bigl( \mathbb{E} \bigl[ \xi_j \cdot \max_{k_1, k_2 \in [K]} \bigl\| \mu^{[k_1]} (Z_j^{[i]})
- \mu^{[k_2]} (Z_j^{[i]}) \bigr\|^2 \bigr] \Bigr) \nonumber \\[0.2cm] 
& \quad \le \tfrac{1}{\xi_j} \log \Bigl(  \mathbb{E} \Bigl[ \exp \Bigl( \xi_j \max_{k_1, k_2 \in [K]}  \bigl\| \mu^{[k_1]} (Z_j^{[i]})
- \mu^{[k_2]} (Z_j^{[i]}) \bigr\|^2 \Bigr) \Bigr]  \Bigr) \qquad \bigl(\because \mathrm{Jensen's \, inequality } \bigr) \nonumber \\[0.2cm] 
& \quad = \tfrac{1}{\xi_j} \log \Bigl(  \mathbb{E} \Bigl[ \max_{k_1, k_2 \in [K]}  \exp \Bigl( \xi_j \bigl\| \mu^{[k_1]} (Z_j^{[i]})  - \mu^{[k_2]} (Z_j^{[i]}) \bigr\|^2 \Bigr) \Bigr]  \Bigr) \label{eq:log_sum_exp}
\end{align} 
for any $\xi_j > 0$  and for each fixed $i \in [K]$ and $j \in \{0, 1, \ldots, N_1 - 1\}$. To further bound the above term, we exploit the following result whose proof is postponed to Appendix \ref{sec:pf_max_chi_squared}: 
\begin{lemma} \label{lemma:max_chi_squared}
Under Assumptions \ref{assump:cov}, \ref{assump:mean}, we have the following: for any $i \in [K]$,  $j \in \{0, 1, \ldots, N-1\}$ and any $\rho \in \bigl( 0, m_{T-t_j}^2/ (16 \sigma_{T-t_j}^4  \max_{i, k \in [K]} \| H_{k, i} (j) \|_2) \bigr]$, it holds 
\begin{align*}
\log \Bigl(  \mathbb{E} \Bigl[ \max_{k_1, k_2 \in [K]}  \exp \Bigl( \rho \bigl\| \mu^{[k_1]} (Z_j^{[i]})
- \mu^{[k_2]} (Z_j^{[i]}) \bigr\|^2 \Bigr) \Bigr]  \Bigr) \lesssim \log K + \rho \bigl(D + \snr{j} M^\dagger D^{2 \alpha} \bigr) 
\end{align*}
where we have set  
\begin{align} \label{eq:H}
H_{k, i} (j) = m_{T-t_j}^4 \bigl\{ (S_{T-t_j}^k)^{-1} (\Sigma_k - \Sigma_i) (S_{T-t_j}^i)^{-1} (\Sigma_k - \Sigma_i) (S_{T-t_j}^k)^{-1} \bigr\}. 
\end{align}
\end{lemma}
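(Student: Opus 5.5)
The plan is to use the explicit affine form of the cluster-wise posterior means under the Gaussian model, reduce the maximum over pairs to a maximum over single indices, and then apply a Gaussian moment generating function bound to a quadratic form.

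\emph{Step 1: explicit form of $\mu^{[k]}$.} Fix $i\in[K]$ and $j$, and abbreviate $m=m_{T-t_j}$, $\sigma=\sigma_{T-t_j}$, $S^k=S^k_{T-t_j}$. Since $\for{p}{T-t_j}^{[k]}=\mathcal N(m\mu_k,S^k)$, \eqref{eq:posterior_mean_cluster} gives
\[
\mu^{[k]}(t_j,y)=\tfrac{y}{m}-\tfrac{\sigma^2}{m}(S^k)^{-1}(y-m\mu_k).
\]
We represent $\for{X}{T-t_j}^{[i]}=m\mu_i+(S^i)^{1/2}g$ with $g\sim\mathcal N(0,I_D)$. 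Using $(S^k)^{-1}-(S^i)^{-1}=m^2(S^k)^{-1}(\Sigma_i-\Sigma_k)(S^i)^{-1}$, we obtain
\[
\mu^{[k]}(Z_j^{[i]})-\mu^{[i]}(Z_j^{[i]})=a_k+B_kg,
\]
where
\[
a_k=\sigma^2(S^k)^{-1}(\mu_i-\mu_k),\qquad B_k=-\sigma^2 m\,(S^k)^{-1}(\Sigma_i-\Sigma_k)(S^i)^{-1}(S^i)^{1/2}.
\]
A direct computation shows $\|B_k^\top B_k\|_2=\tfrac{\sigma^4}{m^2}\|H_{k,i}(j)\|_2$, with $H_{k,i}(j)$ as in \eqref{eq:H}. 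This identity is what explains the admissible range of $\rho$ in the statement.

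\emph{Step 2: reduction to single indices.} By the elementary inequality $\|u-v\|^2\le 2\|u\|^2+2\|v\|^2$ applied with the common reference point $\mu^{[i]}$, we have
\[
\max_{k_1,k_2}\exp\bigl(\rho\|\mu^{[k_1]}-\mu^{[k_2]}\|^2\bigr)\le \max_k\exp\bigl(4\rho\|a_k+B_kg\|^2\bigr)\le\sum_{k\in[K]}\exp\bigl(8\rho\|a_k\|^2+8\rho\|B_kg\|^2\bigr).
\]
The second inequality again uses $\|a+b\|^2\le 2\|a\|^2+2\|b\|^2$, which removes the linear-in-$g$ cross term. Taking expectations and logarithms yields
\[
\log K+\max_k\Bigl\{8\rho\|a_k\|^2+\log\mathbb E\, e^{8\rho\, g^\top B_k^\top B_k g}\Bigr\}.
\]

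\emph{Step 3: bounding each piece.} For the mean part, $\sigma^2(S^k)^{-1}\preceq I_D$ gives $\|a_k\|^2\le\|\mu_i-\mu_k\|^2\le \Cfive D$ by Assumption~\ref{assump:mean}.

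For the quadratic part, set $A=8\rho B_k^\top B_k$. The constraint on $\rho$ gives $\|A\|_2\le \tfrac{8}{16}=\tfrac12$. This is slightly too generous, so I would either tighten the absolute constants in Step~2 or accept a harmless constant change in the constraint, in order to ensure $\|A\|_2\le 1/4$. With $\|A\|_2\le 1/4$ we can use
\[
\log\mathbb E\, e^{g^\top Ag}=-\tfrac12\log\det(I-2A)\le 2\,\mathrm{tr}A,
\]
because $-\log(1-x)\le 2x$ for $x\le 1/2$. Then
\[
\mathrm{tr}(B_k^\top B_k)\le \sigma^4m^2\,\|(S^k)^{-1}\|_2^2\,\|(S^i)^{-1}\|_2\,\|\Sigma_i-\Sigma_k\|_F^2\le \mathrm{SNR}_{T-t_j}\,\|\Sigma_i-\Sigma_k\|_F^2,
\]
using $\|(S^\ell)^{-1}\|_2\le\sigma^{-2}$. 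Finally, by Assumption~\ref{assump:cov},
\[
\|\Sigma_\ell\|_F^2=\sum_m\lambda_m(\Sigma_\ell)^2\le \lambda_1(\Sigma_\ell)\bigl(M_\ell\lambda_1(\Sigma_\ell)+\nu_\ell\bigr)\lesssim M^\dagger D^{2\alpha}.
\]
Hence $\|\Sigma_i-\Sigma_k\|_F^2\lesssim M^\dagger D^{2\alpha}$. Combining the pieces gives the claimed bound $\log K+\rho\bigl(D+\mathrm{SNR}_j M^\dagger D^{2\alpha}\bigr)$, up to constants.

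The main obstacle is Step~1: getting the algebra of $B_k$ exactly right so that its operator norm matches $\|H_{k,i}(j)\|_2$, and tracking the absolute constants so that the stated range of $\rho$ indeed keeps $I-2A$ uniformly positive definite.
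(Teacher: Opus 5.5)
Your route is the paper's. You use the same Gaussian representation of $\mu^{[k]}(Z_j^{[i]})-\mu^{[i]}(Z_j^{[i]})$ as a deterministic vector plus a centred Gaussian with covariance $B_kB_k^\top=(\sigma_{T-t_j}^4/m_{T-t_j}^2)H_{k,i}(j)$ (your $a_k$ has the opposite sign, which is immaterial). You also share the reduction to single indices through the reference cluster $i$ with coefficient $4\rho$, the bound $-\log(1-x)\le 2x$ on the log-determinant, and the estimate $\|\Sigma_k-\Sigma_i\|_F^2\lesssim M^\dagger D^{2\alpha}$. Bounding $\max_{k_1,k_2}$ by $4\max_k$ and then by a sum over $K$ indices, instead of the paper's sum over $K^2$ pairs followed by Cauchy--Schwarz, is only cosmetic.

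The one step that fails as written is your second application of $\|a+b\|^2\le 2\|a\|^2+2\|b\|^2$.
\begin{itemize}
\item It doubles the coefficient on $\|B_kg\|^2$ to $8\rho$. Finiteness of $\mathbb{E}\,e^{8\rho\|B_kg\|^2}$ then needs $16\rho\|B_k^\top B_k\|_2<1$, and this quantity can equal $1$ at the right endpoint of the stated range of $\rho$.
\item At that endpoint the expectation can be infinite. Close to it, $-\tfrac12\log\det(I_D-2A)$ is not bounded by a fixed multiple of $\mathrm{tr}(A)$. So as written you only cover $\rho\le m_{T-t_j}^2/(32\sigma_{T-t_j}^4\max_{i,k}\|H_{k,i}(j)\|_2)$.
\item Shrinking the constraint would prove a weaker lemma than the one stated. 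That is harmless downstream, where $\xi_j$ enters only through $1/\xi_j$, but it is not what is claimed.
\end{itemize}

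Your other option, tightening constants, is the right one. Note that the first split, $\|\mu^{[k_1]}-\mu^{[k_2]}\|^2\le 4\max_k\|\mu^{[k]}-\mu^{[i]}\|^2$, is sharp, so the tightening must happen in the second split. You can either use $\|a+b\|^2\le(1+\epsilon)\|b\|^2+(1+\epsilon^{-1})\|a\|^2$ with a fixed $\epsilon<1$, or, as the paper does, not split at all and use the exact non-central Gaussian MGF
\[
\log\mathbb{E}\,e^{4\rho\|a_k+B_kg\|^2}=-\tfrac12\log\det\bigl(I_D-8\rho B_kB_k^\top\bigr)+4\rho\,a_k^\top\bigl(I_D-8\rho B_kB_k^\top\bigr)^{-1}a_k .
\]
On the stated range $\|8\rho B_kB_k^\top\|_2\le\tfrac12$, so $I_D-8\rho B_kB_k^\top\succeq\tfrac12 I_D$. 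The log-determinant term is then at most $8\rho\,\mathrm{tr}(B_kB_k^\top)$, and the mean term is at most $8\rho\|a_k\|^2\lesssim\rho D$. This closes the proof on exactly the stated interval.
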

%
Setting $\xi_j = m_{T-t_j}^2 / (16 \sigma_{T-t_j}^4 \max_{i, k \in [K]} \| H_{k, i} (j) \|_2)$ in (\ref{eq:log_sum_exp}), we apply Lemma \ref{lemma:max_chi_squared} to get
\begin{align}
\mathcal{T}_2^I 
& \lesssim \tau \tfrac{\snr{1}}{\xi_0} \bigl( \log K +  \xi_0 (D + \snr{j} M^\dagger D^{2 \alpha}) \bigr) 
+ \tau \sum_{j = 1}^{N_1 - 1}  
\tfrac{\Delta_j}{\xi_j} \bigl( \log K +  \xi_j (D + \snr{j} M^\dagger D^{2 \alpha}) \bigr) \nonumber  \\ 
& \lesssim \tau \Bigl(\snr{1} + \sum_{j = 1}^{N_1 - 1} \Delta_{j} \Bigr) \Bigl(  (\log K) \snr{N_1 - 1} D^{2 \alpha} + D + \snr{N_1 - 1} M^\dagger D^{2 \alpha} \Bigr)  \nonumber  \\ 
& =  \tau \snr{N_1} \Bigl(\snr{N_1 - 1} \bigl( \log K + M^\dagger \bigr) D^{2 \alpha} + D \Bigr)  \nonumber  \\ 
& \lesssim \tau \snr{N_1 - 1} \Bigl(\snr{N_1 - 1} \bigl( \log K + M^\dagger \bigr) D^{2 \alpha} + D \Bigr)  \quad 
\bigl( \because \mathrm{Lemma \, \ref{lemma:noise_ratio_bd}} \bigr)
\nonumber \\ 
& \lesssim \tau \Bigl[
\Bigl(\tfrac{\log (KD)}{D^{1 - \alpha}} \Bigr)^2  \, 
\bigl( \log K  + M^\dagger \bigr) + \log (KD) \Bigr] \qquad \bigl( \because \snr{N_1 - 1} \lesssim \tfrac{\log (KD)}{D} \bigr)
\label{eq:T_2_mixing_bd}
\end{align} 
where in the second line we used 
\begin{align} \label{eq:bd_inv_xi}
    \tfrac{1}{\xi_j} \lesssim \tfrac{\sigma_{T-t_j}^4}{m_{T-t_j}^2} \max_{i, k \in [K]} \| H_{k, i} \|_2 \lesssim \snr{j} D^{2 \alpha} \lesssim \snr{N_1 - 1} D^{2 \alpha}. 
\end{align} 
\subsubsection{Bounding the term $\mathcal{T}_2^{II}$} 
We then consider the error during the cluster commitment phase, that is, $\mathcal{T}_2^{II}$ defined in (\ref{eq:T_2_II}). 
In this phase, we can exploit the high-probability concentration of the posterior weight stated in Theorem \ref{thm:NA_transition_est}. Introducing an event set $E_j^{[i]} (\eta) \subseteq \Omega, \, \eta \in (0,1)$ as  
\begin{align*} 
E_j^{[i]} (\eta) = \Bigl\{ \omega \in \Omega \ | \ \bigcap_{k \in [K] \setminus \{i\}} \bigl\{ W^{[k]}_D (Z_j^{[i]}) \le \exp 
\bigl( -\eta \KL{\for{p}{T-t_j}^{[i]}}{\for{p}{T-t_j}^{[k]}} \bigr) \bigr\} \Bigr\},
\end{align*}
we express the term $\mathcal{T}_2^{II}$ as $\mathcal{T}_2^{II} = \mathcal{T}_2^{II, E} + \mathcal{T}_2^{II, E^c}$ with 
%
\begin{align}
\mathcal{T}_2^{II, E} 
& = \tfrac{\tau}{2} \sum_{j = N_1}^{N-1} \Delta_j \sum_{i \in [K]} \sum_{\substack{k_1, k_2 \in [K]}} 
\pi_i \cdot 
Q_j^{[i], k_1, k_2} \bigl( E_j^{[i]}(\eta) \bigr); \label{eq:T_2_high_prob} \\ 
\mathcal{T}_2^{II, E^c} 
& = \tfrac{\tau}{2} \sum_{j = N_1}^{N-1} \Delta_j \sum_{i \in [K]}  \sum_{\substack{k_1, k_2 \in [K]}} \pi_i \cdot Q_j^{[i], k_1, k_2} \bigl( E_j^{[i]}(\eta)^c \bigr),  \label{eq:T_2_small_prob}
\end{align}
where we have set: for an event $A \subseteq \Omega$, 
\begin{align*}
Q_j^{[i], k_1, k_2} (A) = \mathbb{E} \Bigl[ W_D^{[k_1]} (Z_j^{[i]}) W_D^{[k_2]} (Z_j^{[i]}) \, \bigl\| \mu^{[k_1]} (Z_j^{[i]}) - \mu^{[k_2]} (Z_j^{[i]}) \bigr\|^2 \, \mathbf{1}_{A} \Bigr]. 
\end{align*}
%
%
For notational simplicity, we write: 
\begin{align*}
D_k^{[i]} (j) = \mathbb{E}\Bigl[ \bigl\| \mu^{[k]} (Z_j^{[i]}) - \mu^{[i]} (Z_j^{[i]}) \bigr\|^2 \Bigr], \qquad k \in \{k_1, k_2\}.  
\end{align*} 
Then, for the term $\mathcal{T}_2^{II, E} $, we have the following bound on the event set $E_j^{[i]} (\eta)$: 
\begin{align*}
Q_j^{[i], k_1, k_2} (E_j^{[i]}(\eta)) 
& \lesssim  
\exp \Bigl( - \eta \bigl\{ \KL{\for{p}{T-t_j}^{[i]}}{\for{p}{T-t_j}^{[k_1]}} + \KL{\for{p}{T-t_j}^{[i]}}{\for{p}{T-t_j}^{[k_2]}} \bigr\} \Bigr) 
\cdot \bigl\{ D_{k_1}^{[i]} (j) +  D_{k_2}^{[i]}(j) \bigr\} \\[0.1cm]
&  \lesssim  \bigl( \| \Sigma_i \|_2  + \snr{j}^{-1} \bigr) \exp \Bigl( - \eta \bigl\{ \KL{\for{p}{T-t_j}^{[i]}}{\for{p}{T-t_j}^{[k_1]}} + \KL{\for{p}{T-t_j}^{[i]}}{\for{p}{T-t_j}^{[k_2]}} \bigr\} \Bigr) \\
& \qquad \qquad \qquad \qquad \qquad \qquad \quad \times  \Bigl\{ \KL{\for{p}{T-t_j}^{[i]}}{\for{p}{T-t_j}^{[k_1]}} + \KL{\for{p}{T-t_j}^{[i]}}{\for{p}{T-t_j}^{[k_2]}} \Bigr\} \\[0.1cm]  
& \lesssim \bigl( \| \Sigma_i \|_2  + \snr{j}^{-1} \bigr) \exp \Bigl( - \widetilde{\eta} \cdot \bigl\{ \KL{\for{p}{T-t_j}^{[i]}}{\for{p}{T-t_j}^{[k_1]}} + \KL{\for{p}{T-t_j}^{[i]}}{\for{p}{T-t_j}^{[k_2]}} \bigr\} \Bigr) \\ 
& \le \bigl( \| \Sigma_i \|_2  + \snr{j}^{-1} \bigr) \exp \Bigl( -  \widetilde{\eta} \cdot \Bigl(  \tfrac{ \widetilde{C}^2 \snr{j} D^2}{ \snr{j} \max_{k \in [K]} \mathrm{tr} (\Sigma_k)+ D} \Bigr) \Bigr)  \quad \bigl(\, \because   \mathrm{Lemma} \, \ref{lemma:klm_lbd} \, \bigr)
\end{align*}
for some $\widetilde{\eta} \in (0, \eta)$ and $\widetilde{C} = C_3 / C_5$, where in the second line above we used the following result whose proof is postponed to Appendix \ref{sec:pf_second_moment_bd}:  
\begin{lemma} \label{lemma:second_moment_bd}
It holds that: 
\begin{align}
\begin{aligned} 
D_k^{[i]} (j) \le
4 \bigl( \| \Sigma_i \|_2  + \snr{j}^{-1} \bigr) \cdot   \KL{\, \for{p}{T-t_j}^{[i]}}{\, \for{p}{T-t_j}^{[k]}}.  
\end{aligned} \label{eq:second_moment_bd}
\end{align} 
\end{lemma}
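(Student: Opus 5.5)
The plan is to turn the posterior-mean gap into a relative Fisher information between two Gaussians, and then bound that Fisher information by the KL divergence using the explicit Gaussian formulas.

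\emph{Reduction to scores.} By \eqref{eq:posterior_mean_cluster}, the terms $y/m_{T-t_j}$ cancel in $\mu^{[k]}(t_j,y)-\mu^{[i]}(t_j,y)$. Hence
\begin{align*}
D_k^{[i]}(j) = \tfrac{\sigma_{T-t_j}^4}{m_{T-t_j}^2}\, \mathbb{E}_{Y\sim \for{p}{T-t_j}^{[i]}} \bigl\| \nabla \log \for{p}{T-t_j}^{[i]}(Y) - \nabla\log \for{p}{T-t_j}^{[k]}(Y)\bigr\|^2 .
\end{align*}
The expectation is the relative Fisher information between $\mathcal{N}(a_i,S_i)$ and $\mathcal{N}(a_k,S_k)$. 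Here I abbreviate $S_\ell = S^{\ell}_{T-t_j}$, $a_\ell = m_{T-t_j}\mu_\ell$ and $\delta = a_i - a_k$. It therefore suffices to show
\begin{align*}
\mathbb{E}\|\cdot\|^2 \le 4\,\sigma_{T-t_j}^{-2}\bigl(1+\snr{j}\|\Sigma_i\|_2\bigr)\,\KL{\for{p}{T-t_j}^{[i]}}{\for{p}{T-t_j}^{[k]}},
\end{align*}
because multiplying by $\sigma^4/m^2$ gives exactly $4(\snr{j}^{-1}+\|\Sigma_i\|_2)\,\mathrm{KL}$.

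\emph{Explicit computation.} Write $Y = a_i + S_i^{1/2}Z$ with $Z\sim\mathcal{N}(0,I_D)$. The score difference is $(S_k^{-1}-S_i^{-1})S_i^{1/2}Z + S_k^{-1}\delta$, and the cross term has zero mean. Set $A = S_i^{1/2}S_k^{-1}S_i^{1/2}$, with eigenvalues $a_1,\dots,a_D>0$. Since $S_k^{-1}-S_i^{-1} = S_i^{-1/2}(A-I)S_i^{-1/2}$, the squared norm has expectation
\begin{align*}
\mathrm{tr}\bigl((A-I)S_i^{-1}(A-I)\bigr) + \|S_k^{-1}\delta\|^2 .
\end{align*}
In parallel, the standard Gaussian formula gives
\begin{align*}
\mathrm{KL} = \tfrac12\sum_\ell (a_\ell - 1 - \log a_\ell) + \tfrac12 \|\delta\|^2_{S_k^{-1}} ,
\end{align*}
which splits into a covariance part and a mean part. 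I compare the two pieces separately.

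\emph{Mean piece.} Since $S_k \succeq \sigma^2 I_D$, we have $\|S_k^{-1}\delta\|^2 \le \|S_k^{-1}\|_2\,\|\delta\|^2_{S_k^{-1}} \le 2\sigma^{-2}\,\mathrm{KL}_{\mathrm{mean}}$.

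\emph{Covariance piece.} Using $\|S_i^{-1}\|_2\le \sigma^{-2}$, the trace is at most $\sigma^{-2}\sum_\ell (a_\ell-1)^2$. The key scalar inequality is
\begin{align*}
(a-1)^2 \le 2\max\{1,a\}\,(a-1-\log a), \qquad a>0 .
\end{align*}
For $a<1$ it follows from $-\log a \ge (1-a)+(1-a)^2/2$. For $a\ge 1$ I would check that $h(a) = 2a(a-1-\log a)-(a-1)^2$ satisfies $h(1)=0$ and $h'\ge 0$.

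\emph{Eigenvalue bound.} To control $\max\{1,a\}$, note that
\begin{align*}
a_\ell \le \|S_k^{-1}\|_2\|S_i\|_2 \le \frac{m^2\|\Sigma_i\|_2+\sigma^2}{\sigma^2} = 1+\snr{j}\|\Sigma_i\|_2 .
\end{align*}
Combining, the covariance piece is at most $4\sigma^{-2}(1+\snr{j}\|\Sigma_i\|_2)\,\mathrm{KL}_{\mathrm{cov}}$, and the mean piece is already dominated by the same factor. This yields \eqref{eq:second_moment_bd}.

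The main obstacle is the covariance comparison. Fisher information grows quadratically in $a_\ell-1$, whereas KL grows only linearly when $a_\ell$ is large. Closing the argument therefore needs the uniform upper bound on the spectrum of $A$, which is exactly what produces the $\|\Sigma_i\|_2$ factor, and it needs the scalar inequality to be verified with the right constant.
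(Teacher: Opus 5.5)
Your proposal is correct and follows essentially the same route as the paper. Both write $D_k^{[i]}(j)$ as $\sigma_{T-t_j}^4/m_{T-t_j}^2$ times the expected squared score difference under $\for{p}{T-t_j}^{[i]}$, bound the mean term by $2\sigma_{T-t_j}^{-2}\mathrm{KL}_m$ using $S^k_{T-t_j}\succeq\sigma_{T-t_j}^2 I_D$, and bound the covariance term via $\|(S^i_{T-t_j})^{-1}\|_2\le\sigma_{T-t_j}^{-2}$ together with the eigenvalue-wise inequality $(a-1)^2\le 2\bigl(1+\snr{j}\|\Sigma_i\|_2\bigr)(a-1-\log a)$, which is exactly the paper's Lemma~\ref{lemma:trace_squared_bd}; your explicit two-case check of that scalar inequality fills in what the paper leaves to ``elementary calculus.''
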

To further bound the term $Q_j^{[i], k_1, k_2} (E_j^{[i]}(\eta))$ under the case $j \in \{N_1, \ldots, N - 1 \}$, we define an integer $N_2$ as
\begin{align} \label{eq:N_2}
    N_2  & 
    =
    \begin{cases} 
    \inf \bigl\{ j = N_1, N_1 + 1, \ldots, N | \, \snr{j} > {D}/{\max_{k \in [K]} \mathrm{tr} (\Sigma_k)} \bigr\}, & \mathrm{if~the~set~is~ not~empty}; \\[0.1cm]
    N, & \mathrm{otherwise}. 
    \end{cases} 
\end{align}
We then consider the following two cases for $j$: (a). $j = N_1, \ldots, N_2 - 1$ and (b). $j = N_2, \ldots, N - 1$, where case (b) is understood to be empty when $N_2 = N$. For case (a), we have $\snr{j} \| \Sigma_i \|_2 \le \snr{j} \max_{k \in [K]} \mathrm{tr} (\Sigma_k) \le D$ and thus  
\begin{align}
Q_j^{[i], k_1, k_2} (E_j^{[i]}(\eta)) 
& \lesssim  \snr{j}^{-1} (D + 1)  \exp \Bigl( - \tfrac{\widetilde{\eta}  \widetilde{C}^2}{2} \snr{j} D \Bigr) \nonumber \\[0.1cm] 
& \le \snr{j}^{-1} \bigl(D +1 \bigr) \cdot (KD)^{-cL}, \qquad \bigl( \because \snr{j} \ge L {\log (KD)}/{D} \bigr),  \label{eq:T_2_commit_large_event_I}
\end{align} 
where $c = {\widetilde{\eta}  \widetilde{C}^{\, 2}}/{2}$. On the other hand, under case (b), we have 
\begin{align}
& Q_j^{[i], k_1, k_2} (E_j^{[i]}(\eta))  \nonumber \\ 
& \lesssim \snr{j}^{-1} \bigl( \snr{j} D^\alpha + 1 \bigr) \exp \Bigl( 
- \tfrac{\widetilde{\eta} \widetilde{C}^2}{2} \cdot \tfrac{D^2}{\max_{k \in [K]} \mathrm{tr} (\Sigma_k)} \Bigr) \quad \quad \bigl( \, \because \, \mathrm{Assumption} \, \ref{assump:cov} \bigr) \nonumber \\ 
& \lesssim \snr{j}^{-1} \cdot 
\tfrac{D^\alpha}{\varepsilon} \cdot \exp \Bigl( 
-  \tfrac{\widetilde{\eta} \widetilde{C}^2}{2} \cdot \tfrac{D^2}{M^\dagger (C_2 D^\alpha + R)} \Bigr) 
\quad \bigl( \, \because \, \mathrm{Assumption} \, \ref{assump:cov} \, \& \, \snr{j} \le \snr{\varepsilon} \lesssim \tfrac{1}{\varepsilon} \bigr) \nonumber \\ 
& \le \snr{j}^{-1} \cdot \tfrac{D^\alpha}{\varepsilon} \cdot \exp \Bigl( 
- \tfrac{\widetilde{\eta} \widetilde{C}^2}{4 M^\dagger C_2} \cdot {D^{2- \alpha}} \Bigr) 
\lesssim \snr{j}^{-1} \cdot \tfrac{D^{2 \alpha - 2}}{\varepsilon} \cdot \exp \Bigl( 
- \widetilde{c} \cdot {D^{2- \alpha}} \Bigr) \label{eq:T_2_commit_large_event_II} 
\end{align}
for some $\widetilde{c} \in (0, {\widetilde{\eta} \widetilde{C}^2}/{4 M^\dagger C_2}).$ Substituting the bounds (\ref{eq:T_2_commit_large_event_I}) and (\ref{eq:T_2_commit_large_event_II}) into (\ref{eq:T_2_high_prob}), we obtain
\begin{align}
\mathcal{T}^{II, E}_2 
& \lesssim \tau \Biggl[ K^2 D (KD)^{-cL} \cdot \sum_{j = N_1}^{N_2 - 1} \tfrac{\Delta_j}{\snr{j}} 
+ \tfrac{K^2}{\varepsilon} \exp (- \widetilde{c} \, D^{2 - \alpha}) \cdot \sum_{j = N_2}^{N - 1} \tfrac{\Delta_j}{\snr{j}} \Biggr] \nonumber \\ 
& \lesssim \tau \Bigl[ \bigl( \log \tfrac{1}{\varepsilon} + \log D \bigr) \cdot \Bigl\{ K^2 D (KD)^{-cL} + \tfrac{K^2}{\varepsilon} \exp (- \widetilde{c} \, D^{2 - \alpha}) \Bigr\} \Bigr],  \label{eq:T_2_commit_high_prob_bd}
\end{align}
where we used 
\begin{align} \label{eq:rel_diff_snr_bd_II}
    \max \Bigl\{ \sum_{j = N_1}^{N_2 - 1} \tfrac{\Delta_j}{\snr{j}},  \sum_{j = N_2}^{N - 1} \tfrac{\Delta_j}{\snr{j}} \Bigr\} \le \sum_{j = N_1}^{N-1} \tfrac{\Delta_j}{\snr{j}} \le \log \tfrac{1}{\varepsilon} + \log D. 
\end{align}
We now consider the term $\mathcal{T}_2^{II, E^c}$ defined in (\ref{eq:T_2_small_prob}). For ease of notation, we write $B = (E_j^{[i]} (\eta))^c$ and $\mathbb{P}_1 = \for{p}{T-t_j}^{[i]}$, and  introduce a probability measure $\mathbb{P}_2$ defined as $\mathrm{d} {\mathbb{P}_2} = \tfrac{\mathbf{1}_{B}}{\mathbb{P}_1 (B)} \mathrm{d} \mathbb{P}_1$. We then get
\begin{align*}
& \sum_{k_1, k_2 = 1}^K Q_{j}^{[i], k_1, k_2} (B)  
\le \mathbb{E}_{\mathbb{P}_1} \Bigl[ 
\max_{k_1, k_2 \in [K]} \bigl\| \mu^{[k_1]} (Z_j^{[i]})
- \mu^{[k_2]} (Z_j^{[i]}) \bigr\|^2 \cdot \mathbf{1}_{B} \Bigr] \\[0.2cm]
& \quad = \tfrac{\mathbb{P}_1 \bigl( B \bigr)}{\xi_j} \cdot \mathbb{E}_{\mathbb{P}_2} \Bigl[ \xi_j \max_{k_1, k_2 \in [K]} \bigl\| \mu^{[k_1]} (Z_j^{[i]}) - \mu^{[k_2]} (Z_j^{[i]}) \bigr\|^2 \Bigr]  \\[0.2cm]
& \quad \le \tfrac{\mathbb{P}_1 \bigl( B \bigr)}{\xi_j} \cdot 
\Biggl\{ \log \mathbb{E}_{\mathbb{P}_1} \Bigl[ \exp \Bigl(\xi_j \max_{k_1, k_2 \in [K]} \bigl\| \mu^{[k_1]} (Z_j^{[i]})
- \mu^{[k_2]} (Z_j^{[i]}) \bigr\|^2 \Bigr) \Bigr] 
+ \KL{\mathbb{P}_2}{\mathbb{P}_1} \Biggr\}, 
\end{align*}
for $\xi_j > 0$ such that $\mathbb{E}_{\mathbb{P}_1} \Bigl[ \exp \Bigl(\xi_j \max_{k_1, k_2 \in [K]} \bigl\| \mu^{[k_1]} (Z_j^{[i]}) - \mu^{[k_2]} (Z_j^{[i]}) \bigr\|^2 \Bigr) \Bigr] < \infty$, where the second bound is obtained via Donsker-Varadhan's variational formula \citep{Dupuis:97}. We immediately have  $\KL{\mathbb{P}_2}{\mathbb{P}_1} = - \log \mathbb{P}_1 (B)$. 
Also, Theorem \ref{thm:NA_transition_est} gives that for any $q \ge 1, \eta \in (0, 1)$, there exists $D^\dagger \in \mathbb{N}$ such that for any  $D \ge D^\dagger$, we have $\mathbb{P}_1 (B) \le D^{-q}$. Thus, by setting $\xi_j = m_{T-t_j}^2 / (16 \sigma_{T-t_j}^4 \max_{i, k \in [K]} \| H_{k, i} (j) \|_2)$, we obtain: 
\begin{align}
\sum_{k_1, k_2 = 1}^K Q_{j}^{[i], k_1, k_2} (B) 
& \lesssim \mathbb{P}_1 (B) \cdot \Bigl\{ \tfrac{1}{\xi_j} \log K  + D + \snr{j} M^\dagger D^{2 \alpha} +  \tfrac{1}{\xi_j} \log \tfrac{1}{\mathbb{P}_1 (B)} \Bigr\}
\quad \bigl( \because \mathrm{\, Lemma\,} \ref{lemma:max_chi_squared} \bigr) \nonumber \\
& \lesssim D^{2 \alpha -q} \cdot \snr{j} \cdot \bigl( \log K + M^\dagger  \bigr) + D^{1-q} + \snr{j} D^{2\alpha}~ \mathbb{P}_1 (B) \log \tfrac{1}{\mathbb{P}_1 (B)} \quad \bigl(\, \because (\ref{eq:bd_inv_xi}) \bigr) \nonumber \\[0.1cm]   
& \lesssim D^{2 \alpha -q} \cdot \snr{j} \cdot \bigl( \log K + M^\dagger \bigr)  + D^{1-q} + D^{2 \alpha -q} \cdot \snr{j} \cdot \log D \nonumber \\[0.1cm] 
& \lesssim \snr{j}^{-1} \bigl( D^{2 \alpha - q} \cdot \varepsilon^{-2} \cdot (\log K + M^\dagger + \log D) +  D^{1-q} \cdot \varepsilon^{-1} \bigr), \label{eq:T_2_commit_small_event}
\end{align} 
where we used for $x \log (1/x) \le r \log (1/r)$ $x \in [0, r], r \le e^{-1}$, and $\snr{j} \lesssim \varepsilon^{-1}$ in the third and fourth inequality, respectively. Thus, the bound \eqref{eq:T_2_commit_small_event} together with (\ref{eq:rel_diff_snr_bd_II}) leads to 
\begin{align}
\mathcal{T}_2^{II, E^c} 
& \lesssim \tau \Bigl[ \bigl( D^{2 \alpha - q} \cdot \varepsilon^{-2} \cdot (\log K + M^\dagger + \log D) +  D^{1-q} \cdot \varepsilon^{-1} \bigr) \sum_{j = N_1}^N \tfrac{\Delta_j}{\snr{j}}
 \Bigr] \nonumber \\ 
 & \lesssim \tau \Bigl[ \bigl( D^{2 \alpha - q} \cdot \varepsilon^{-2} \cdot (\log K + M^\dagger + \log D) +  D^{1-q} \cdot \varepsilon^{-1} \bigr) \bigl( \log \tfrac{1}{\varepsilon} + \log D \bigr) 
 \Bigr].  \label{eq:T_2_commit_small_prob_bd}
\end{align}
\subsection{Step 5: Final Bound}
\label{sec:pf_step_5} 
%
Combining the upper bounds \eqref{eq:T_1_bd}, 
\eqref{eq:T_2_mixing_bd}, \eqref{eq:T_2_commit_high_prob_bd} and \eqref{eq:T_2_commit_small_prob_bd} into the term $R_2^{II}$ defined in (\ref{eq:R_2_II}), the time-discretisation error is bounded as follows: 
\begin{align}
\begin{aligned} \label{eq:eq:time_disc_bd}
    R_2^{II} 
    & \lesssim \mathcal{T}_1 + \mathcal{T}_2^I + \mathcal{T}_2^{II, E} + \mathcal{T}_2^{II, E^c} \\  
    & \lesssim \tau \Bigl[ M^\dagger \Bigl( \log \tfrac{1}{\varepsilon} + T \Bigr) + \Bigl(\tfrac{\log (KD)}{D^{1 - \alpha}} \Bigr)^2  \,  \bigl( \log K  + M^\dagger \bigr) + \log (KD) \\
    & \quad \quad + \bigl( \log \tfrac{1}{\varepsilon} + \log D \bigr) \cdot \Bigl\{ K^2 D (KD)^{-cL} + \tfrac{K^2}{\varepsilon} \exp (- \widetilde{c} \, D^{2 - \alpha}) \Bigr\} \\ 
    & \quad \quad + \bigl( D^{2 \alpha - q} \cdot \varepsilon^{-2} \cdot (\log K + M^\dagger + \log D) +  D^{1-q} \cdot \varepsilon^{-1} \bigr) \bigl( \log \tfrac{1}{\varepsilon} + \log D \bigr) \Bigr]. 
\end{aligned}
\end{align} 
For the number mixture of $K$ that grows polynomially in  $D$, by choosing sufficiently large $q \ge 2$ and $L > 0$, we have under $T \ge \tfrac{1}{2} \log D$ that for any sufficiently large $D$, 
\begin{align} \label{eq:time_disc_bd_simplified}
R_2^{II} \lesssim \tau  M^\dagger \Bigl( \log \tfrac{1}{\varepsilon} + T \Bigr). 
\end{align}
We thus conclude Theorem \ref{thm:main} from \eqref{eq:prior_gap}, \eqref{eq:score_est_gap} and \eqref{eq:time_disc_bd_simplified}. 

\section{Conclusion and Discussion} \label{sec:conclusion}
This work has analytically studied the geometric adaptivity of denoising diffusion models when the target distributions to sample from are characterised as a union of well-separated classes with their own low-dimensional structure, motivated by the \emph{union of manifolds hypothesis} \citep{brown:22}. By focusing on $K$-Gaussian mixture models, our analysis has focused on the dynamical behaviour of the score function and, in particular, the behaviour of the posterior weights for sampling from each cluster component. 
Under the conditions, particularly the  $\Theta(D)$ inter-class separation in cluster means, we have shown that the concentration of the posterior weights occurs with high probability after the SNR reaches a threshold ${\Theta} \bigl( \log (KD) / D \bigr)$. We then use this result to obtain a KL divergence error bound, where the time-discretisation error is shown to scale linearly with the intrinsic dimensionality rather than the ambient dimensionality $D$. The main difference between our analysis and the literature (e.g., \cite{ben:24, conf:25}) is that our analysis accounts for the dynamical regimes of the denoising process. Specifically, during the mixing phase (before the posterior commitment), the small SNR suppresses the effect of the inter-class separation of means and covariances in terms of $D$; whereas after the cluster commitment, when the SNR is no longer sufficiently small, the posterior weight decay neutralises the large cluster separations. We also note that the literature on diffusion models targeting GMMs \citep{li:26} establishes a dimension-free bound under shared isotropic covariances, while our work treats GMMs with heterogeneous anisotropic covariances and derives a desirable error bound by employing the phase transition. Our analysis exploits $\Theta(D)$ separation in the mean vectors to induce the phase transition, with the key SNR scaling with $D$, but it could be extended to cases where covariance separation plays a central role in the transition.   

The numerical experiments in Section \ref{sec:numerical_experiments} demonstrate the SNR scaling $\Theta(\log D/D)$ in the phase transition for real high-dimensional datasets beyond our geometric framework (GMMs). Given this, our phase transition-driven error analysis can be applied to a wider class of mixture models where we believe similar intrinsic-dimension adaptivity would hold, provided we can still prove concentration results on the posterior weights and the score function for each mixture component is sufficiently well-behaved. The former is likely to hold if the mixture components are sub-Gaussian, while the latter would apply under assumptions such as strong log-concavity of the cluster densities.

Another interesting extension would be to hierarchical clusters. For a mixture model with `super-components', each of which is a Gaussian mixture model, with $\mathcal{O}(D)$ separation between the super-component means, and smaller separation, say $\mathcal{O}(1)$, between the means of cluster components within each super-component. For this setting, we would still get concentration of the posterior weights to a super-component with high probability. Then, the KL divergence error is controlled by the intrinsic dimensionality in a similar manner, so that the relatively small separation within each super-component does not degrade the dimensional scaling in the error bound.

\subsection*{Acknowledgement} 
{We thank Prof Kenji Fukumizu and Prof Paul Jenkins for detailed comments and feedback on this work. The authors acknowledge support from EPSRC grant EP/Y028783/1.} 

\vskip 0.2in

\bibliographystyle{plainnat}
\bibliography{main}

\section*{Appendix}

\appendix 

\numberwithin{theorem}{section} 

\section{Property of Time-Discretisation Schedule} \label{sec:pf_N_scale} 
To show Lemma \ref{lemma:complexity_N}, we first prove the following result, which is also key in the proof of Theorem \ref{thm:main}: 
\begin{lemma} \label{lemma:noise_ratio_bd}
For the time-discretisation schedule (\ref{eq:t_schedule}), it holds that: 
\begin{align*}
\tfrac{\sigma_{T-t_j}^2}{\sigma_{T-t_{j+1}}^2} \le 2, \quad
\tfrac{\snr{T-t_{j+1}}}{\snr{T-t_j}} \le 4,  \qquad  j = 0, \ldots, N-1. 
\end{align*} 
\end{lemma}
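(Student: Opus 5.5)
Write $s_j \equiv T - t_j$, so that $s_{j+1} = s_j - h_{j+1}$ with $h_{j+1} \le \tau \sigma_{s_j}^2$ and $\tau \le 1/4$. We will first prove the bound on the noise ratio and then obtain the SNR bound from it.

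For the first bound, $\sigma_s^2 = 1 - e^{-2s}$ is increasing and concave in $s$, and its derivative is $2e^{-2s}$. The mean value theorem gives
\begin{align*}
\sigma_{s_j}^2 - \sigma_{s_{j+1}}^2 = 2e^{-2\theta} h_{j+1} \le 2 e^{-2 s_{j+1}} h_{j+1}, \qquad \theta \in [s_{j+1}, s_j].
\end{align*}
Rather than fighting this exponential factor, we will use the elementary inequality $\sigma_s^2 \ge \min\{1/2,\, s/2\}$, which holds because $1-e^{-2s} \ge s/2$ for small $s$ and $1-e^{-2s}\ge 1/2$ for $s \ge \tfrac12\log 2$. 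We also use $h_{j+1} \le \tau \sigma_{s_j}^2 \le \tau\min\{1, 2s_j\}$, since $\sigma_s^2 \le 2s$.

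The argument then splits into two cases. If $s_j \ge 1$, then $s_{j+1} \ge s_j - \tau \ge 3/4$. Hence $\sigma_{s_{j+1}}^2 \ge 1 - e^{-3/2} > 1/2 \ge \sigma_{s_j}^2/2$.

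If $s_j < 1$, then $h_{j+1} \le 2\tau s_j \le s_j/2$, so $s_{j+1} \ge s_j/2$. Because $s \mapsto \sigma_s^2$ is concave with $\sigma_0^2 = 0$, the map $s \mapsto \sigma_s^2/s$ is nonincreasing. Therefore $\sigma_{s_j/2}^2 \ge \tfrac12 \sigma_{s_j}^2$, and monotonicity gives $\sigma_{s_{j+1}}^2 \ge \sigma_{s_j/2}^2 \ge \sigma_{s_j}^2/2$.

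The concavity argument in fact covers both cases whenever $s_{j+1} \ge s_j/2$. The only possible failure is $s_j$ large, and there $h \le \tau \le 1/4$ together with $s_j \ge 1$ already gives $s_{j+1}\ge 3/4 \ge s_j/2$ whenever $s_j \le 3/2$. For $s_j > 3/2$, the direct numeric bound from the first case applies. The last step $j = N-1$ is covered as well, since there $h_N$ is at most $\tau\sigma_{s_{N-1}}^2$.

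For the SNR bound, write
\begin{align*}
\frac{\snr{s_{j+1}}}{\snr{s_j}} = \frac{m_{s_{j+1}}^2}{m_{s_j}^2}\cdot \frac{\sigma_{s_j}^2}{\sigma_{s_{j+1}}^2} = e^{2h_{j+1}}\cdot\frac{\sigma_{s_j}^2}{\sigma_{s_{j+1}}^2}.
\end{align*}
The first factor satisfies $e^{2h_{j+1}} \le e^{2\tau} \le e^{1/2} < 2$, and the second factor is at most $2$ by the first part. This gives the bound of $4$.

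The main obstacle is checking the constant $2$ at the crossover between the small-$s$ and large-$s$ regimes. The concavity argument, which gives $\sigma^2_{s/2} \ge \sigma^2_s/2$, handles this cleanly, so the case split reduces to confirming that $s_{j+1} \ge s_j/2$ or $s_{j+1} \ge 3/4$ always holds.
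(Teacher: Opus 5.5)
Your proof is correct, and your SNR step is exactly the paper's: $m^2_{s_{j+1}}/m^2_{s_j}=e^{2h_{j+1}}\le e^{2\tau}$, multiplied by the noise ratio. For the noise ratio, however, the paper takes a more direct route. It bounds the derivative of $s\mapsto\sigma_s^2$ by $2$; in other words, the factor $e^{-2\theta}$ you wanted to avoid is simply bounded by $1$. This gives $\sigma^2_{s_j}-\sigma^2_{s_{j+1}}\le 2h_{j+1}\le 2\tau\sigma^2_{s_j}$, hence $\sigma^2_{s_j}/\sigma^2_{s_{j+1}}\le 1/(1-2\tau)\le 2$. Because the step is already proportional to $\sigma^2_{s_j}$, no case split is needed.

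Your argument instead converts the relative step in $\sigma^2$ into a relative step in $s$ via $\sigma^2_s\le 2s$, then returns to $\sigma^2$ via concavity ($\sigma^2_s/s$ nonincreasing). This works. Note, though, that $h_{j+1}\le 2\tau s_j\le s_j/2$ holds for every $j$, not only when $s_j<1$, so your concavity case alone covers all steps. The case $s_j\ge 1$, the paragraph about a ``possible failure'' at large $s_j$, and the unused inequality $\sigma_s^2\ge\min\{1/2,s/2\}$ are therefore superfluous, and there is no crossover obstacle.

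If you use $s_{j+1}\ge(1-2\tau)s_j$ in place of $s_{j+1}\ge s_j/2$, your concavity argument recovers the paper's sharper constant $1/(1-2\tau)$. Still, the paper's version is shorter and needs less structure: it uses only the Lipschitz bound on $s\mapsto\sigma_s^2$, whereas yours additionally relies on concavity.
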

\begin{proof}
Noticing that $h_{i+1} = \tau \sigma_{T-t_i}^2, \, i = 0, 1, \ldots, N-2$ and $h_N \le \tau \sigma_{T-t_{N-1}}^2$, we have that for all $j = 0, 1, \ldots, N-1$, 
\begin{align*}
\sigma_{T-t_j}^2 - \sigma_{T-t_{j+1}}^2 = \int_{T-t_{j+1}}^{T-t_j} \tfrac{d}{ds} \sigma_{s}^2 ds 
\le 2 h_{j+1} \le 2 \tau \sigma_{T-t_j}^2, 
\end{align*} 
which leads to $\sigma_{T-t_j}^2 / \sigma_{T-t_{j+1}}^2 \le 1 / (1 - 2 \tau) \le 2$ for $\tau \in (0, 1/4]$. For the latter inequality, using $m_{T-t_{j+1}}^2/m_{T-t_j}^2 \le \exp \bigl( 2 h_{j+1} \bigr) \le \exp (2 \tau)$, we get
\begin{align*}
\tfrac{\snr{T-t_{j+1}}}{\snr{T-t_j}} 
= \tfrac{m_{T-t_{j+1}}^2}{m_{T-t_{j}}^2} \cdot \tfrac{\sigma_{T-t_{j}}^2}{\sigma_{T-t_{j+1}}^2} \le 2 \exp (2 \tau) \le 4, 
\end{align*} 
which concludes the proof. 
\end{proof}

\noindent 
(\textit{Proof of Lemma \ref{lemma:complexity_N}}) 
We consider the term $T_j \equiv \int_{t_j}^{t_{j+1}} \tfrac{ds}{\sigma^2_{T-s}}, \, j = 0, 1, \ldots, N-1$. Because $s \mapsto 1/ \sigma_{T-s}^2$ is an increasing function, (\ref{eq:t_schedule}) and Lemma \ref{lemma:noise_ratio_bd} give that 
$ T_j \le \tau {\sigma_{T-t_j}^2}/{\sigma_{T-t_{j+1}}^2} \le 2 \tau, \, j = 0, 1, \ldots, N-1$
where for $j = N-1$ we considered $h_{N} \le \tau \sigma_{T-t_{N-1}}^2$. On the other hand, we have 
\begin{align*}
    T_j \ge  
    \begin{cases}
    \tau & j = 0, 1, \ldots, N-2; \\ 
    0 & j = N-1. 
    \end{cases}
\end{align*}
Taking summation, we get $\tau (N-1) \le \sum_{j = 0}^{N-1} T_j \le 2 N \cdot \tau$ and 
\begin{align*}
\sum_{j = 0}^{N-1} T_j = \int_0^{T-\varepsilon} \tfrac{ds}{\sigma_{T-s}^2} = T-\varepsilon + \tfrac{1}{2} \log \frac{1 - \exp (- 2T)}{1 - \exp (- 2 \varepsilon)} = \Theta \bigl( T - 1 + \log \tfrac{1}{\varepsilon} \bigr),  
\end{align*}
which concludes Lemma \ref{lemma:complexity_N}. 

\section{Proofs for Posterior Classification Analysis}

\subsection{Proof of Lemma \ref{lemma:weight_bd_j}} \label{sec:pf_weight_bd}

The definition of posterior weight yields  
\begin{align*}
\log W_D^{[i]} (t, y) 
= \log \tfrac{\pi_i \cdot \for{p}{T-t}^{[i]} (y)}{\sum_{\ell \in [K]} \pi_\ell \cdot \for{p}{T-t}^{[\ell]} (y)}
\le \min \bigl\{ \log \tfrac{\pi_i}{\pi_j} + 
Z^{(i, j)}_t (y), 0 \bigr\}, \quad y \in \mathbb{R}^D, \, j \in [K],  
\end{align*} 
with $Z^{(i, j)}_t (y) = \log \tfrac{ \mathcal{N} (y; \, m_{T-t} \mu_{i}, S_{T-t}^{i})}{ \mathcal{N}(y; \, m_{T-t} \mu_{j}, S_{T-t}^{j})}$. 
The conditional random variable can be expressed as $\widetilde{Y}^{[j]}_{T-t} \sim \mathcal{N} (m_{T-t} \mu_j, S_{T-t}^j)$ due to $\mathrm{Law} (\for{X}{t}) = \mathrm{Law} (\back{X}{T-t}) $, where we recall $S_{T-t}^j$ in (\ref{eq:forward_noise_cov}). We then decompose $Z^{(i, j)}_t (\widetilde{Y}^{[j]}_{T-t}) = \Psi_1 + \Psi_2 + \Psi_3$ where 
\begin{align*} 
\Psi_1 & \equiv - m_{T-t} (\widetilde{Y}^{[j]}_{T-t} - m_{T-t} \mu_j)^\top (S_{T-t}^i)^{-1} (\mu_j - \mu_i); \\ 
\Psi_2 & \equiv \tfrac{1}{2} (\widetilde{Y}^{[j]}_{T-t} - m_{T-t} \mu_j)^\top \bigl( (S_{T-t}^j)^{-1} - (S_{T-t}^i)^{-1} \bigr) (\widetilde{Y}^{[j]}_{T-t} - m_{T-t} \mu_j)  ;\\ 
\Psi_3 & \equiv \tfrac{1}{2} \log \tfrac{\det S_{T-t}^j}{\det S_{T-t}^i} - \tfrac{m_{T-t}^2}{2} (\mu_j - \mu_i)^\top (S_{T-t}^i)^{-1} (\mu_j - \mu_i).  
\end{align*}
The first term follows a Gaussian distribution with mean $0$ and variance given as 
\begin{align*} 
\mathrm{Var} [\Psi_1] = m_{T-t}^2 (\mu_j - \mu_i)^\top 
(S_{T-t}^i)^{-1} (S_{T-t}^j) (S_{T-t}^i)^{-1} (\mu_j - \mu_i) \equiv \mathscr{T}_D^{[i, j]} (t).   
\end{align*}
Thus, the standard high probability argument for the Gaussian variable gives: for $\delta \in (0,1)$, 
\begin{align*}
\mathbb{P} \Bigl( | \Psi_1 | \ge \sqrt{2 \mathrm{Var} [\Psi_1] \log \tfrac{2}{\delta}} 
\Bigr) \le \tfrac{\delta}{2}.  
\end{align*}
The second term $\Psi_2$, quadratic w.r.t. the Gaussian $\widetilde{Y}_{T-t}^{[j]} - m_{T-t} \mu_j \sim \mathcal{N} (0, S_{T-t}^j)$ is expressed as $\Psi_2 = z^\top A z$ with 
\begin{align*}
A \equiv \tfrac{1}{2} \bigl( 
I_D - (S_{T-t}^j)^{1/2} (S_{T-t}^i)^{-1} (S_{T-t}^j)^{1/2}
\bigr), \qquad z \sim \mathcal{N} (0, I_D).   
\end{align*}
Then, application of the concentration inequality \cite[Proposition 1]{Hsu:12} to $\Psi _2$ yields: for $\delta \in (0,1)$,  
\begin{align*}
\mathbb{P} \Bigl( \Psi_2 \ge 
\mathrm{tr} (A) + \sqrt{4 \mathrm{tr} (A^2) \log \tfrac{2}{\delta}} + 2 \| A \|_{2} \log \tfrac{2}{\delta} \Bigr) \le \tfrac{\delta}{2}. 
\end{align*}
Thus with probability at least $1 - \delta$, we have: 
\begin{align*}
& Z_t^{(i, j)} (\widetilde{Y}_{T-t}^{[j]})  
 \le \Psi_3 + \mathrm{tr} (A) + \sqrt{2 \mathrm{Var}[\Psi_1] \log \tfrac{2}{\delta}} + \sqrt{2 \mathrm{Var} [\Psi_2] \log \tfrac{2}{\delta}}  + 2 \| A \|_{2} \log \tfrac{2}{\delta} \nonumber \\[0.1cm]  
& = - \KL{\, \for{p}{T-t}^{[j]}}{\, \for{p}{T-t}^{[i]}}
+ \sqrt{2 \mathscr{T}_D^{[i, j]} (t) \log \tfrac{2}{\delta}}
+ \sqrt{2 \mathscr{U}_D^{[i, j]} (t) \log \tfrac{2}{\delta}}  
+  2 \mathscr{V}_D^{[i, j]} (t) \log \tfrac{2}{\delta},  \nonumber 
\end{align*}
where we used: $\mathrm{Var} [\Psi_2] = 2 \mathrm{tr} (A^2) \equiv \mathscr{U}_D^{[i, j]}(t)$ and 
\begin{align*}
\Psi_3 + \mathrm{tr} (A) =  
- \KL{\, \for{p}{T-t}^{[j]}}{\, \for{p}{T-t}^{[i]}}, 
\end{align*}
which completes the proof of Lemma \ref{lemma:weight_bd_j}. 

\subsection{Proof of Theorem \ref{thm:NA_transition_est}} \label{sec:pf_first_main}

Recall the definition of $C_D^{[i,j]}$ in (\ref{eq:C}). We set $\delta = D^{- q}, \, q \ge 2$ and write $r \equiv \log \tfrac{2K}{\delta} = \Theta \bigl( \log (KD) \bigr)$, and $S = \snr{T-t}$ for convenience. We also write $ \mathscr{T} (t) = \mathscr{T}_D^{[i, j]} (t), \, \mathscr{U} (t) = \mathscr{U}_D^{[i, j]} (t)$ and $\mathscr{V} (t) = \mathscr{V}_D^{[i, j]} (t)$ whose definitions are found in (\ref{eq:C}).
Then, the inequality 
\begin{align} \label{eq:target_condition}
C_D^{[i , j]} (t, \delta/K) \ge \eta \KL{\for{p}{T-t}^{[j]}}{\for{p}{T-t}^{[i]}}
\end{align}
is equivalent to $(1 - \eta) \ge G(t)$ with $G(t) = N(t)/ D(t)$, where 
\begin{align}
N (t) & \equiv  \log \tfrac{\pi_{i}}{\pi_j} 
+ \sqrt{2 \mathscr{T} (t) r }
+ \sqrt{2 \mathscr{U} (t) r }  
+ 2 \mathscr{V} (t) r; \label{eq:Nume} \\   
D (t) & \equiv  \KL{\for{p}{T-t}^{[j]}}{\for{p}{T-t}^{[i]}}. 
\label{eq:Deno}
\end{align}
We will derive a sufficient condition for \eqref{eq:target_condition}. To this end, we consider a simplified upper bound for $G(t)$, denoted by $\widetilde{G}(t)$, that involves only $ S = \snr{T-t}$ as a function of time $t$, and then solve the inequality $1 - \eta \ge \widetilde{G}(t)$, which is sufficient for the original condition (\ref{eq:target_condition}).  
%
%
First, we consider an upper bound of the term $N(t)$. Noticing that $\KL{\, \for{p}{T-t}^{[j]}}{\, \for{p}{T-t}^{[i]}} = \KLM{\, \for{p}{T-t}^{[j]}}{\, \for{p}{T-t}^{[i]}} + \KLC{\, \for{p}{T-t}^{[j]}}{\, \for{p}{T-t}^{[i]}}$ with 
\begin{align}
\KLM{\, \for{p}{T-t}^{[j]}}{\, \for{p}{T-t}^{[i]}} 
& := \tfrac{m_{T-t}^2}{2} \| \mu_i - \mu_j \|^2_{(S_{T-t}^i)^{-1}};  \label{eq:klm} \\[0.1cm]
\KLC{\for{p}{T-t}^{[j]}}{\for{p}{T-t}^{[i]}}
& := \frac{1}{2} \Bigl( \tr \bigl( S_{T-t}^j  (S_{T-t}^{i})^{-1} \bigr) - D - \log \tfrac{\det S_{T-t}^j}{\det S_{T-t}^i} \Bigr),  \label{eq:klc}
\end{align}
we deduce from the bound (\ref{eq:T_bd}) in Lemma \ref{lemma:bd_vars} that 
\begin{align}
\sqrt{2 \mathscr{T} (t) r } 
& \le \sqrt{4 r  \bigl(1 + S \cdot \| \Sigma_i - \Sigma_j \|_2 \bigr)
\KLM{\for{p}{T-t}^{[j]}}{\for{p}{T-t}^{[i]}}} \nonumber \\ 
& \le \tfrac{2 r \bigl(1 + S \cdot \| \Sigma_i - \Sigma_j \|_2 \bigr)}{(1 - \eta)} 
+ \tfrac{1-\eta}{2} \KLM{\for{p}{T-t}^{[j]}}{\for{p}{T-t}^{[i]}}, \label{eq:bd_1}
\end{align}
where we used Arithmetic/Geometric Mean inequality:  $\sqrt{a b} \le \tfrac{a}{2 (1- \eta)} + \tfrac{(1 - \eta) b}{2}, \, a, b > 0$. Similarly, we obtain from Lemma \ref{lemma:trace_squared_bd} in Appendix \ref{sec:aux} that 
\begin{align}
\sqrt{2 \mathscr{U} (t) r }   
\le \tfrac{2 r \bigl(1 + S \cdot \| \Sigma_j \|_2 \bigr)}{(1 - \eta)} 
+ \tfrac{1-\eta}{2} \KLC{\for{p}{T-t}^{[j]}}{\for{p}{T-t}^{[i]}}.  \label{eq:bd_2}
\end{align}
Substitution of the bounds (\ref{eq:bd_1}--\ref{eq:bd_2}) and (\ref{eq:V_bd}) into (\ref{eq:Nume}) with Assumptions \ref{assump:prior}, \ref{assump:cov}, \ref{assump:mean} leads to 
\begin{align*}
N (s) & \le \Cone  + \tfrac{4 r}{1 - \eta}  +  \Ctwo D^\alpha \Bigl( \tfrac{10 - 4 \eta}{1 -\eta} \Bigr) 
r  \cdot S + \tfrac{1- \eta}{2} \KL{\for{p}{T-t}^{[j]}}{\for{p}{T-t}^{[i]}}, 
\end{align*}
where the constants $\Cone, \Ctwo $ are collected in Assumptions \ref{assump:prior} and \ref{assump:cov}. Thus, we obtain 
\begin{align*}
G(t) 
& \le \tfrac{\Cone  + \tfrac{4 r}{1 - \eta}  +  \Ctwo D^\alpha \Bigl( \tfrac{10 - 4 \eta}{1 -\eta} \Bigr) 
r  \cdot S}{\KL{\for{p}{T-t}^{[j]}}{\for{p}{T-t}^{[i]}}} + \tfrac{1- \eta}{2}  \le \tfrac{\Cone  + \tfrac{4 r}{1 - \eta}  +  \Ctwo D^\alpha \Bigl( \tfrac{10 - 4 \eta}{1 -\eta} \Bigr) 
r  \cdot S}{\tfrac{S}{2} \Bigl(  \tfrac{\Cfour}{\Csix} \Bigr)^2 \cdot \tfrac{D^2}{S \cdot \tr (\Sigma_i) 
+ D}} + \tfrac{1- \eta}{2} \equiv \widetilde{G}(t),   
\end{align*}
where in the second inequality we used Lemma \ref{lemma:klm_lbd} under Assumptions \ref{assump:mean} and \ref{assump:non-concentration}, provided in Appendix \ref{sec:aux}. Then, $1-\eta \ge G(t)$ is satisfied if $1-\eta \ge \widetilde{G}(t)$ holds, which is simplified as:
\begin{align*}
\tfrac{1-\eta}{2} \cdot \tfrac{\widetilde{C}^2 {D}^2}{S \cdot \tr (\Sigma_i) + {D}} \cdot \tfrac{S}{2} \ge \mathscr{B} + \mathscr{C} S, 
\end{align*}
where we have set: 
\begin{equation*}
\mathscr{B} \equiv \Cone + \tfrac{4r}{1-\eta} > 0, \quad \mathscr{C} \equiv \Ctwo D^\alpha \left(\tfrac{10-4\eta}{1-\eta}\right) r > 0, \quad \widetilde{C} \equiv \tfrac{\Cfour}{\Csix} > 0. 
\end{equation*} 
Rearranging the terms with respect to $S$, we obtain the following quadratic inequality:
\begin{equation} \label{eq:quad_S}
\mathscr{C} \, \tr (\Sigma_i) \, S^2 - \mathscr{A} S + \mathscr{B} {D} \le 0, \quad \text{where} \quad \mathscr{A} \equiv \tfrac{1-\eta}{4} \widetilde{C}^2 {D}^2 
- \mathscr{C} {D} - \mathscr{B} \, \tr (\Sigma_i).
\end{equation} 
Under Assumption \ref{assump:cov}, we have
\begin{align} \label{eq:trace_bd}
\tr (\Sigma_i)  = \sum_{k = 1}^D \lambda_k (\Sigma_i) \le 
\Ctwo M_i D^\alpha + \nu_i \le \Ctwo (D^\alpha + R) M^\dagger
\end{align} 
where the constants $\Ctwo > 0, \nu_i \ge 0, \, R \in [0, 1)$ are collected in Assumption \ref{assump:cov}. Thus, there exists $D_1 \in \mathbb{N}$ such that for all $D \ge D_1$, we have the discriminant strictly positive, i.e.,
\begin{align*} 
\Delta \equiv \mathscr{A}^2 - 4  \mathscr{C} \tr (\Sigma_i) \mathscr{B} {D} > 0. 
\end{align*}
This is because $\mathscr{A}^2 = \Theta(D^4)$ and $\mathscr{C} \tr (\Sigma_i) \mathscr{B}  {D} = \mathcal{O} (D^{1 + 2\alpha} (\log D)^2)$, and thus the leading term dominates the lower-order term for any $\alpha \in [0, 1)$. 
Under $\Delta > 0$, solving \eqref{eq:quad_S} under $S \ge \snr{T}$ leads to $S \in \bigl[ \max \{ \snr{T}, \underline{S} \}, \overline{S} \bigr]$, where 
\begin{align*}
\underline{S} = \tfrac{\mathscr{A} - \sqrt{\Delta}}{2 \mathscr{C} \tr (\Sigma_i)}, \qquad  
\overline{S} = \tfrac{\mathscr{A} + \sqrt{\Delta}}{2 \mathscr{C} \tr (\Sigma_i)}.  
\end{align*} 
First, we note from (\ref{eq:trace_bd}) that for all sufficiently large $D$,
\begin{align*}
\overline{S} > \tfrac{\mathscr{A}}{2 \mathscr{C} \tr (\Sigma_i)} 
\gtrsim \tfrac{D^2}{D^{2 \alpha} \log D} = \tfrac{D^{2 - 2\alpha}}{\log D}, 
\end{align*} 
where the right-hand side is an increasing function of $D$ since $\alpha < 1$. Therefore, we have: for any fixed early stopping criterion $\varepsilon \in (0,1)$, there exists $D_2$ such that for all $D \ge D_2$, the upper root $\overline{S}$ is strictly larger than the early stopping threshold $S_\varepsilon = \snr{\varepsilon} = m_\varepsilon^2 / \sigma_\varepsilon^2$, which is independent of $D$. 
%
Notice that $S = \snr{T-t}$ is a non-decreasing function of the physical time $t$. Thus, we have: for all $D \ge D^\dagger = \max\{D_1, D_2\}$, the target condition (\ref{eq:target_condition}) holds for all $t \in [0, T-\varepsilon]$ satisfying $\snr{T-t} \in \bigl[ \max \{\snr{T}, \underline{S} \}, \snr{\varepsilon} \bigr]$, equivalently to $t \in \bigl[ \max \{0, T + \tfrac{1}{2} \log \tfrac{\underline{S}}{1 + \underline{S}} \}, T- \varepsilon \bigr]$. 

We now study the scale of the threshold $\underline{S}$. We multiply the numerator and denominator by the conjugate $\mathscr{A} + \sqrt{\Delta}$ and have from $\Delta > 0$ and the definition of terms $\mathscr{B}$ and $\mathscr{A}$ that 
\begin{align} \label{eq:s_minus_conjugate}
\underline{S} = \tfrac{2 \mathscr{B} {D}}{\mathscr{A} + \sqrt{\Delta}} \le \tfrac{2 \mathscr{B} {D}}{\mathscr{A}} = \mathcal{O} \left( \tfrac{D \log (KD)}{D^2} \right) = \mathcal{O} \left( \tfrac{\log (KD)}{D} \right), 
\end{align}
where we used $r = \Theta \bigl( \log (KD) \bigr)$. Similarly, we have $\underline{S} \ge \tfrac{ \mathscr{B} {D}}{\mathscr{A}} \gtrsim \tfrac{\log (KD)}{D}$. The first part of the statement has now been proved. 

Finally, we prove the second part of the statement. For any $\beta \in (0,1)$, directly solving $\exp \bigl( - \eta \KL{\for{p}{T-t}^{[j]}}{\for{p}{T-t}^{[i]}} \bigr) \le \beta$ gives 
\begin{align} \label{eq:KL_beta}
\KL{\for{p}{T-t}^{[j]}}{\for{p}{T-t}^{[i]}} \ge \tfrac{1}{\eta} \log \tfrac{1}{\beta}
\end{align} 
under $t \in \bigl[ \max \bigl\{0, T + \tfrac{1}{2} \log \tfrac{\underline{S}}{1 + \underline{S}} \bigr\}, T-\varepsilon \bigr]$. We see that the inequality (\ref{eq:KL_beta}) is satisfied if the lower bound of left-hand-side given in Lemma \ref{lemma:klm_lbd} is no lower than $\tfrac{1}{\eta} \log \tfrac{1}{\beta}$, that is, $\tfrac{\snr{T-t}}{2} \Bigl(  \tfrac{\Cfour}{\Csix} \Bigr)^2 \cdot \tfrac{D^2}{\snr{T-t} \cdot \tr (\Sigma_i) + D} \ge \tfrac{1}{\eta} \log \tfrac{1}{\beta}$. Solving the latter inequality with respect to $\snr{T-t}$ under the range $\snr{T-t} \in [\underline{S}, \snr{\varepsilon}]$ leads to \eqref{eq:snr_transition}, and then the second part of the statement has been shown due to the high probability bound \eqref{eq:key_hpb}. Proof of Theorem \ref{thm:NA_transition_est} is now complete.   
\section{Auxiliary Results for Theorem \ref{thm:NA_transition_est}} \label{sec:aux}

\subsection{First Result} 
\begin{lemma} \label{lemma:trace_squared_bd}
Let $i,k \in [K]$ and $Y = (S_{T-t}^i)^{1/2} (S_{T-t}^k)^{-1} (S_{T-t}^i)^{1/2} - I_D$. We have:
\begin{equation*}
\mathrm{tr}(Y^2) \le 4 \Bigl( 
\snr{T-t} \| \Sigma_i \|_2 + 1 \Bigr) \cdot \KLC{\for{p}{T-t}^{[i]}}{\for{p}{T-t}^{[k]}}, 
\end{equation*}
where $\KLC{\for{p}{T-t}^{[i]}}{\for{p}{T-t}^{[k]}}$ is defined in (\ref{eq:klc}). 
\end{lemma}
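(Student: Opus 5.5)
We work in the eigenbasis of the symmetric positive definite matrix $Z \equiv (S_{T-t}^i)^{1/2} (S_{T-t}^k)^{-1} (S_{T-t}^i)^{1/2}$, so that $Y = Z - I_D$. Let $\lambda_1, \ldots, \lambda_D > 0$ denote the eigenvalues of $Z$. Set $y_\ell = \lambda_\ell - 1 > -1$; these are the eigenvalues of $Y$, so $\mathrm{tr}(Y^2) = \sum_\ell y_\ell^2$.

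\textbf{Step 1: rewrite $\mathrm{KL}_c$ spectrally.} Since $\mathrm{tr}\bigl(S_{T-t}^i (S_{T-t}^k)^{-1}\bigr) = \mathrm{tr}(Z)$ and $\det S_{T-t}^i / \det S_{T-t}^k = \det Z$, definition (\ref{eq:klc}) gives
\begin{align*}
\KLC{\for{p}{T-t}^{[i]}}{\for{p}{T-t}^{[k]}} = \tfrac{1}{2} \sum_{\ell=1}^D \phi(y_\ell), \qquad \phi(y) \equiv y - \log(1+y) \ge 0 .
\end{align*}
The claim is therefore reduced to a scalar comparison between $y^2$ and $\phi(y)$, uniformly over the spectrum.

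\textbf{Step 2: elementary scalar inequality.} We use
\begin{align*}
\phi(y) \ge \tfrac{y^2}{2(1 + \max\{y, 0\})}, \qquad y > -1 .
\end{align*}
For $y \in (-1, 0]$, the function $\phi(y) - y^2/2$ vanishes at $0$ and has derivative $-y^2/(1+y) \le 0$, so it is nonnegative on $(-1,0]$. For $y \ge 0$, the function $\phi(y) - y^2/(2(1+y))$ vanishes at $0$ and has derivative $y^2/(2(1+y)^2) \ge 0$, so it is nonnegative on $[0,\infty)$. Summing over $\ell$ then gives
\begin{align*}
\mathrm{tr}(Y^2) \le 2 \max\{1, \lambda_{\max}(Z)\} \sum_\ell \phi(y_\ell) = 4 \max\{1, \lambda_{\max}(Z)\} \, \KLC{\for{p}{T-t}^{[i]}}{\for{p}{T-t}^{[k]}} .
\end{align*}

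\textbf{Step 3: bound $\lambda_{\max}(Z)$.} By submultiplicativity,
\begin{align*}
\lambda_{\max}(Z) \le \| S_{T-t}^i \|_2 \, \| (S_{T-t}^k)^{-1} \|_2 .
\end{align*}
From (\ref{eq:forward_noise_cov}), $\| S_{T-t}^i \|_2 \le m_{T-t}^2 \|\Sigma_i\|_2 + \sigma_{T-t}^2$. Since $\Sigma_k \succeq 0$, we have $S_{T-t}^k \succeq \sigma_{T-t}^2 I_D$, hence $\| (S_{T-t}^k)^{-1} \|_2 \le \sigma_{T-t}^{-2}$. Combining,
\begin{align*}
\lambda_{\max}(Z) \le \snr{T-t} \|\Sigma_i\|_2 + 1 ,
\end{align*}
and this bound is itself at least $1$. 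Substituting it into Step 2 yields the lemma.

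All steps are routine. The only point needing care is Step 2: a single quadratic lower bound on $\phi$ fails for large positive $y$, so the case split (which produces the factor $1 + \max\{y,0\}$) is essential. That factor is exactly what the spectral bound in Step 3 controls.
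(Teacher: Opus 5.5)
Your proof is correct and follows the paper's argument essentially step for step: diagonalise $(S_{T-t}^i)^{1/2}(S_{T-t}^k)^{-1}(S_{T-t}^i)^{1/2}$, write both $\mathrm{tr}(Y^2)$ and $\mathrm{KL}_c$ in terms of its eigenvalues, bound the top eigenvalue by $\snr{T-t}\|\Sigma_i\|_2 + 1$, and apply a scalar inequality to each eigenvalue. Your case split, which gives $(\eta-1)^2 \le 2\max\{1,\eta\}(\eta-1-\log\eta)$, just spells out the ``elementary calculus'' the paper invokes for $(\eta-1)^2 \le 2C_\eta(t)(\eta-1-\log\eta)$ on $(0, C_\eta(t)]$.
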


\begin{proof}
Let $\{ \eta_l \}_{l= 1, \ldots, D}$ be the eigenvalues of $(S_{T-t}^i)^{1/2} (S_{T-t}^k)^{-1} (S_{T-t}^i)^{1/2}$. Then, the eigenvalues of $Y$ are $\{ \eta_l - 1\}_l$, yielding $\mathrm{tr}(Y^2) = \sum_{l=1}^D (\eta_l - 1)^2$. 
Because $S_{T-t}^i  (S_{T-t}^k)^{-1}$ is similar to $(S_{T-t}^i)^{1/2} (S_{T-t}^k)^{-1} (S_{T-t}^i)^{1/2}$, they share the identical eigenvalues $\eta_l$. We thus get 
$
\KLC{\for{p}{T-t}^{[i]}}{\for{p}{T-t}^{[k]}}
= \frac{1}{2} \sum_{l=1}^D 
(\eta_l - 1 - \log \eta_l).
$
The maximum eigenvalue $\eta_{1}$ is bounded by $\left\| S_{T-t}^i \right\|_2 \left\| (S_{T-t}^{k})^{-1} \right\|_2 \le C_\eta(t)$ with $C_\eta(t) \equiv (\snr{T-t} \| \Sigma_i \|_2 + 1)$. By elementary calculus, the algebraic inequality $(\eta - 1)^2 \le 2 C_\eta(t) (\eta - 1 - \log \eta)$ holds strictly for all $\eta \in (0, C_\eta(t)]$. Applying this inequality to each eigenvalue $\eta_l$ and summing them completes the proof. 
\end{proof}
\subsection{Second Result} \label{sec:pf_klm_lbd}
\begin{lemma}[Upper and lower bounds for KL divergence] \label{lemma:klm_lbd}
It holds under Assumptions \ref{assump:mean} and \ref{assump:non-concentration} that: for any $i, j \in [K]$, 
%
%
\begin{align} 
\begin{aligned} \label{eq:kl_bds}
& \tfrac{\snr{T-t}}{2} \cdot \Bigl(\tfrac{\Cfour}{\Csix} \Bigr)^2
\tfrac{D^2}{\snr{T-t} \tr (\Sigma_i)  +  D} \\[0.2cm] 
& \qquad \le  \, 
\KL{\for{p}{T-t}^{[j]}}{\for{p}{T-t}^{[i]}} 
\le \, \tfrac{\snr{T-t}}{2} \cdot \Cfive D 
+ \tfrac{\snr{T-t}^2}{4} \cdot \| \Sigma_i - \Sigma_j \|_F^2,  
\end{aligned} 
\end{align} 
where the constants $\Cfour, \Cfive, \Csix > 0$ are collected in Assumptions \ref{assump:mean} and \ref{assump:non-concentration}. 
\end{lemma}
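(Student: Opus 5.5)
We split $\KL{\for{p}{T-t}^{[j]}}{\for{p}{T-t}^{[i]}} = \KLM{\cdot}{\cdot} + \KLC{\cdot}{\cdot}$ as in \eqref{eq:klm}--\eqref{eq:klc}. Both pieces are nonnegative: $\KLC{}{}$ is a Gaussian KL with equal means. So for the lower bound it suffices to bound $\KLM{}{}$ from below, and for the upper bound we bound the two pieces separately. Throughout write $S=\snr{T-t}$, $m=m_{T-t}$, $\sigma=\sigma_{T-t}$, so that $S^{i}_{T-t}=\sigma^2(I_D+S\,\Sigma_i)$.

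\emph{Upper bound.} Since $(S^i_{T-t})^{-1}\preceq \sigma^{-2}I_D$, Assumption \ref{assump:mean} gives
\[
\KLM{}{} \le \tfrac{m^2}{2\sigma^2}\|\mu_i-\mu_j\|^2 \le \tfrac{S}{2}\Cfive D .
\]
For the covariance part, let $\eta_l$ be the eigenvalues of $B=(S^i_{T-t})^{-1/2}S^j_{T-t}(S^i_{T-t})^{-1/2}$. Then $\KLC{}{}=\tfrac12\sum_l(\eta_l-1-\log\eta_l)$. We use the elementary inequality $x-1-\log x\le (x-1)^2/(2\min\{x,1\})$. A cleaner route is the bound $x-1-\log x \le (x-1)^2/x$, which gives $\KLC{}{} \le \tfrac12\tr\bigl((B-I)^2 B^{-1}\bigr)$. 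Now
\[
B-I=(S^i)^{-1/2}(S^j-S^i)(S^i)^{-1/2}=S\,(I+S\Sigma_i)^{-1/2}(\Sigma_j-\Sigma_i)(I+S\Sigma_i)^{-1/2}.
\]
Hence $\tr\bigl((B-I)^2B^{-1}\bigr)=S^2\,\tr\bigl((\Sigma_j-\Sigma_i)(I+S\Sigma_j)^{-1}(\Sigma_j-\Sigma_i)(I+S\Sigma_i)^{-1}\bigr)\le S^2\|\Sigma_i-\Sigma_j\|_F^2$, because both inverses are $\preceq I$. This gives $\tfrac{S^2}{2}\|\cdot\|_F^2$. To get the stated constant $\tfrac14$, I would instead use $x-1-\log x\le \tfrac12(x-1)^2\max\{1,1/x\}$ and handle the eigenvalues with $\eta_l\ge1$ and $\eta_l<1$ separately. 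If that fails to deliver exactly $\tfrac14$, the constant is harmless for all later uses.

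\emph{Lower bound.} This is the main step, and the only place Assumption \ref{assump:non-concentration} enters. Put $v=\mu_i-\mu_j$ and expand $v$ in the eigenbasis $\{u_k(i)\}$ of $\Sigma_i$. Then
\[
\KLM{}{}=\tfrac{S}{2}\sum_k \frac{\langle v,u_k(i)\rangle^2}{1+S\lambda_k(\Sigma_i)} .
\]
By Cauchy--Schwarz,
\[
\Bigl(\sum_k\langle v,u_k\rangle^2\Bigr)^2\le \Bigl(\sum_k\tfrac{\langle v,u_k\rangle^2}{1+S\lambda_k}\Bigr)\Bigl(\sum_k\langle v,u_k\rangle^2(1+S\lambda_k)\Bigr).
\]
The left side equals $\|v\|^4\ge \Cfour^2D^2$. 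In the second factor, bound each $\langle v,u_k\rangle^2$ by $\Csix^2$ using Assumption \ref{assump:non-concentration}; this gives $\Csix^2(D+S\tr(\Sigma_i))$. The roles of $i$ and $j$ need care here: the assumption holds for all pairs, so it can be applied with the eigenvectors of $\Sigma_i$. Combining,
\[
\KLM{}{}\ge \tfrac S2\,\frac{\Cfour^2D^2}{\Csix^2(S\tr(\Sigma_i)+D)} ,
\]
which is exactly the claimed lower bound. The obstacle is finding this Cauchy--Schwarz pairing: $\|v\|^2$ must appear squared so that the $D^2$ numerator pairs with the $\Csix^2$-controlled second factor.
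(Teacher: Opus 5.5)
Your lower bound and the mean part of the upper bound are correct and essentially the paper's argument. Your pairing $(\sum_k v_k^2)^2\le(\sum_k v_k^2/a_k)(\sum_k v_k^2a_k)$ with $v_k^2\le\Csix^2$ and $a_k=1+S\lambda_k(\Sigma_i)$ is an equivalent variant of the paper's $(\sum_k|v_k|)^2\le(\sum_k v_k^2/a_k)(\sum_k a_k)$ with $\sum_k|v_k|\ge\|v\|^2/\Csix$, and it gives the same constant.

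The gap is in the covariance part of the upper bound. What you actually establish is $\KLC{\for{p}{T-t}^{[j]}}{\for{p}{T-t}^{[i]}}\le\tfrac{S^2}{2}\|\Sigma_i-\Sigma_j\|_F^2$, i.e.\ the lemma with $\tfrac14$ replaced by $\tfrac12$, and your proposed sharpening does not close this.
\begin{itemize}
\item The scalar inequality $x-1-\log x\le\tfrac12(x-1)^2\max\{1,1/x\}$ is true.
\item However, if you split into $\eta_l\ge1$ and $\eta_l<1$ and bound the two groups by the traces you control, $\tr((B-I)^2)$ and $\tr((B-I)^2B^{-1})$ (each $\le S^2\|\Sigma_i-\Sigma_j\|_F^2$), you get $\tfrac14\cdot 2S^2\|\Sigma_i-\Sigma_j\|_F^2$. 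That is $\tfrac12$ again.
\item Reaching $\tfrac14$ this way needs the joint bound $\tr\bigl((B-I)^2\max\{I,B^{-1}\}\bigr)\le S^2\|\Sigma_i-\Sigma_j\|_F^2$. The device that worked for $B^{-1}=(I+S\Sigma_i)^{1/2}(I+S\Sigma_j)^{-1}(I+S\Sigma_i)^{1/2}$ does not transfer to the spectral cut $\max\{I,B^{-1}\}$.
\item The natural sufficient condition $\max\{I,B^{-1}\}\preceq I+S\Sigma_i$ already fails in non-commuting $2\times 2$ examples. For instance, take $S\Sigma_i=\mathrm{diag}(9,0)$ and $S\Sigma_j=10\,ww^\top$ with $w$ at $45^\circ$.
\end{itemize}

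The missing idea is to keep the interpolation at the matrix level, rather than bounding $1+\lambda(\eta_l-1)$ eigenvalue by eigenvalue. From $\eta-1-\log\eta=\int_0^1\lambda(\eta-1)^2/(1+\lambda(\eta-1))\,d\lambda$ and functional calculus,
\[
\KLC{\for{p}{T-t}^{[j]}}{\for{p}{T-t}^{[i]}}=\tfrac12\int_0^1\lambda\,\tr\bigl[((1-\lambda)I+\lambda B)^{-1}(B-I)^2\bigr]\,d\lambda .
\]
Set $P=(I+S\Sigma_i)^{-1/2}$ and $\Delta=\Sigma_j-\Sigma_i$. Then $(1-\lambda)I+\lambda B=PC_\lambda P$ with $C_\lambda=I+S((1-\lambda)\Sigma_i+\lambda\Sigma_j)\succeq I$, and $(B-I)^2=S^2P\Delta P^2\Delta P$. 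So the trace equals $S^2\tr\bigl[C_\lambda^{-1}\Delta(I+S\Sigma_i)^{-1}\Delta\bigr]\le S^2\|\Delta\|_F^2$, uniformly in $\lambda$. The factor $\int_0^1\lambda\,d\lambda=\tfrac12$ then gives exactly $\tfrac{S^2}{4}\|\Sigma_i-\Sigma_j\|_F^2$.

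This is the paper's argument, written there with $\widetilde S_\lambda=I+\lambda(S^i(S^k)^{-1}-I)$ and operator convexity of the inverse. The point is that the interpolated matrix is bounded below by the identity as a matrix, which the eigenvalue-wise bound $1+\lambda(\eta_l-1)\ge\min\{1,\eta_l\}$ discards. You are right that the constant is harmless downstream, since only the lower bound of this lemma is used later. Still, as written your proof does not give the stated inequality.
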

\begin{proof} We begin with proving the lower bound of (\ref{eq:kl_bds}). Since the covariance matrix $\Sigma_i$ is symmetric due to Assumption \ref{assump:cov}, its eigen-decomposition gives $\Sigma_i = U^{(i)} \Lambda^{(i)} U^{(i), \top}$ with  the orthogonal matrix $U^{(i)} \equiv [u_1(i), \ldots, u_D (i)]$ and $\Lambda^{(i)} = \mathrm{diag} \bigl( \lambda_1(\Sigma_i), \ldots, \lambda_D(\Sigma_i) \bigr)$. 
For notational simplicity, we write $v_k = \langle \mu_j - \mu_i, u_k (i) \rangle$. Then, $S_{T-t}^i = U^{(i)} \bigl( m_{T-t}^2 \Lambda^{(i)} + \sigma_{T-t}^2 I_D \bigr) U^{(i), \top}$ and (\ref{eq:klm}) yield that
\begin{align} 
\KL{\for{p}{T-t}^{[j]}}{\for{p}{T-t}^{[i]}} \ge 
\KLM{\for{p}{T-t}^{[j]}}{\for{p}{T-t}^{[i]}} 
= \tfrac{m_{T-t}^2}{2} \sum_{k = 1}^D \tfrac{v_k^2}{m_{T-t}^2 \lambda_{k} (\Sigma_i) + \sigma_{T-t}^2}. 
\label{eq:lbd_1}
\end{align}
Further, the Cauchy-Schwarz inequality gives
\begin{align}
\sum_{k = 1}^D \tfrac{v_k^2}{ m_{T-t}^2 \lambda_{k} (\Sigma_i) + \sigma_{T-t}^2} 
\ge \tfrac{\Bigl( \sum_{k = 1}^D | v_k | \Bigr)^2}{\sum_{k = 1}^D (m_{T-t}^2 \lambda_k (\Sigma_i) + \sigma_{T-t}^2)} = 
\tfrac{\Bigl( \sum_{k = 1}^D | v_k | \Bigr)^2}{m_{T-t}^2 \mathrm{tr} (\Sigma_i) + \sigma_{T-t}^2 D}. 
\label{eq:lbd_2} 
\end{align}
For the numerator in the above lower bound, we have $\sum_{k = 1}^D | \nu_k | \ge \tfrac{\Cfour}{\Csix} D$ under Assumption \ref{assump:mean}, which comes from  
\begin{align*} 
\sum_{k = 1}^D v_k^2 = \| \mu_i - \mu_j \|^2 \ge \Cfour D, \quad \sum_{k = 1}^D v_k^2 \le \Csix \sum_{k = 1}^D | v_k |.  
\end{align*}
We thus obtain the desired lower bound by recalling the definition of SNR: $\snr{T-t} = m_{T-t}^2 / \sigma_{T-t}^2$. 
We now show the upper bound of (\ref{eq:kl_bds}). The upper bound for the mean part of the KL (\ref{eq:klm}) is immediately obtained from (\ref{eq:lbd_1}) with Assumption \ref{assump:mean} as:
\begin{align*}
\KLM{\for{p}{T-t}^{[j]}}{\for{p}{T-t}^{[i]}} 
\le \tfrac{m_{T-t}^2}{2 \sigma_{T-t}^2} 
\sum_{k = 1}^D v_k^2 = \tfrac{\snr{T-t}}{2} \| \mu_i - \mu_j \|^2 \le \tfrac{\snr{T-t}}{2} \cdot \Cfive D. 
\end{align*}
We now consider the covariance part of the KL divergence, given in (\ref{eq:klc}). Introducing  $\widetilde{S}_\lambda = I_D + \lambda (S_{T-t}^i 
(S_{T-t}^{k})^{-1} - I_D), \, \lambda \in [0, 1]$, we use standard matrix calculus \citep{Petersen:08} to get: 
\begin{align*}
\log \det \bigl( S_{T-t}^i 
(S_{T-t}^{k})^{-1} \bigr) 
= \int_0^1 \tfrac{d}{d \lambda} \log \det \widetilde{S}_\lambda \, d\lambda = \int_0^1 
\tr \Bigl[ (\widetilde{S}_\lambda)^{-1} \bigl( S_{T-t}^i 
(S_{T-t}^{k})^{-1} - I_D \bigr) \Bigr] d \lambda.  
\end{align*}
Thus, (\ref{eq:klc}) is expressed as:
\begin{align}
\KLC{\for{p}{T-t}^{[i]}}{\for{p}{T-t}^{[k]}} 
& = \tfrac{1}{2} 
\int_0^1 
\tr \Bigl[ \bigl(I_D - (\widetilde{S}_\lambda)^{-1} \bigr) \bigl( S_{T-t}^i 
(S_{T-t}^{k})^{-1} - I_D \bigr) \Bigr] d \lambda 
\nonumber \\
& = \tfrac{1}{2} \int_0^1 \lambda \cdot \tr \Bigl[ (\widetilde{S}_\lambda)^{-1} \bigl( S_{T-t}^i 
(S_{T-t}^{k})^{-1} - I_D \bigr)^2 \Bigr] d \lambda. \label{eq:kl_cov_int}
\end{align}
The expression $\widetilde{S}_\lambda = \bigl(\lambda S_{T-t}^i + (1-\lambda) S_{T-t}^k  \bigr) (S_{T-t}^k)^{-1}$ and the cycliciy of the trace operator yield 
\begin{align}
& \tr \Bigl[ (\widetilde{S}_\lambda)^{-1} \bigl( S_{T-t}^i 
(S_{T-t}^{k})^{-1} - I_D \bigr)^2 \Bigr] \nonumber \\[0.1cm] 
& = \tr \Bigl[ \bigl(\lambda S_{T-t}^i + (1-\lambda) S_{T-t}^k  \bigr)^{-1}  (S_{T-t}^i - S_{T-t}^{k}) (S_{T-t}^k)^{-1} (S_{T-t}^i - S_{T-t}^{k})\Bigr] \nonumber \\[0.1cm] 
& \le \| S_{T-t}^i - S_{T-t}^{k} \|_F^2 \cdot \| (S_{T-t}^k)^{-1} \|_2 \cdot \| (\lambda S_{T-t}^i + (1-\lambda) S_{T-t}^k  )^{-1} \|_2 
\le  \| S_{T-t}^i - S_{T-t}^{k} \|_F^2 \cdot \tfrac{1}{\sigma_{T-t}^4}, \label{eq:cov_tr_bd}
\end{align}  
where we used $(S_{T-t}^k)^{-1} \preceq \tfrac{1}{\sigma_{T-t}^2} I_D$ and the operator convexity of $z \mapsto z^{-1}$ for positive definite matrices to get 
\begin{align*}
(\lambda S_{T-t}^i + (1-\lambda) S_{T-t}^k  )^{-1} \preceq \lambda (S_{T-t}^i)^{-1} + (1-\lambda) (S_{T-t}^k)^{-1} \preceq \tfrac{1}{\sigma_{T-t}^2} I_D. 
\end{align*}
Substitution of (\ref{eq:cov_tr_bd}) into (\ref{eq:kl_cov_int}) together with $S_{T-t}^i - S_{T-t}^k = m_{T-t}^2 (\Sigma_i - \Sigma_k)$ yields
\begin{align*}
\KLC{\for{p}{T-t}^{[i]}}{\for{p}{T-t}^{[k]}} \le 
\tfrac{\snr{T-t}^2}{4} \| \Sigma_i - \Sigma_k \|_F^2,  
\end{align*} 
which completes the proof.
\end{proof}
\section{Proof of Lemma \ref{lemma:bd_vars}} \label{sec:pf_bd_vars}
\textit{Proof of (\ref{eq:V_bd})}. We start with the easiest term $\mathscr{V}_D^{[i,j]} (t) = \| I_D - S_{T-t}^i (S_{T-t}^j)^{-1} \|_2$. We have 
\begin{align} \label{eq:rel_err}
I_D - S_{T-t}^i (S_{T-t}^j)^{-1} = \bigl(S_{T-t}^j - S_{T-t}^i  \bigr) (S_{T-t}^j)^{-1} = m_{T-t}^2 \bigl( \Sigma_j - \Sigma_i \bigr) (S_{T-t}^j)^{-1}
\end{align}
and immediately get the desired bound from $(S_{T-t}^j)^{-1} \preceq \tfrac{1}{\sigma_{T-t}^2} I_D$. \\ 

\noindent 
\textit{Proof of (\ref{eq:T_bd})}. We have 
\begin{align}
\mathscr{T}_D^{[i, j]} (t) & := m_{T-t}^2 \bigl\| \mu_{i} - \mu_{j} \bigr\|_{(S_{T-t}^{i})^{-1} (S_{T-t}^{j}) (S_{T-t}^{i})^{-1}}^2 \nonumber \\ 
& =  m_{T-t}^2 \bigl\| \mu_{i} - \mu_{j} \bigr\|_{(S_{T-t}^i)^{-1}}^2 + 
m_{T-t}^2 \bigl\| \mu_{i} - \mu_{j} \bigr\|_{(S_{T-t}^{i})^{-1} (S_{T-t}^{j}) (S_{T-t}^{i})^{-1} - (S_{T-t}^i)^{-1}}^2.  \label{eq:T_decomp}  
\end{align}
and 
\begin{align*} 
& (S_{T-t}^{i})^{-1} (S_{T-t}^{j}) (S_{T-t}^{i})^{-1} - (S_{T-t}^i)^{-1} \\[0.1cm]
& \qquad = m_{T-t}^2 (S_{T-t}^{i})^{-1} \bigl( \Sigma_j - \Sigma_i \bigr) (S_{T-t}^{i})^{-1}
\preceq  \snr{T-t} \| \Sigma_i - \Sigma_j \|_2 (S_{T-t}^i)^{-1}.  
\end{align*} 
Substitution of this into (\ref{eq:T_decomp}) yields the desired bound. \\

\noindent 
\textit{Proof of (\ref{eq:U_bd}}). The formula (\ref{eq:rel_err}) gives 
\begin{align*}
\mathscr{U}_D^{[i, j]} (t) 
& := \tfrac{1}{2} \mathrm{tr} \bigl( \bigl( I_D - S_{T-t}^{j} (S_{T-t}^{i})^{-1} \bigr)^2 \bigr) = \tfrac{m_{T-t}^4}{2} \mathrm{tr}  \Bigl( 
\bigl( \Sigma_i - \Sigma_j \bigr)^2 (S_{T-t}^{i})^{-2} \Bigr) 
\le \tfrac{\snr{T-t}^2}{2} \| \Sigma_i - \Sigma_j \|_F^2,  
\end{align*}
which completes the proof of Lemma \ref{lemma:bd_vars}. 

\section{Proofs of the Auxiliary Results for Theorem \ref{thm:main}}
\subsection{Proof of Lemma \ref{lemma:PC}} \label{sec:pf_PC}
From the definition of posterior covariance, we have  
$ \Sigma_{T-t} (y) \equiv 
\mathrm{Cov} [\for{X}{0} | \for{X}{T-t} = y] 
= \mathbb{E} [(\for{X}{0})^{\otimes 2} \, | \, \for{X}{T-t} = y] - \bigl( \mathbb{E} [\for{X}{0} \, | \, \for{X}{T-t} = y]  \bigr)^{\otimes 2}
$. Using the posterior weights and \eqref{eq:posterior_mean_cluster}, we express the conditional means as: 
\begin{align*}
\mathbb{E} [(\for{X}{0})^{\otimes 2} \, | \, \for{X}{T-t} = y]
& =  \sum_{k = 1}^K W_D^{[k]} (t, y) \cdot \Bigl\{ \mathrm{Cov} [\for{X}{0} \, | \, \for{X}{T-t} = y, \, \for{X}{0} \sim P_{\mathrm{data}}^{[k]}] + \bigl( \mu^{[k]} (t, y)  \bigr)^{\otimes 2} \Bigr\}; \\[0.2cm] 
\mathbb{E} [\for{X}{0} \, | \, \for{X}{T-t} = y]  & = \sum_{k = 1}^K W_D^{[k]} (t, y) \mu^{[k]} (t, y). 
\end{align*} 
Substituting these into the formula of $\Sigma_{T-t} (y)$ gives  
\begin{align*}
\Sigma_{T-t} (y) & = \sum_{k = 1}^K W_D^{[k]} (t, y) \, \mathrm{Cov} [\for{X}{0} \, | \, \for{X}{T-t} = y, \, \for{X}{0} \sim P_{\mathrm{data}}^{[k]}] 
+ \sum_{k = 1}^K W_D^{[k]} (t, y) \bigl( \mu^{[k]} (t, y) \bigr)^{\otimes 2}  \\ 
& \qquad 
- \sum_{k_1, k_2 = 1}^K W_D^{[k_1]} (t, y) W_D^{[k_2]} (t, y) \mu^{[k_1]}(t, y) \mu^{[k_2]}(t, y)^\top  \\ 
& = \sum_{k = 1}^K W_D^{[k]} (t, y) \, \mathrm{Cov} [\for{X}{0} \, | \, \for{X}{T-t} = y, \, \for{X}{0} \sim P_{\mathrm{data}}^{[k]}] \\ 
& \qquad 
+ \frac{1}{2} \sum_{k_1, k_2 = 1}^K W_D^{[k_1]} (t, y) W_D^{[k_2]} (t, y) \bigl( \mu^{[k_1]}(t, y) - \mu^{[k_2]}(t, y) \bigr)^{\otimes 2}. 
\end{align*}
Finally, the conditional covariance formula for joint Gaussian distributions deduces that 
\begin{align*} 
\mathrm{Cov} [\for{X}{0} \, | \, \for{X}{T-t} 
= y, \, \for{X}{0} \sim P_{\mathrm{data}}^{[k]}] 
= \Sigma_k - m_{T-t}^2 \Sigma_k (S_{T-t}^k)^{-1} \Sigma_k 
= \sigma_{T-t}^2 \Sigma_k (S_{T-t}^k)^{-1}, 
\end{align*}
which concludes Lemma \ref{lemma:PC}. 

\subsection{Proof of Lemma \ref{lemma:max_chi_squared}} \label{sec:pf_max_chi_squared}
First, we have for $\rho > 0$ 
\begin{align*}
& \log \Bigl(  \mathbb{E} \Bigl[ \max_{k_1, k_2 \in [K]}  \exp \Bigl( \rho \cdot \bigl\| \mu^{[k_1]} (Z_j^{[i]})
- \mu^{[k_2]} (Z_j^{[i]}) \bigr\|^2 \Bigr) \Bigr]  \Bigr) 
\nonumber \\ 
& \le \log \Bigl( \sum_{k_1, k_2 \in [K]} \mathbb{E} \bigl[ \exp \bigl( \rho \cdot \bigl\| \mu^{[k_1]} (Z_j^{[i]})
- \mu^{[k_2]} (Z_j^{[i]}) \bigr\|^2 \bigr) \bigr]  \Bigr) \nonumber \\ 
& \le \log \Bigl( \sum_{k_1, k_2 \in [K]} \prod_{k \in \{k_1, k_2\} }
\sqrt{ \mathbb{E} \bigl[ \exp \bigl( 4 \rho \cdot \bigl\| \mu^{[k]} (Z_j^{[i]})
- \mu^{[i]} (Z_j^{[i]}) \bigr\|^2 \bigr) \bigr]  } \Bigr). 
\end{align*} 
where in the last line we used $ \| \mu^{[k_1]} (Z_j^{[i]}) - \mu^{[k_2]} (Z_j^{[i]}) \|^2 \le 2 \bigl\{\| \mu^{[k_1]} (Z_j^{[i]}) - \mu^{[i]} (Z_j^{[i]}) \|^2 
+ \| \mu^{[k_2]} (Z_j^{[i]}) - \mu^{[i]} (Z_j^{[i]}) \|^2  \bigr\}$ and Cauchy-Schwartz inequality.  
For notational simplicity, we write $r_j = {\sigma_{T-t_j}^2}/{m_{T-t_j}}$ and recall $\mu^{[k]} (t, x) = \mathbb{E} [\for{X}{0} | \for{X}{T-t} = x, \ \for{X}{0} \sim P_{\mathrm{data}}^{[k]}]$. Then, Tweedie's formula for $\nabla \log \for{p}{T-t}^{[k]} (x), \, (t, x) \in [0, T) \times \mathbb{R}^D$ gives 
$$
\mu^{[k]} (t, x) = \tfrac{1}{m_{T-t}} \cdot \bigl( x + \sigma_{T-t}^2 \nabla \log \for{p}{T-t}^{[k]} (x) \bigr)
$$ and then
\begin{align}
\mu^{[k]} (Z_j^{[i]}) - \mu^{[i]} (Z_j^{[i]})
& = r_j \Bigl( \nabla \log \for{p}{T-t_j}^{[k]} (\for{X}{T-t_j}^{[i]}) 
- \nabla \log \for{p}{T-t_j}^{[i]} (\for{X}{T-t_j}^{[i]}) \Bigr)  \nonumber \\ 
& \overset{d}{=} r_j  \cdot \bigl( F_{k, i} (j) + G_{k, i} (j) \bigr) \label{eq:posterior_mean_diff} 
\end{align} 
%
%
%
where 
\begin{align*}
F_{k, i} (j) = m_{T-t_j} \cdot (S_{T-t_j}^{k})^{-1} 
(\mu_{k} - \mu_i) , \quad 
G_{k, i} (j) \sim \mathcal{N} \bigl( 0,  H_{k, i} (j) \bigr) 
\end{align*}
with $ H_{k, i} (j)$ defined in \eqref{eq:H}. The moment generating function for the quadratic of Gaussians under the expression \eqref{eq:posterior_mean_diff} gives  
\begin{align}
& \log \mathbb{E} \Bigl[ \exp \Bigl( 4 \rho \cdot \bigl\| \mu^{[k]} (Z_j^{[i]}) - \mu^{[i]} (Z_j^{[i]}) \bigr\|^2 \Bigr) \Bigr]  \nonumber \\ 
& \qquad = - \tfrac{1}{2} \log \det \bigl( I_D - 8 \rho r_j^2 H_{k, i} (j) \bigr) 
+ 4 \rho r_j^2 (F_{k, i} (j))^\top \bigl( I_D - 8 \rho r_j^2 H_{k, i} (j) \bigr)^{-1} F_{k, i} (j)  \label{eq:quad_gauss}
\end{align}
for any $\rho > 0 $ such that $I_D - 8 \rho r_j^2 H_{k, i} (j)$ is positive definite, which is satisfied if $\rho < 1/ (8 r_j^2 \| H_{k, i} (j) \|_2)$. Thus, under $\rho \le 1/ (16 r_j^2 \max_{k, i \in [K]}\| H_{k, i} (j) \|_2)$, we have 
\begin{align*}
    - \tfrac{1}{2} \log \det \bigl( I_D - 8 \rho r_j^2 H_{k, i} (j) \bigr) 
    = - \tfrac{1}{2} \sum_{\ell = 1}^D \log \bigl( 1 - 8 \rho r_j^2 \, \lambda_\ell (H_{k, i} (j)) \bigr) \le 8 \rho r_j^2 \cdot \mathrm{tr} \bigl( H_{k, i} (j) \bigr), 
\end{align*}
because $- \log (1 - x) \le 2 x$ for $x \in [0, 1/2]$. We have
\begin{align*}
    \tr (H_{k, i} (j)) & = m_{T-t}^4  \tr \bigl( (S_{T-t_j}^k)^{-1} (\Sigma_k - \Sigma_i) (S_{T-t_j}^i)^{-1} (\Sigma_k - \Sigma_i) (S_{T-t_j}^k)^{-1} \bigr) \nonumber \\ 
    & \le \tfrac{m_{T-t_j}^4}{\sigma_{T-t_j}^6} \| \Sigma_k - \Sigma_i \|_F^2 \lesssim \tfrac{m_{T-t_j}^4}{\sigma_{T-t_j}^6} M^\dagger D^{2 \alpha}
\end{align*}
where we used $(S_{T-t_j}^k)^{-1} \preceq \tfrac{1}{\sigma_{T-t}^2} I_D$ in the first inequality and the following bound in the second: 
\begin{align}
\| \Sigma_k - \Sigma_i \|_F^2  
& \lesssim \| \Sigma_k \|_F^2 + \| \Sigma_i \|_F^2 \le 2 \Ctwo D^\alpha (C_2 D^\alpha + R) M^\dagger  \lesssim D^{2\alpha} M^\dagger.  
\label{eq:Sigma_diff_F}  
\end{align}
where the constants $C_2 > 0, \, R \in [0,1)$ are collected in Assumption \ref{assump:cov}. In the above, we used $\| \Sigma_k \|_F^2  = \mathrm{tr} \bigl( \Sigma_k^\top \Sigma_k \bigr) = \mathrm{tr} \bigl( \Lambda_k^\top \Lambda_k \bigr) \le \lambda_1 (\Sigma_k) \cdot \mathrm{tr} (\Lambda_k) \le \Ctwo D^\alpha (\Ctwo D^\alpha M_k + \nu_k) \le \Ctwo D^\alpha (\Ctwo D^\alpha + R) M^\dagger$ under Assumption \ref{assump:cov} with decomposition $\Sigma_k = U \Lambda_k U^\top$. 
We thus have that for 
$\rho \le 1/ (16 r_j^2 \max_{k, i \in [K]}\| H_{k, i} (j) \|_2)$, 
\begin{align} \label{eq:quad_gauss_first}
- \tfrac{1}{2} \log \det \bigl( I_D - 8 \rho r_j^2 H_{k, i} (j) \bigr)  \le \rho \, \snr{j} M^\dagger D^{2 \alpha}. 
\end{align}
Similarly, we also have that for $\rho \le 1/ (16  r_j^2 \max_{k, i \in [K]} \| H_{k, i} (j) \|_2)$,  
\begin{align}  \label{eq:quad_gauss_second} 
\rho r_j^2 (F_{k, i} (j))^\top \bigl( I_D - 8 \rho r_j^2 H_{k, i} (j) \bigr)^{-1} F_{k, i} (j) \le 2 \rho r_j^2 \|  F_{k, i} (j) \|^2 \lesssim 
\rho D. \quad \bigl( \because \mathrm{Assumption} \, \ref{assump:mean} \bigr)  
\end{align}
Substituting \eqref{eq:quad_gauss_first} and \eqref{eq:quad_gauss_second} into \eqref{eq:quad_gauss}, we conclude Lemma \ref{lemma:max_chi_squared}. 
\subsection{Proof of Lemma \ref{lemma:second_moment_bd}} \label{sec:pf_second_moment_bd}
Writing $\for{X}{T-t_j}^{[i]}$ as $m_{T-t_j} \mu_i + Z$ with $Z \sim \mathcal{N} (0, S_{T-t_j}^i)$, we have   
\begin{align}
D_k^{[i]} (j) 
& = \tfrac{\sigma_{T-t_j}^4}{m_{T-t_j}^2} \times \mathbb{E} \Bigl[ \bigl\| m_{T-t_j} (S_{T-t_j}^{k})^{-1} (\mu_i - \mu_{k}) + \bigl\{ (S_{T-t_j}^{k})^{-1} - (S_{T-t_j}^{i})^{-1} \bigr\} Z \bigr\|^2 \Bigr] \nonumber \\[0.1cm] 
& = \tfrac{\sigma_{T-t_j}^4}{m_{T-t}^2} \times \Bigl\{ m_{T-t_j}^2 \|   \mu_i - \mu_{k} \|^2_{(S_{T-t_j}^{k})^{-2}}  
+ \mathrm{tr} \bigl( \bigl\{ (S_{T-t_j}^{k})^{-1} - (S_{T-t_j}^{i})^{-1} ) \bigr\}^2 S_{T-t_j}^i \bigr) \Bigr\},  \label{eq:D_k}
\end{align}
where in the first equation, we used: for any $(t, x) \in [0, T) \times \mathbb{R}^D$, 
\begin{align*}
\mu^{[k]} (T-t, x)
- \mu^{[i]} (T-t, x) 
& = \tfrac{\sigma_{T-t}^2}{m_{T-t}} \bigl( \nabla \log p_{T-t}^{[i]} (x) - \nabla \log p_{T-t}^{[k]} (x) \bigr), 
\end{align*}
with $\nabla \log p_{T-t}^{[k]} (x) = - (S_{T-t}^{k})^{-1} \bigl( x - m_{T-t} \mu_{k} \bigr)$. The matrix inequality $(S_{T-t_j}^i)^{-1} \preceq \tfrac{1}{\sigma_{T-t}^2} I_D$ together with (\ref{eq:klm}) yields 
\begin{align}
m_{T-t_j}^2 \|   \mu_i - \mu_{k} \|^2_{(S_{T-t_j}^{k})^{-2}} 
& \le \tfrac{m_{T-t_j}^2}{\sigma_{T-t_j}^2} \|   \mu_i - \mu_{k} \|^2_{(S_{T-t_j}^{k})^{-1}} 
= \tfrac{2}{\sigma_{T-t_j}^2} \mathrm{KL}_{\mathrm{m}} \bigl(  \for{p}{T-t}^{[i]} \,|| \, 
\for{p}{T-t}^{[k]}  \bigr). \label{eq:KL_mean}
\end{align} 
By setting $Y := (S_{T-t_j}^i)^{1/2} (S_{T-t_j}^{k})^{-1} (S_{T-t_j}^i)^{1/2} - I_D$, we have
\begin{align}
\tr \bigl( \bigl\{ (S_{T-t_j}^{k})^{-1} - (S_{T-t_j}^{i})^{-1} \bigr\}^2 S_{T-t_j}^i \bigr) 
& = \tr \bigl( (S_{T-t_j}^i)^{-1} Y^2 \bigr)  \le \tfrac{1}{\sigma_{T-t_j}^2} \tr (Y^2)  \nonumber \\ 
& \le \tfrac{4}{\sigma_{T-t_j}^2} \Bigl( 
\tfrac{m_{T-t}^2}{\sigma_{T-t}^2} \| \Sigma_i \|_2 + 1 \Bigr) \cdot \mathrm{KL}_{\mathrm{c}} \bigl(  \for{p}{T-t}^{[i]} \,|| \, 
\for{p}{T-t}^{[k]}  \bigr) \label{eq:KL_cov}
\end{align} 
where in the last line we used Lemma \ref{lemma:trace_squared_bd} provided in Appendix \ref{sec:aux}. Substituting the inequalities (\ref{eq:KL_mean}) and (\ref{eq:KL_cov}) into (\ref{eq:D_k}), we conclude.   
\section{Details on Numerical Experiments} \label{sec:details_NE}
This section provides details of the numerical experiments presented in Section \ref{sec:numerical_experiments}, and additional information to support the arguments of the section.   

\subsection{Implementation details}
\paragraph{Data processing}
For the scRNA-seq experiments on PBMC, we use the PBMC3K cell-by-gene count matrix as the raw data
and apply a standard single-cell preprocessing workflow; see e.g., \cite{satija:15} and \cite{stuart:19}. First, we remove genes expressed in fewer than three cells and apply total-count normalisation with a target sum of $10^4$, followed by log-transformation. Then, for each ambient
dimension $D \in \{500,1000,2000,4000,8000\}$, we select the top $D$ highly variable genes using the Seurat-flavour procedure implemented in the Python package \texttt{Scanpy} \citep{wolf:18}. We retain the CD4 T,
CD14+ Monocyte, and CD19+ B populations with the number of samples 1144, 480 and 342 respectively.  We then apply gene-wise standardisation, and the resulting $D$-dimensional gene-expression vectors are used
as the empirical target distribution for the reverse trajectories.

For the image experiment, we use the training split of the STL-10 dataset, whose
images contain $96 \times 96$ pixels with 3 RGB channels. We retain three classes: Dog, Cat, and
Airplane, and use 500 images from each class. To vary the ambient dimension, we
construct three image resolutions \(s \in \{32,64,96\}\), corresponding to
\(D=3s^2\). For \(s=32\) and \(s=64\), the original images are downsampled using
bilinear interpolation implemented by \texttt{torch.nn.functional.interpolate}
in PyTorch. For \(s=96\), we use the original
STL-10 resolution. Finally, pixel intensities are rescaled from $[0,255]$ to
$[-1,1]$.

\paragraph{Calculation of posterior weights and KL divergence} We approximate the true intractable KL divergence between two noised clusters, using a Monte Carlo approximation based on the empirical target distribution. That is, for two distinct clusters $A, B \in \mathcal{C}_{\mathrm{PBMC}}$ or $\mathcal{C}_{\mathrm{image}}$ and $t \in [0, T)$,  
\begin{align*} 
\KL{\for{p}{T-t}^{[A]}}{\for{p}{T-t}^{[B]}}
& = \int \log \tfrac{\for{p}{T-t}^{[A]} (x)}{\for{p}{T-t}^{[B]} (x)} \cdot \for{p}{T-t}^{[A]} (x) dx  \approx \int \log \tfrac{\for{p}{T-t}^{[A], N_A} (x)}{\for{p}{T-t}^{[B], N_B} (x)} \cdot \for{p}{T-t}^{[A], N_A} (x) dx, 
\end{align*} 
where $N_A$ and $N_B$ denote the numbers of empirical samples in classes $A$ and $B$, respectively, and we define  
\begin{align} \label{eq:empirical_forward_density}
\for{{p}}{T-t}^{[i], N_i} (x) 
\equiv \tfrac{1}{N_i} \sum_{j = 1}^{N_i} \for{p}{T-t|0} (x | Z_j), \qquad 
\{ Z_j \}_{j} \overset{\mathrm{i.i.d.}}{\sim} p_{0}^{[i]}, \quad i \in \{A, B\},
\end{align} 
where $p_{0}^{[i]}$ is the population distribution for the class `$i$'. Then, the estimate of the KL is finally given as:
\begin{align*}
\KL{\for{p}{T-t}^{[A]}}{\for{p}{T-t}^{[B]}} 
\approx \tfrac{1}{N_A M} \sum_{l = 1}^{N_A}
\sum_{k = 1}^M 
\log \tfrac{\for{p}{T-t}^{[A], N_A} (X^{k, l})}{\for{p}{T-t}^{[B], N_B} (X^{k, l})} 
\end{align*} 
where $X^{k, l} = m_{T-t} Z_l + \sigma_{T-t} \xi_k, \, \{\xi_k\}_{k= 1, \ldots, M}  \overset{\mathrm{i.i.d.}}{\sim} N(0, I_D), \, 
\{ Z_j \}_{j = 1, \ldots, N_A} \overset{\mathrm{i.i.d.}}{\sim} p_{0}^{[A]}$ and $M$ is the number of MC samples for the white noise. Since the true posterior weight for general target distributions is intractable, we estimate it using the empirical forward density (\ref{eq:empirical_forward_density}) again as follows: for the set of classes $\mathcal{C}$, 
\begin{align*}
\widehat{W}^{[i], N}_D (t, x) = \frac{\for{{p}}{T-t}^{[i], N_i} (x) }{\sum_{c \in \mathcal{C}} \for{{p}}{T-t}^{[c], N_c} (x)}, \quad i \in \mathcal{C}, \quad (t, x)  \in [0, T) \times \mathbb{R}^D,  
\end{align*} 
where $N_c$ denotes the number of samples in the class $c \in \mathcal{C}$, and we assume equal priors across all the classes.  

\paragraph{Reverse SDE with Empirical Score} As mentioned in Section \ref{sec:numerical_experiments}, we generate reverse trajectories using the empirical score function in the drift. Following the notation above, we write the reverse SDE as:  
\begin{align*}
d \back{X}{t} = (\back{X}{t} + 2 \nabla \log \for{p}{T-t}^{N} (\back{X}{t} )) dt + \sqrt{2} d B_t, \qquad 
\back{X}{0} \sim N(0, I_D), 
\end{align*} 
where $N$ is the total number of empirical data, including all classes. In the above, the empirical score is given as 
\begin{align*}
\nabla \log \for{p}{T-t}^{N} (x) 
= \sum_{c \in \mathcal{C}} \widehat{W}_D^{[c], N} (t, x) \cdot  
\nabla \log \for{p}{T-t}^{[c], N_c} (x),  \qquad x \in \mathbb{R}^D, 
\end{align*} 
where $\nabla \log \for{p}{T-t}^{[c], N_c} (x)$ is the cluster-wise empirical score with the number of samples $N_c$ written as 
\begin{align*}
\nabla \log \for{p}{T-t}^{[c], N_c} (x) 
= - \tfrac{1}{\sigma_{T-t}^2} \bigl( x - m_{T-t} {\mathbb{E}}_{N_c} [\for{X}{0}^{[c]} | \for{X}{T-t}^{[c]} = x] \bigr), 
\end{align*} 
with 
\begin{align*}
{\mathbb{E}}_{N_c} 
[\for{X}{0}^{[c]} | \for{X}{T-t}^{[c]} = x] 
= \tfrac{1}{N_c} \sum_{j = 1}^{N_c} Z_j \cdot \tfrac{\for{p}{T-t |0} (x | Z_j)}{\for{p}{T-t}^{[c], N_c} (x)}, \quad \{Z_j\}_{j = 1, \ldots, N_c} \overset{\mathrm{i.i.d.}}{\sim} p_0^{[c]}. 
\end{align*}  
%
%
%
\subsection{Additional Information}
We present the empirical pairwise geometry for the PBMC dataset. We observe that the mean separation increases with the ambient dimension $D$ approximately linearly, while the maximum mean difference projection $M_\mu$ is relatively stable across varying $D$.  
\begin{table}[h]
\centering
\footnotesize
\caption{
Pairwise empirical geometry of the PBMC clusters under variance-ordered feature
truncation. The labels are the same as in Table \ref{tab:stl10_pairwise_geometry}. 
}
\label{tab:pbmc_pairwise_geometry}
\renewcommand{\arraystretch}{1.12}
\begin{tabular}{@{} llrrr @{}}
\toprule
Pair & \(D\)
& \(\|\widehat\mu_i-\widehat\mu_j\|^2\)
& \(M_\mu\)
& \(\|\widehat\Sigma_i-\widehat\Sigma_j\|_F^2\) \\
\midrule

\multirow{5}{*}{CD4 T--CD14+ Monocyte}
& \(500\)  & \(31.55\)  & \(1.48\) & \(1.34{\times}10^3\) \\
& \(1000\) & \(91.95\)  & \(2.67\) & \(4.16{\times}10^3\) \\
& \(2000\) & \(240.40\) & \(4.37\) & \(1.55{\times}10^4\) \\
& \(4000\) & \(352.70\) & \(4.62\) & \(5.35{\times}10^4\) \\
& \(8000\) & \(509.33\) & \(5.68\) & \(1.90{\times}10^5\) \\
\addlinespace

\multirow{5}{*}{CD14+ Monocyte--CD19+ B}
& \(500\)  & \(42.77\)  & \(1.24\) & \(2.40{\times}10^3\) \\
& \(1000\) & \(111.03\) & \(2.39\) & \(7.31{\times}10^3\) \\
& \(2000\) & \(296.89\) & \(3.41\) & \(2.94{\times}10^4\) \\
& \(4000\) & \(440.36\) & \(4.01\) & \(1.16{\times}10^5\) \\
& \(8000\) & \(614.00\) & \(5.29\) & \(4.04{\times}10^5\) \\
\addlinespace

\multirow{5}{*}{CD4 T--CD19+ B}
& \(500\)  & \(20.84\)  & \(1.96\) & \(1.70{\times}10^3\) \\
& \(1000\) & \(53.11\)  & \(1.49\) & \(5.21{\times}10^3\) \\
& \(2000\) & \(133.63\) & \(1.73\) & \(2.09{\times}10^4\) \\
& \(4000\) & \(205.03\) & \(1.60\) & \(8.85{\times}10^4\) \\
& \(8000\) & \(271.85\) & \(1.99\) & \(3.19{\times}10^5\) \\
\bottomrule
\end{tabular}
\end{table}

\end{document}